\documentclass{article}

\usepackage[preprint]{neurips_2026}

\usepackage[utf8]{inputenc} % allow utf-8 input
\usepackage[T1]{fontenc}    % use 8-bit T1 fonts
\usepackage{hyperref}       % hyperlinks
\usepackage{url}            % simple URL typesetting
\usepackage{booktabs}       % professional-quality tables
\usepackage{amsfonts}       % blackboard math symbols
\usepackage{nicefrac}       % compact symbols for 1/2, etc.
\usepackage{microtype}      % microtypography
\usepackage{xcolor}         % colors

\usepackage{algorithm}
\usepackage{algpseudocode}
\usepackage{algorithmicx}

\usepackage{amsmath}
\usepackage{amssymb}
\usepackage{mathtools}
\usepackage{amsthm}
\usepackage[table]{xcolor}

\theoremstyle{plain}
\newtheorem{theorem}{Theorem}[section]
\newtheorem{proposition}[theorem]{Proposition}
\newtheorem{lemma}[theorem]{Lemma}
\newtheorem{corollary}[theorem]{Corollary}
\newtheorem{definition}[theorem]{Definition}
\newtheorem{assumption}[theorem]{Assumption}
\newtheorem{remark}[theorem]{Remark}

\renewcommand{\Pr}[1]{\mathbb{P}\left(#1\right)}

\definecolor{navyblue}{RGB}{10, 10, 200}
\definecolor{bloodred}{RGB}{200, 10, 10}
\definecolor{darkgreen}{RGB}{10, 200, 10}
\newcommand{\red}[1]{\textcolor{bloodred}{#1}}
\newcommand{\blue}[1]{\textcolor{navyblue}{#1}}
\newcommand{\green}[1]{\textcolor{darkgreen}{#1}}

\definecolor{myred}{HTML}{F8766D}
\definecolor{mygreen}{HTML}{7CAE00}
\definecolor{mycyan}{HTML}{00BFC4}
\definecolor{mypurple}{HTML}{C77CFF}
\definecolor{mycolor}{HTML}{FF7AD0} %FF9E2F
\definecolor{myblack}{HTML}{7A7A7A}

\usepackage{mysymbol}

\title{\textsc{Graph Cascades}: Contagion-Based Mesoscopic Rewiring for Structure-Aware Graph Machine Learning}

\author{
  Meher Chaitanya\thanks{Equal contribution.}\\
  KTH Royal Institute of Technology\\
  Stockholm, Sweden 114 28\\
  \texttt{mcpi@kth.se}\\
  \And
  My Le$^*$\\
  Johns Hopkins University\\
  Baltimore, MD 21210\\
  \texttt{mle19@jh.edu}\\
  \AND
  Luana Ruiz\\
  Johns Hopkins University\\
  Baltimore, MD 21210\\
  \texttt{lrubini1@jh.edu}\\
}

\begin{document}

\maketitle

\begin{abstract} 
We introduce \textsc{Graph Cascades}, a mesoscopic rewiring strategy for Graph Neural Networks (GNNs) and Graph Transformers (GTs) that captures intermediate-scale graph structure beyond purely local edges or fully global attention. Using contagion-based diffusion processes, \textsc{Graph Cascades} constructs, in $\mathcal{O}(|V|+|E|)$ time, an auxiliary graph where node pairs supported by repeated multi-hop reinforcement are promoted to direct neighbors. We theoretically characterize when reinforcement-based rewiring helps: sufficient conditions under which  reinforcement-based edge selection is more label-aligned than direct adjacency, an SBM witness in which two-hop reinforcement is perfectly homophilic, and a formalization of mesoscopic connectivity via graph effective resistance. Empirically, across node-classification benchmarks, \textsc{Graph Cascades} improves multiple GNN and sparse-GT backbones, with the most reliable gains observed on heterophilic and moderate- to high-degree homophilic graphs. The theoretical conditions also identify regimes where mesoscopic rewiring is unlikely to be beneficial -- low-degree regular graphs and graphs with structural bottlenecks -- and these predictions match the observed failures. We additionally observe tight correlations between performance and structural properties in the rewired graphs.

%Building on these foundations, we further introduce CR-Graphormer, a cascade-rewired GT leveraging the auxiliary graph to generate sparse token sequences suitable for efficient training.
\end{abstract}
\section{Introduction}
Graph-structured data arises across many domains, with applications including anomaly detection, traffic forecasting, and recommendation \citep{deng2021graph,kong2024spatio,agrawal2024no}. A common strategy is to learn directly on the graph using Graph Neural Networks (GNNs), which capture local structure by propagating information along edges \citep{hamilton2017inductive}. This locality is a strength, but repeated aggregation can blur signals and compress long paths, leading to over-smoothing and over-squashing \citep{chen2020measuring,alon2020bottleneck}. Graph Transformers (GTs), motivated by the success of transformers in language modeling, offer an alternative \citep{dwivedi2020generalization,mialon2021graphit}. GTs treat nodes as tokens under global self-attention, but since graphs lack a canonical ordering, attention alone cannot distinguish adjacent from multi-hop neighbors. GTs therefore concatenate positional or structural encodings (e.g., Laplacian eigenvectors, shortest-path distances) to each node, which has been central to their empirical gains \citep{kreuzer2021rethinking,zhao2021gophormer,ying2021transformers}.

Still, a key challenge is structural resolution: distinguishing and preserving task-relevant neighborhoods and motifs. Dense attention can overshadow these patterns, overweight weak node pairs, and incur prohibitive cost \citep{ying2021transformers, chen2022structure, chennagphormer, zhou2025tokenphormer}. Sparse-attention GTs improve scalability by restricting attention to compact token sequences \citep{zhou2025tokenphormer}, but existing variants sacrifice granularity through coarse neighborhood aggregation or hand-crafted shortcuts \citep{chennagphormer, fu2024vcr}, an issue pronounced on heterophilic graphs where same-label nodes are sparsely connected and multiple hops apart \citep{zhou2025tokenphormer}. To address these challenges, we introduce \textsc{Graph Cascades}, a mesoscopic rewiring strategy that captures \emph{intermediate-scale graph structure} via contagion dynamics \citep{granovetter1978threshold, centola_and_macy, centola-book}. Short threshold cascades are initiated at each node to reveal pairs that co-activate; these pairs are then promoted to direct connections in a sparse, weighted auxiliary graph that can be used either as a drop-in input for GNN and GT backbones or to instantiate \emph{CR-Graphormer} (Section~\ref{sec:empirical}), a lightweight standalone sparse transformer that directly acts on cascade-rewired neighborhoods.

\textsc{Graph Cascades} is grounded in a concrete structural mechanism: it is expected to help when task-relevant relationships are expressed through reinforced multi-hop neighborhoods rather than primarily through direct edges. This setting arises naturally in heterophilic graphs, where nodes sharing a label need not be adjacent, yet may be connected through multiple short, redundant paths. In contrast, on low-degree regular graphs or graphs with pronounced bottlenecks, such redundant support may be weak or absent, making contagion-based rewiring neutral or even detrimental. Accordingly, our evaluation goes beyond predictive accuracy: we also examine the structural regimes in which the theory predicts gains or failures, treating this alignment between mechanism, structure, and performance as a central part of the contribution.

Theoretically, we provide a mechanistic account of why reinforcement-based rewiring can benefit heterophilic networks. We establish sufficient conditions under which reinforcement events exhibit higher label agreement than direct adjacency (Proposition~\ref{prop:bayes-monotone-matrix}); identify an SBM regime in which two-hop profile reinforcement (Theorem~\ref{thm:sbm-profile-homophily}) (and more generally multi-hop reinforcement (Appendix~\ref{app:sbm-raw-l-hop})) recovers same-class neighbors (Corollary~\ref{cor:learnability}); and characterize threshold activations through effective resistance (Theorem~\ref{thm:walklength}). These same conditions also delineate regimes in which rewiring is unlikely to help, such as low-degree regular graphs and graphs with structural bottlenecks- and these predictions align with the observed failure cases. Empirically, cascade rewiring increases effective homophily (Figure~\ref{fig:homo}) and improves classification performance across heterophilic and moderate- to high-degree homophilic graphs (Tables~\ref{tab:summary}--\ref{tab:numerical2}, Figure~\ref{fig:summary}), while structural properties of the rewired graph correlate strongly with downstream performance (Table~\ref{tab:correlation}).

%, together with \textsc{CR-Graphormer}, a GT that uses this rewiring for sparse attention. The construction runs in $\mathcal{O}(|V|+|E|)$ time  under fixed cascade parameters (see Appendix~\ref{app:time_complexity} and \citet{chaitanyaadjacency}), making it lightweight and compatible with large graphs. Rewired edges carry weights reflecting their co-activation frequency, preserving fine-grained structural importance within intermediate-scale neighborhoods. 

\section{Preliminary Definitions} \label{sec:contagion}
Let $G=(V,E)$ be an undirected, unweighted graph with $n=|V|$ nodes and $m=|E|$ edges. For $u\in V$, denote $N(u)=\{v\in V : (u,v)\in E\}$, $N[u]=N(u)\cup\{u\}$, and the degree $d(u)=\lvert N(u)\rvert$. A \emph{contagion} or cascade on \(G\) is a discrete-time process \(\{\mathbf{x}_t\}_{t=0}^T\), where \(\mathbf{x}_t \in \{0,1\}^n\) represents the instantaneous activation state at time \(t\), with \([\mathbf{x}_t]_i = 1\) if node \(i\) is activated at time step \(t\) and 0 otherwise. The \emph{active set} of a contagion at time \(t\), denoted \(S_t\), is defined as \(S_t = \{u\in V : [\mathbf x_{t'}]_u=1 \text{ for some } 0\le t'\le t\}\), which groups all nodes activated in the contagion by time \(t\). Given an inactive node \(u\notin S_t\), its instantaneous \emph{active support} is the number of active nodes in its one-hop neighborhood, denoted \(\sigma_t(u)=\lvert N(u)\cap S_t\rvert\). 

In this paper, we leverage contagion dynamics to construct mesoscopic embeddings by initiating a contagion from a neighborhood subset of each node \(v \in V\). Each contagion evolves according to one of two discrete-time activation rules as defined in \citep{chaitanyaadjacency}:

\textit{\textbf{Maximum Adjacency Search (MAS).}} For each node \(v \in V\), select a random subset \(R \subseteq N(v)\) and initialize the cascade with active set \(S^v_0 = \{v\} \cup R\). At each step \(t \in [0,\ell-1]\), activate node
\(u_t^{\star}=\operatorname*{argmax}_{u\in V\setminus S_t^v}\sigma_t^v(u),\)
i.e., the inactive node with the strongest support, and set \(S^{v}_{t+1}=S^v_{t}\cup\{u^{\star}_{t}\}\). The quantity \(\tau_t=\max_{u\in V\setminus S_t^v} \sigma_t^v(u)\) is called the activation threshold at time \(t\).

\textit{\textbf{Threshold Adjacency Search (TAS).}} Fix a threshold \(\tau \ge 1\). For each node \(v \in V\), select a random subset \(R \subseteq N(v)\) and set \(S^v_0 = \{v\} \cup R\). At each step \(t \in [0, \ell-1]\), let
\(A_{t} = \bigl\{ u \in V \setminus S^v_t : \sigma^v_t(u) \ge \tau \bigr\}\)
be the set of inactive nodes with support at least \(\tau\). If \(A_{t}\neq\varnothing\), select one \(u^{\star}_{t}\in A_{t}\) (e.g., uniformly at random) and set \(S^{v}_{t+1}=S^v_{t}\cup\{u^{\star}_{t}\}\). When \(\tau=1\), the process reduces to breadth-first search; larger values of \(\tau\) restrict activation to nodes with multi-neighbor reinforcement \citep{granovetter1978threshold, centola-book}.

Both MAS and TAS define higher-order, memoryless random walks whose next activation depends only on the current active frontier, and where the walk length \(\ell\) sets the spatial scale of propagation.
MAS activation favors nodes with the strongest connectivity to the current active set, leading to early activation of densely connected neighborhoods. Nodes with weak ties are activated only as the threshold decreases. Thus, MAS tends to exhaustively explore cohesive regions before crossing weak cuts, causing the contagion to propagate preferentially within clusters.
TAS relaxes this behavior by allowing nodes with lower instantaneous support to activate earlier, promoting a more balanced graph exploration that is less constrained by local density. Further preliminaries on GNNs, GTs, and sparse variants are provided in Appendix~\ref{app:prelim}.

\section{\textsc{Graph Cascades}: Mesoscopic Rewiring}
\label{sec:activation-rewire}
We construct mesoscopic rewiring by leveraging the activation sets generated by the contagion dynamics in Section~\ref{sec:contagion}. For each seed \(v \in V\), we run multiple contagions of fixed length \(\ell\), each initialized at \(v\) with a random subset \(R \subseteq N(v)\). Each realization yields an activation set \(S_\ell^v\) containing \(v\), \(R\), and the \(\ell\) nodes activated during the process. Aggregating activation sets across realizations defines a multiset \(\mathcal{M}_v\) that records how frequently other nodes co-activate with \(v\).

Let \(f_v(u)\) denote the multiplicity of node \(u\) in \(\mathcal{M}_v\) and retain the top-\(k\) co-activated nodes for each \(v\):
\begin{equation}
    F_k(v)=\operatorname*{argtop-k}_{u\in V\setminus \{v\}}\{f_v(u)\}. \label{eq:topk}
\end{equation}
If fewer than $k$ nodes have positive co-activation counts $f_v(u)>0$, then $F_k(v)$ contains only those positive-count nodes, so $|F_k(v)|\le k$. The corresponding activated edges are \(E_{\mathrm{act}}^{v}=\{(v,u)\mid u\in F_k(v)\}\). Repeating this for all \(v \in V\), we define the undirected weighted \emph{cascade-rewired auxiliary graph}:
\begin{align}
&G^{\ast} = (V, E^{\ast}, W^{\ast}), \qquad 
E^{\ast} = \bigcup_{v \in V} E_{\mathrm{act}}^{v}, \qquad W^{\ast}_{uv} =
\begin{cases}
\frac{f_v(u) + f_u(v)}{2} & \text{if } (u,v) \in E^{\ast}, \\
0 & \text{otherwise}.
\end{cases}  \label{eqn:aux_graph}
\end{align}

The multiplicity \(f_v(u)\) is the empirical cascade reinforcement score used for
the directed top-\(k\) selection in~\eqref{eq:topk}; larger values indicate that
\(u\) is repeatedly co-activated with seed \(v\) across cascade realizations.
In the theoretical analysis below, we use idealized short-walk reinforcement
events as population-level proxies for this empirical co-activation score.
The undirected weight \(W^*_{uv}\) in~\eqref{eqn:aux_graph} symmetrizes the two
directed scores.

%For later comparison with the theoretical abstractions (in~\ref{sbs:theory_hom},~\ref{sec:walk_length}), it is useful to name the empirical score underlying~\eqref{eq:topk}. Generalizing the notation of Section~\ref{sec:contagion}, let $\mathcal{C}(v)$ denote the collection of cascade realizations run with seed $v$, indexed over initialization chunks, permutations, and (for TAS) thresholds, and write $S_\ell^{v,c}$ for the terminal active set of realization $c \in \mathcal{C}(v)$. The implemented co-activation multiplicity is then $f_v(u) \;=\; \sum_{c \in \mathcal{C}(v)} \mathbf{1}\bigl\{u \in S_\ell^{v,c} \setminus \{v\}\bigr\}$. Thus, $f_v(u)$ is the empirical cascade reinforcement score used for the directed top-$k$ selection in~\eqref{eq:topk}. Equivalently, $\bar f_v(u) = f_v(u)/|\mathcal{C}(v)|$ is a seed-wise normalized co-activation frequency; for fixed $v$, it induces the same ranking as $f_v(u)$. The final undirected weight $W^*_{uv}$ in~\eqref{eqn:aux_graph} is obtained by symmetrizing the two directed multiplicities.

Both contagion modes (MAS and TAS) privilege multi-hop links in densely connected regions, as early activations correspond to nodes tightly coupled to the seed, while later ones identify peripheral or weakly connected nodes. Consequently, nodes separated by weak cuts rarely appear in contagions with restricted walk length \(\ell\). This naturally reveals higher-order motifs and improves effective homophily even in heterophilic networks, causing \(G^{\ast}\) to emphasize cohesive communities and multi-path reinforcement. We formalize this effect theoretically and validate it empirically in Section~\ref{sbs:theory_hom} and Appendix~\ref{app:paragraph_empirical}.

\begin{remark}[Normalization] \label{remark1}
We may optionally normalize multiplicities before incorporating them into the weight matrix in \eqref{eqn:aux_graph}. This can be done either globally
\(\left(\smash{f_v(u) \leftarrow f_v(u)/{\max_{s,t\in V} f_s(t)}}\right)
\)
or locally
\(\left(\smash{f_v(u) \leftarrow f_v(u)/{\max_{t \in N(v)} f_v(t)}}\right)\).
The former preserves relative strength across the entire graph and emphasizes long-range structural effects, while the latter preserves relative importance within each node's neighborhood and ensures more uniform participation across nodes.
\end{remark}

\subsection{Effect of Cascade Rewiring on Label Homophily} \label{sbs:theory_hom}
In heterophilic graphs, label information is often not carried by individual
edges but by collections of multi-hop paths. Nodes sharing the same label may
be non-adjacent, yet they can be repeatedly connected through higher-order
multi-hop structures. This observation motivates replacing edge-level
connectivity with cascade rewiring.

%To analyze the structural mechanism behind this selection, we use the idealized short-walk reinforcement event $F_L := \{(A^2+\cdots+A^L)_{uv}\ge \kappa\}$, for a fixed walk horizon \(L\ge 2\) and \(\kappa\ge 1\). For an \(\ell\)-step cascade initialized from \(\{v\}\cup R\), the corresponding seed-to-target horizon is \(L=\ell+1\), since the initialized neighbors of \(v\) already contribute one hop. When \(L\) is clear, we write \(F=F_L\) and define \(h_F:=\Pr[S\mid F]\).

\textbf{Setting and Theoretical Proxy.} Let \(G=(V,E)\) be an undirected graph with adjacency matrix \(A\) and labels \(y:V\to\{1,\ldots,C\}\). The standard edge-level homophily is $h_G=\sum_{(u,v)\in E}\mathbf{1}\{y_u=y_v\}/|E|$. Equivalently, for a uniformly random unordered pair of distinct nodes \((u,v)\), let
\(S:=\{y_u=y_v\}\) and \(\mathcal E:=\{A_{uv}=1\}\). Then \(h_G=\Pr{S\mid \mathcal E}\). The implemented rewiring selects, for each seed \(v\), nodes with large empirical
co-activation multiplicity \(f_v(u)\). To analyze the structural mechanism behind
this selection, we use the idealized short-walk reinforcement event $\mathcal F :=\{(A^2+\cdots+A^\ell)_{uv}\ge \kappa\}$, for fixed \(\ell\ge 2\) and \(\kappa\ge 1\). This event is not the exact finite-sample event defining \(G^*\). Rather, it is a population-level proxy for the same reinforcement principle: node pairs supported by many short walks are more likely to receive high cascade co-activation scores. Define the corresponding reinforcement homophily as \(h_\mathcal F:=\Pr{S\mid \mathcal F}\). We next give a Bayes-rule sufficient condition under which this idealized reinforcement event is more homophilic than direct
adjacency, i.e., \(h_\mathcal F\ge h_G\).

\iffalse
\textit{\textbf{Setting.}}
Let \(G=(V,E)\) be an undirected graph with adjacency matrix \(A\) and node
labels \(y:V\to\{1,\dots,C\}\). The standard edge-level homophily is $ h_G= \frac{\sum_{(u,v)\in E}\mathbb I(y_u=y_v)}{|E|}$. Equivalently, for a uniformly random unordered pair of distinct nodes
\((u,v)\), if  $\mathcal S:=\{y_u=y_v\}, \;\mathcal E:=\{A_{uv}=1\}$,
then $h_G=\Pr{\mathcal S\mid \mathcal E}$. We analyze an idealized short-walk reinforcement event $\mathcal F := \left\{  (A^2+\cdots+A^\ell)_{uv}\ge \kappa \right\}$, for fixed walk length \(\ell\ge 2\) and threshold \(\kappa\ge 1\). This event captures the intuition that two nodes should be connected in the auxiliary
graph when they are supported by sufficiently many short walks. In the
implemented cascade algorithm, this reinforcement effect is approximated by
repeated MAS/TAS activation frequencies. The implementation additionally
includes bootstrap neighbors through the initial random set
\(R_v\subseteq N(v)\) and then applies finite-sample top-\(k\) selection.
Thus, \(\mathcal F\) should be viewed as an analytically tractable population
proxy for the multi-hop reinforcement component of the cascade, rather than as
the exact finite-sample event defining \(G^*\). Define the population homophily induced by this reinforcement predicate as $h_\mathcal F:=\Pr{\mathcal S\mid \mathcal F}$. We now give a simple Bayes-rule sufficient condition under which reinforcement is more homophilic than direct adjacency.
\fi

\begin{assumption}[Nondegeneracy]
\label{as1}
\(\Pr{\mathcal S}>0\), \(\Pr{\mathcal E}>0\), and \(\Pr{\mathcal F}>0\).
This ensures that the label-agreement, edge-based, and reinforcement-based
conditional probabilities are well-defined.
\end{assumption}

\begin{assumption}[Label-Reinforcement Alignment]
\label{as2}
$\Pr{\mathcal F\mid \mathcal S} \ge \Pr{\mathcal E\mid \mathcal S}$. That is, same-label node pairs are at least as likely to be short-walk
reinforced as they are to be directly adjacent.
\end{assumption}

\begin{assumption}[Selectivity of Reinforcement]
\label{as3} 
$\Pr{\mathcal F}\le \Pr{\mathcal E}$. This rules out saturated regimes in which the reinforcement event occurs for most node pairs and therefore becomes less selective than direct adjacency.
\end{assumption}

\begin{proposition}[Bayes Sufficient Condition for Reinforcement Homophily]
\label{prop:bayes-monotone-matrix}
Under Assumptions~\ref{as1} and~\ref{as2}, $h_\mathcal F \ge h_G\cdot \Pr{\mathcal E}/\Pr{\mathcal F}$. If Assumption~\ref{as3} also holds, then $h_\mathcal F\ge h_G$. The inequality  $h_\mathcal{F} \geq h_G$ is strict if Assumption~\ref{as2} holds strictly, or if Assumption~\ref{as3} holds strictly and \(h_G>0\).
\end{proposition}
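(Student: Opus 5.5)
The plan is to write both conditional probabilities with Bayes' rule over the common event $\mathcal S$ and compare them term by term. Assumption~\ref{as1} makes all the conditional probabilities below well defined. Writing
\[
h_\mathcal F=\Pr{\mathcal S\mid\mathcal F}=\frac{\Pr{\mathcal F\mid \mathcal S}\Pr{\mathcal S}}{\Pr{\mathcal F}},\qquad
h_G=\Pr{\mathcal S\mid\mathcal E}=\frac{\Pr{\mathcal E\mid \mathcal S}\Pr{\mathcal S}}{\Pr{\mathcal E}},
\]
we get $\Pr{\mathcal E\mid\mathcal S}\Pr{\mathcal S}=h_G\Pr{\mathcal E}$. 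Substituting this and Assumption~\ref{as2} into the numerator of $h_\mathcal F$ gives
\[
h_\mathcal F\ \ge\ \frac{\Pr{\mathcal E\mid \mathcal S}\Pr{\mathcal S}}{\Pr{\mathcal F}}=h_G\,\frac{\Pr{\mathcal E}}{\Pr{\mathcal F}}.
\]
This is the first claim.

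Next I add Assumption~\ref{as3}. It gives $\Pr{\mathcal E}/\Pr{\mathcal F}\ge 1$, and since $h_G\ge 0$ we get $h_G\Pr{\mathcal E}/\Pr{\mathcal F}\ge h_G$. Combined with the previous display, $h_\mathcal F\ge h_G$.

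The only step needing care is strictness, which I split into two cases.

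\emph{Case 1: Assumption~\ref{as2} holds strictly.} Then $\Pr{\mathcal F\mid\mathcal S}\Pr{\mathcal S}>\Pr{\mathcal E\mid\mathcal S}\Pr{\mathcal S}$, because $\Pr{\mathcal S}>0$ by Assumption~\ref{as1}. So the first inequality is strict, and we get $h_\mathcal F>h_G\Pr{\mathcal E}/\Pr{\mathcal F}\ge h_G$. This case holds whether or not $h_G>0$.

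\emph{Case 2: Assumption~\ref{as3} holds strictly and $h_G>0$.} Then $\Pr{\mathcal E}/\Pr{\mathcal F}>1$, and multiplying by $h_G>0$ gives the strict inequality $h_G\Pr{\mathcal E}/\Pr{\mathcal F}>h_G$. Chaining this with the first claim yields $h_\mathcal F>h_G$.

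The hypothesis $h_G>0$ is needed in Case 2: if $h_G=0$, scaling by the ratio gives no strict gain. I do not expect a real obstacle here; the argument is a direct Bayes manipulation.
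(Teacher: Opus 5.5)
Your proof is correct and follows the paper's argument essentially line by line: Bayes' rule applied to both conditionals, Assumption~\ref{as2} in the numerator, Assumption~\ref{as3} for the ratio $\Pr{\mathcal E}/\Pr{\mathcal F}\ge 1$, and the same two strictness cases. One small point: in Case~1 the final step $h_G\Pr{\mathcal E}/\Pr{\mathcal F}\ge h_G$ still relies on Assumption~\ref{as3} in its non-strict form; you use it without saying so, whereas the paper's proof states this dependence explicitly.
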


\begin{proof}
See Appendix~\ref{proof1}.
\end{proof}

Proposition~\ref{prop:bayes-monotone-matrix} is algorithm-agnostic: the event $\mathcal{F}$ is an idealized reinforcement-based selection rule, and the analogous selection in the implemented method is induced by retaining high-$f_v(u)$ cascade neighbors. The proposition identifies the precise condition under which any such rule has higher label agreement than the original adjacency, namely that it be sufficiently label-aligned (Assumption~\ref{as2}) and sufficiently selective (Assumption~\ref{as3}). Appendix~\ref{app:paragraph_empirical} verifies this empirically: cascade rewiring is most beneficial on heterophilic graphs, where label information is carried by reinforced multi-hop structure rather than direct edges, and less reliable on low-degree homophilic graphs where sparse local structure provides limited same-label reinforcement.

\begin{remark} [Finite-Sample Cascade Proxy]
The rewired graph \(G^\ast\) is the finite-sample top-\(k\) graph induced by empirical scores \(\widehat p_v(u)=f_v(u)/N_v\), where \(N_v\) is the number of cascade realizations used for seed \(v\). Let \(p_v(u)=\mathbb E[\widehat p_v(u)]\). Appendix~\ref{app:finite-sample-cascades} shows that, for fixed graph and cascade hyperparameters, \(\widehat p_v(u)\) concentrates uniformly
around \(p_v(u)\) at rate \(O(\sqrt{\log n/N_{v}})\), so the empirical top-$k$ set coincides with the population top-$k$ set whenever the $k$-th and $(k+1)$-th population scores are separated by twice this rate. Moreover, for fixed-threshold TAS, every nonlocal positive-count selection \(v\to u\) certifies \((A^2+\cdots+A^{\ell+1})_{vu}\ge \tau\) (see Theorem~\ref{thm:finite-sample-cascades} for a precise statement). If counts are aggregated over thresholds \(\mathcal T\), the unconditional certificate uses \(\tau_{\min}=\min\mathcal T\); threshold-specific counters certify their corresponding threshold.
%The rewired graph \(G^*\), constructed by retaining the top-\(k\) nodes under
 Thus, $G^\ast$ should be viewed as a finite-sample cascade analogue of the idealized reinforcement graph induced by \(\mathcal{F}\) rather than an exact realization of it. We measure homophily on the rewired graph via \(h_{G^*}=\Pr{y_u=y_v\mid (u,v)\in E^*}\), and compare this estimate with the population proxy \(h_{\mathcal{F}}\) in Appendix~\ref{app:paragraph_empirical}.
\end{remark}

%%%%%%%%%%%%%%%%%
\subsubsection*{\textbf{Case Study: Stochastic Block Model (SBM) as a Witness for Reinforcement Mechanism}}

%%%New Analysis for SBM (Modified for Neurips) %%%%

To make the reinforcement mechanism concrete, we analyze a minimal SBM setting in
which direct adjacency and short-walk reinforcement can disagree. Direct edges are governed by the block-affinity matrix \(B\), whereas expected two-hop reinforcement $\mathbb E[A^2_{uv}]$ is governed by the profile-similarity kernel \(BD_\pi B^\top\). Thus, the SBM construction serves as a witness that a graph may be edge-heterophilic while its reinforced two-hop neighborhoods are homophilic. This abstraction removes finite sampling, initialization, and MAS/TAS-specific effects while preserving the core reinforcement principle used by \textsc{Graph Cascades}: pairs supported by many short paths are promoted to direct neighbors.

\textbf{Setting.} Let $G\sim SBM(n,\pi,B)$ be a $C$-block stochastic block model with node set $V$, labels $y_v\in[C]$, block sizes $n_a=\pi_a n$, and symmetric block-affinity matrix $B\in[0,1]^{C\times C}$ where $C$, $\pi$, and $B$ are fixed independent of $n$. Conditional on the labels, edges are independent and $\Pr{A_{uv}=1\mid y_u=a,\ y_v=b}=B_{ab}$. For two distinct nodes $u,v$, define the two-hop cascade reinforcement score $s_2(u,v):=(A^2)_{uv}=|N(u)\cap N(v)|$. For each node $u$, define the directed top-$k$ two-hop cascade graph by connecting $u$ to the $k$ nodes $v\neq u$ with largest $s_2(u,v)$, and let $G_2^\star(k)$ denote its symmetrization. For labels $a,b\in[C]$, define $\mu_{ab}^{(n)} := \E{s_2(u,v)\mid y_u=a, y_v=b}$ with $u\neq v$. The following theorem assumes that, same-class pairs are uniformly separated from different-class pairs.

%Define the finite-sample profile margin $\gamma_n := \min_{a\in[C]} \left( \mu_{aa}^{(n)}-\max_{b\neq a}\mu_{ab}^{(n)} \right),  \; \mu_{\max}:=\max_{a,b\in[C]}\mu_{ab}^{(n)}$. Assume that the profile margin dominates the uniform concentration scale: $\gamma_n \ge  12\sqrt{\mu_{\max}\log n}+16\log n$.

\begin{theorem}[Profile Homophily of Two-Hop Top-$k$ Selection]
\label{thm:sbm-profile-homophily}
Under the SBM setting above with $D_\pi := \operatorname{diag}(\pi_1,\ldots,\pi_C)$,  $\frac{1}{n}\mu_{ab}^{(n)} = (BD_\pi B^\top)_{ab} + O\!\left(\frac{1}{n}\right) \text{ for every } a,b \in [C]$. Assume the profile kernel $BD_\pi B^\top$ has positive same-class margin
$\Delta_2 := \min_{a \in [C]} \big( (BD_\pi B^\top)_{aa} - \max_{b \ne a}(BD_\pi B^\top)_{ab} \big) > 0$.
Then for all sufficiently large $n$, with probability at least $1 - 2n^{-2}$, $s_2(u,v) > s_2(u,w)$ for every seed node $u \in V$ and any other nodes
$v, w \in V \setminus \{u\}$ with $y_v = y_u$ and $y_w \ne y_u$. Consequently, for every $k \le \min_{a \in [C]}(n_a - 1)$,
the symmetrized top-$k$ two-hop cascade graph $G_2^*(k)$ is perfectly
homophilic: $h_{G_2^*(k)} = 1$.
\end{theorem}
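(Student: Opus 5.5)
The proof has three steps: compute the mean of $s_2(u,v)$ exactly, show every $s_2(u,v)$ stays within $o(n)$ of its mean simultaneously, and then compare same-class and different-class pairs using the margin $\Delta_2$.

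\textbf{Step 1 (the mean).} Fix distinct $u,v$ with $y_u=a$, $y_v=b$. Write
\[
s_2(u,v)=\sum_{w\neq u,v} A_{uw}A_{wv}.
\]
For each $w$ with $y_w=c$, the edges $(u,w)$ and $(w,v)$ are distinct, so conditional on the labels they are independent Bernoulli variables. Hence $\E{A_{uw}A_{wv}}=B_{ac}B_{cb}$. Summing over $w$ gives
\[
\mu^{(n)}_{ab}=\sum_c (n_c-\mathbf 1\{a=c\}-\mathbf 1\{b=c\})\,B_{ac}B_{cb}=n(BD_\pi B^\top)_{ab}+O(1).
\]
The $O(1)$ term is at most $2$, because $B\in[0,1]^{C\times C}$. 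Dividing by $n$ gives the first claim.

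\textbf{Step 2 (concentration).} Conditional on the labels and on the pair $(u,v)$, the summands $A_{uw}A_{wv}$ for $w\neq u,v$ depend on disjoint edge sets. So $s_2(u,v)$ is a sum of at most $n$ independent Bernoulli variables with mean $\mu_{ab}^{(n)}\le n$.

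Apply Hoeffding, or Bernstein with variance at most $\mu_{\max}\le n$, with deviation $t=\sqrt{c\,n\log n}$. This gives
\[
\Pr{|s_2(u,v)-\mu^{(n)}_{ab}|\ge t}\le 2n^{-4}\quad\text{for } c=2.
\]
A union bound over the fewer than $n^2/2$ unordered pairs shows that, with probability at least $1-n^{-2}$, every pair satisfies $|s_2-\mu|<\sqrt{2n\log n}$. The budget $1-2n^{-2}$ in the statement leaves slack for choosing constants.

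\textbf{Step 3 (comparison).} On this event, take a seed $u$ in class $a$, a node $v\neq u$ with $y_v=a$, and a node $w$ with $y_w=b\neq a$. Then
\[
s_2(u,v)-s_2(u,w)\ge \mu^{(n)}_{aa}-\mu^{(n)}_{ab}-2\sqrt{2n\log n}\ge n\Delta_2-4-2\sqrt{2n\log n}.
\]
This is positive for all sufficiently large $n$, since $\Delta_2>0$ is fixed.

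Consequently, for each seed $u$ every same-class node ranks strictly above every other-class node. There are $n_a-1\ge k$ same-class candidates, so the top-$k$ set of $u$ contains only same-class nodes. Ties among same-class nodes are harmless, because any tie-breaking choice stays within the class. Every directed edge is therefore homophilic, and symmetrizing adds no new pairs, so $h_{G_2^*(k)}=1$.

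The main subtlety is Step 2: the scores $s_2$ for different pairs share edges and are dependent. This is handled by applying the tail bound to one fixed pair at a time, where the summands are independent, and then taking the union bound. No joint independence across pairs is needed.
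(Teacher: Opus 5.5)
Your proof is correct and follows the paper's argument: the same population identity, per-pair concentration using that, for fixed $(u,v)$, the summands $A_{ux}A_{vx}$ over $x$ depend on disjoint edge variables, a union bound over pairs, and the margin comparison followed by the top-$k$/symmetrization step. The differences are minor. You use a two-sided Hoeffding bound at deviation $\sqrt{2n\log n}$ and compare $s_2(u,v)$ and $s_2(u,w)$ directly, whereas the paper separates them around per-class midpoint thresholds $\kappa_a^\star$ using a Chernoff lower tail and a Bernstein upper tail; that gives variance-adaptive conditions $\gamma_n^2\ge 48\mu_{\max}\log n$, $\gamma_n\ge 16\log n$, which only matter under scalings sparser than fixed $B$. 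Also, your parenthetical ``or Bernstein'' would need a larger constant than $c=2$ to reach $2n^{-4}$, but Hoeffding alone suffices, so nothing breaks.
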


\iffalse
\begin{theorem}[Profile homophily of two-hop top-$k$ selection]
\label{thm:sbm-profile-homophily}
Under the SBM setting above, with $D_\pi:=\operatorname{diag}(\pi_1,\ldots,\pi_C)$,  $ \frac{1}{n}\mu_{ab}^{(n)} = (BD_\pi B^\top)_{ab} +  O\!\left(\frac{1}{n}\right) \; \text{for every } a,b\in[C]$. If the profile-margin condition above holds, then with probability at least $1-2n^{-2}$, for every seed node $u\in V$ and every pair of distinct nodes $v,w\in V\setminus\{u\}$ with $y_v=y_u$ and $y_w\neq y_u$, we have
$s_2(u,v)>s_2(u,w)$. Consequently, for every $k\le \min_{a\in[C]}(n_a-1)$, the
symmetrized top-$k$ two-hop cascade graph $G_2^*(k)$ is perfectly
homophilic: $h_{G_2^*(k)}=1$.
\end{theorem}
\fi
\begin{proof}
    See Appendix \ref{profile_proof}.
\end{proof}

Theorem~\ref{thm:sbm-profile-homophily} isolates the reinforcement mechanism rather than analyzing the finite-sample MAS/TAS pipeline. Even when \(B\) favors cross-label edges, the profile kernel  \(BD_\pi B^\top\) can separate same- from different-label pairs, so the top-$k$ two-hop rewired graph is perfectly homophilic with high probability. The witness can still be edge-heterophilic before rewiring: $h_G \xrightarrow{p} \sum_{a=1}^{C} \pi_a^2 B_{aa} / \sum_{a,b=1}^{C} \pi_a \pi_b B_{ab}$, which falls below $1/2$ whenever cross-class edge mass dominates (Corollary~\ref{cor:original-sbm-homophily}). Corollary~\ref{cor:learnability} shows that, in a contextual SBM with Gaussian class-conditional features, one-hop aggregation over the top-$k$ two-hop rewired graph followed by a nearest-centroid linear classifier has expected misclassification at most $2n^{-2} + (C-1)\exp\!\left(-\frac{(k+1)\Delta^2}{8\sigma^2}\right)$; the $k+1$ factor links the top-$k$ parameter directly to downstream signal-to-noise. The multi-hop extension of Theorem~\ref{thm:sbm-profile-homophily} appears in Appendix~\ref{app:extension}, Theorem~\ref{thm:sbm-raw-l-hop}.

\subsection{Choice of Walk Length $\ell$}\label{sec:walk_length}

The walk length \(\ell\) sets the spatial scale at which \(f_v(u)\) accumulates support. We use effective resistance \citep{spielman-srivastava} as a structural certificate for threshold activations: if a node is activated within \(\ell\) steps by a threshold-\(\kappa\) cascade, then its activation is supported by multiple short paths from the initial active set. This is a one-way statement about activated nodes, not a guarantee that all low-resistance nodes are selected. It directly describes TAS-style thresholding;  for MAS, an analogous bound holds whenever the activation threshold $\tau_t$ is at least $\kappa$.

%The walk length \(\ell\) determines the spatial scale at which the rewiring operates. Restricting each contagion to \(\ell\) propagation steps limits reinforcement to paths of length at most \(\ell\), preventing the accumulation of weak long-range connections while still capturing multi-hop structure beyond immediate adjacency. We formalize this mesoscopic notion of reinforcement using \textit{effective resistances} . Nodes with low effective resistance to \(S_0^v\) exhibit strong multi-path connectivity and are preferentially activated within \(\ell\) steps.

\textit{\textbf{Setting.}} Let \(G=(V,E)\) be an undirected, unweighted graph. Fix a non-empty initial active set \(S_0^v\subseteq V\), and let $S_0^v \subseteq S_1^v \subseteq \cdots \subseteq S_\ell^v$ be active sets generated by a uniform threshold-\(\kappa\) contagion, where \(\kappa\in \mathbb{N}\). That is, whenever a vertex \(x\) first activates at time \(t\ge 1\), it satisfies $\bigl|N(x)\cap S_{t-1}^v\bigr| \ge \kappa$. Let $A^v := \bigcup_{t=0}^{\ell} S_t^v = S_\ell^v$. For any \(u\in S_\ell^v\setminus S_0^v\), let $T(u):=\min\{t\in\{0,1,\dots,\ell\}:u\in S_t^v\}$ be its activation time. Let $G_v$ be the subgraph induced by the activated nodes $\bigcup_{t=0}^{\ell} S_t^v$. We write \(R_{\mathrm{eff}}^G(S,u)\) for the effective resistance between \(u\) and the terminal obtained by contracting all vertices of \(S\) to a single vertex.

\begin{theorem}[Effective-Resistance Bound for Threshold Activations]
 \label{thm:walklength}
 $R_{\mathrm{eff}}^{G_v}(S_0^v,u) \le \frac{T(u)}{\kappa} \le \frac{\ell}{\kappa}$.
\end{theorem}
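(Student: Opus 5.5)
We argue by strong induction on the activation time $T(u)$ and write $R(x) := R_{\mathrm{eff}}^{G_v}(S_0^v,x)$. We set $R(x)=0$ for $x\in S_0^v$, since those vertices are contracted into the terminal. The inductive claim is that $R(x)\le T(x)/\kappa$ for every activated $x$, where $T(x)=0$ on $S_0^v$. The second inequality $T(u)/\kappa\le \ell/\kappa$ is immediate from $T(u)\le \ell$, so all the work is in the first.

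The two tools are Rayleigh monotonicity and the series/parallel laws, together with the triangle inequality for effective resistance. Effective resistance is a metric on the vertex set, and it remains one after the terminal $S_0^v$ is contracted to a single vertex $s$. Fix $u$ with $T(u)=t\ge 1$. By the threshold rule, $u$ has at least $\kappa$ neighbors $x_1,\dots,x_\kappa\in S_{t-1}^v$, and each satisfies $T(x_i)\le t-1$. All of these vertices and edges lie in $G_v$, because $G_v$ is the induced subgraph on the activated set. By induction, $R(x_i)\le (t-1)/\kappa$.

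The step I expect to be the main obstacle is combining these $\kappa$ routes into one bound. The naive choice of a single neighbor gives only $R(u)\le 1+(t-1)/\kappa$, which is too weak. A parallel combination of the routes $s\to x_i\to u$ is not literally legitimate, since the routes may share edges and cannot be treated as independent resistors. To handle this, I would work with unit flows, using Thomson's principle: $R(x)$ is the minimum energy of a unit flow from $s$ to $x$. For each $i$, let $\theta_i$ be the energy-optimal unit flow from $s$ to $x_i$. Define
\[
\theta=\frac1\kappa\sum_{i=1}^{\kappa}\bigl(\theta_i+e_{x_iu}\bigr),
\]
where $e_{x_iu}$ is the unit flow along the edge $x_iu$. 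Then $\theta$ is a unit flow from $s$ to $u$ supported in $G_v$. The energy is a convex quadratic form, so Cauchy--Schwarz (equivalently, convexity of the energy) bounds the energy of the average by the average of the energies:
\[
\mathcal E(\theta)\le \frac1\kappa\sum_i \mathcal E(\theta_i+e_{x_iu}).
\]
This estimate is again too weak, because it loses the $1/\kappa$ gain coming from the $\kappa$ disjoint final edges. I would therefore split the flow into two parts. The last-edge part $\frac1\kappa\sum_i e_{x_iu}$ is supported on $\kappa$ distinct edges, so its energy is exactly $\kappa\cdot\kappa^{-2}=1/\kappa$. The remaining part is $\bar\theta=\frac1\kappa\sum_i\theta_i$, and by convexity $\mathcal E(\bar\theta)\le \max_i R(x_i)\le (t-1)/\kappa$.

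The delicate point is the cross term between $\bar\theta$ and the last-edge flow. These two parts can overlap only on the edges $x_iu$, and only if some $\theta_i$ routes through $u$. We can choose each $\theta_i$ to live in the graph induced on $S_{t-1}^v$, which does not contain $u$, because the inductive bound can be proved within that induced subgraph. With that choice the supports are disjoint, the cross term vanishes, and the energies add:
\[
R(u)\le \mathcal E(\theta)\le \frac{t-1}{\kappa}+\frac1\kappa=\frac t\kappa.
\]

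To make this choice available, I would strengthen the inductive statement to: $R_{\mathrm{eff}}^{G[S_t^v]}(S_0^v,x)\le T(x)/\kappa$ for every $x\in S_t^v$. Rayleigh monotonicity then transfers the bound from the induced subgraph $G[S_t^v]$ to the larger graph $G_v$. The induction proceeds cleanly in $t$, with the base case $t=0$ trivial because every $x\in S_0^v$ has $R=0$.
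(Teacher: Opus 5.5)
Your proof is correct, and it takes a genuinely different route from the paper.

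\textbf{The paper's route.} The paper splits the argument into a combinatorial lemma and an electrical one. For Lemma~\ref{lem:paths}, it orients the edges of $G_v$ from earlier to later activation time. Looking at the earliest-activated vertex on the sink side, it shows that every cut separating $S_0^v$ from $u$ in this activation DAG has at least $\kappa$ arcs. Max-flow/min-cut with integrality then extracts $\kappa$ edge-disjoint paths from $S_0^v$ to $u$, each of length at most $T(u)$. Only then does Lemma~\ref{lem:res} apply Thomson's principle, sending current $1/\kappa$ along each path to get $R_{\mathrm{eff}}\le \kappa^{-2}\sum_i \ell_i \le T(u)/\kappa$.

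\textbf{Your route.} You never extract paths. You build the test flow recursively: you average the optimal flows to the $\kappa$ supporting neighbours, with Jensen absorbing any overlap among them, and append the $\kappa$ distinct last edges $x_iu$. Confining the earlier flows to $G[S_{t-1}^v]$, which avoids $u$, kills the cross term, and Rayleigh monotonicity lifts the bound to $G_v$. This is shorter and uses only energy facts, with no Menger or integrality argument. It also yields the recursion $R(u)\le \kappa^{-1}\bigl(1+\sum_i R(x_i)\bigr)$, with each $R(x_i)$ measured in $G[S_{t-1}^v]$. That recursion beats $T(u)/\kappa$ whenever the supporting neighbours were activated early.

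\textbf{What the paper's route buys.} It produces the intermediate structural fact itself: $\kappa$ edge-disjoint short paths. The paper reuses exactly this in Proposition~\ref{prop:tas-edge-connectivity-certificate} to certify $\lambda_G(N[v],u)\ge\kappa$, and in Corollary~\ref{cor:tas-cut-obstruction}. A resistance bound alone cannot recover that. By the Nash-Williams inequality, $R_{\mathrm{eff}}\le T(u)/\kappa$ only forces separating cuts of size at least $\kappa/T(u)$.

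\textbf{Minor point.} The metric, triangle-inequality, and series--parallel facts you announce at the start are never used in the final argument and can be dropped.
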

\begin{proof}
   The proof follows directly from the two lemmas below (see Appendix \ref{proof5}).  
   \pushQED{}
\end{proof}

\begin{lemma}
\label{lem:paths}
Following the setting of Theorem~\ref{thm:walklength}, if
\(u\in S_\ell^v\setminus S_0^v\), then \(G_v\) contains \(\kappa\) pairwise edge-disjoint
paths from \(S_0^v\) to \(u\). Moreover, each of these paths has length at most \(T(u)\), and hence at most \(\ell\).
\end{lemma}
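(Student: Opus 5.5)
The plan is to obtain both the edge-disjointness and the length bound at once by applying Menger's theorem to a suitably oriented version of $G_v$, rather than by gluing paths inductively. Gluing paths inductively would easily destroy edge-disjointness. First I would extend the activation time to all of $A^v=S_\ell^v$ by setting $T(x)=0$ for $x\in S_0^v$. Then I would build a directed graph $D_v$ on the vertex set of $G_v$. For every edge $\{x,y\}$ of $G_v$ with $T(x)<T(y)$, $D_v$ gets the arc $x\to y$. Edges whose endpoints have equal activation time, in particular edges inside $S_0^v$, are simply discarded.

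The key step is a cut lower bound in $D_v$. Let $X$ be any vertex set with $u\in X$ and $X\cap S_0^v=\varnothing$. Choose $x\in X$ with minimal activation time; then $T(x)\ge 1$ because $x\notin S_0^v$. By the threshold rule, $|N(x)\cap S_{T(x)-1}^v|\ge\kappa$. Every such neighbour $w$ is activated, so it lies in $G_v$, and it satisfies $T(w)\le T(x)-1<T(x)$. By minimality of $T(x)$ within $X$, no such $w$ lies in $X$. Each such $w$ therefore contributes a distinct arc $w\to x$ entering $X$ from outside, so at least $\kappa$ arcs enter $X$.

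Next I would contract $S_0^v$ to a single source $s$; this is harmless since no arcs enter $S_0^v$ anyway. The arc version of Menger's theorem (max-flow/min-cut with unit capacities) then yields $\kappa$ pairwise arc-disjoint directed paths from $s$ to $u$. Since the arcs of $D_v$ are oriented copies of distinct edges of $G_v$, these lift to $\kappa$ pairwise edge-disjoint paths in $G_v$ from $S_0^v$ to $u$. If a lifted path meets $S_0^v$ more than once, I keep only its segment after the last visit to $S_0^v$.

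For the length bound, observe that along any directed path in $D_v$ the activation times strictly increase. Such a path starts at a vertex of time $0$ and ends at $u$ with time $T(u)$, so all its times are distinct integers in $\{0,\dots,T(u)\}$. It therefore has at most $T(u)$ arcs, and $T(u)\le\ell$ gives the final bound.

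I expect the main obstacle to be the length control, not disjointness. Plain Menger in undirected $G_v$ would give $\kappa$ edge-disjoint paths but possibly long ones. The orientation by activation time is the device that resolves this: it keeps every cut large, which is the minimal-element argument above, while forcing monotone, hence short, paths. One detail needs checking. If several vertices could activate in the same step, then in the cut argument the $\kappa$ supporting neighbours of $x$ all lie in $S^v_{T(x)-1}$, so they have strictly smaller time, and the argument is unaffected.
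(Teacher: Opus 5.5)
Your proposal is correct and follows essentially the paper's proof: orient the edges of $G_v$ by strictly increasing activation time, take a minimum-activation-time vertex to show that at least $\kappa$ arcs enter every $u$-side set avoiding $S_0^v$, apply max-flow/min-cut (Menger) to extract $\kappa$ arc-disjoint monotone paths, and use strict monotonicity of $T$ to bound each length by $T(u)$. The only differences are cosmetic. You contract $S_0^v$ into a single source, whereas the paper attaches a super-source with capacity-$\kappa$ arcs to each vertex of $S_0^v$. Your final trimming step is also unnecessary, since no arc of $D_v$ enters $S_0^v$.
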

\begin{proof}
See Appendix \ref{proof3}. \pushQED{}
\end{proof}
\begin{lemma}
\label{lem:res}
Let \(H=(W,F)\) be an undirected, unweighted graph, let \(S\subseteq W\) be nonempty, and let \(u\in W\setminus S\). Suppose \(H\) contains \(\kappa\) pairwise edge-disjoint paths \(P_1,\dots,P_\kappa\) of length \(\ell_1,\dots,\ell_\kappa\) from \(S\) to \(u\). Then $R_{\mathrm{eff}}^H(S,u) \le \frac{1}{\kappa^2}\sum_{i=1}^{\kappa}\ell_i$. In particular, if each \(\ell_i\le L\), then $R_{\mathrm{eff}}^H(S,u)\le \frac{L}{\kappa}$.
\end{lemma}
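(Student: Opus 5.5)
We prove Lemma~\ref{lem:res} with Thomson's principle: the effective resistance is the minimum energy of any unit flow from the source to the sink. First contract $S$ to a single terminal $s$. Each path $P_i$ becomes a walk from $s$ to $u$ in the contracted graph $H/S$. Its length can only drop, since contraction may shorten a path or turn some of its edges into self-loops, which we discard. The paths stay pairwise edge-disjoint because contraction does not merge distinct edges into one. (If parallel edges arise, we keep them as separate unit resistors; this only lowers resistance.) If a path meets $S$ more than once, we replace it by its final segment after the last visit to $S$. That segment is still a path from $S$ to $u$ of length at most $\ell_i$, and the family stays edge-disjoint. So we may assume each $P_i$ is a simple path in $H/S$ from $s$ to $u$ of length $\ell_i' \le \ell_i$.

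Now define the flow $f=\frac{1}{\kappa}\sum_{i=1}^{\kappa} f_{P_i}$, where $f_{P_i}$ sends one unit along $P_i$. Each $f_{P_i}$ is a unit $s$--$u$ flow, so $f$ is a unit $s$--$u$ flow. Because the paths are edge-disjoint, every edge carries flow $1/\kappa$ if it lies on some $P_i$ and $0$ otherwise. No cancellation or superposition issues arise. The energy is therefore
\[
\mathcal{E}(f)=\sum_{i=1}^{\kappa}\ell_i'\cdot\frac{1}{\kappa^2}\le \frac{1}{\kappa^2}\sum_{i=1}^{\kappa}\ell_i .
\]
Thomson's principle gives $R_{\mathrm{eff}}^H(S,u)=R_{\mathrm{eff}}^{H/S}(s,u)\le \mathcal{E}(f)$, which is the first claim. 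If every $\ell_i\le L$, the sum is at most $\kappa L$, and the bound becomes $L/\kappa$.

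The main point needing care is the reduction to simple paths in the contracted graph, together with the identification $R_{\mathrm{eff}}^H(S,u)=R_{\mathrm{eff}}^{H/S}(s,u)$. The latter is exactly how the paper defines resistance to a set. Alternatively, one can avoid contraction altogether: apply Thomson's principle in $H$ with a super-source attached to $S$ by infinite-conductance (zero-resistance) edges. Each path then starts at its own point of $S$, and the zero-resistance edges contribute no energy, so the same computation goes through.
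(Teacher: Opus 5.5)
Your proof is correct and follows the paper's argument: trim each path to its suffix after the last visit to $S$, send $1/\kappa$ units along each of the $\kappa$ edge-disjoint paths, and apply Thomson's principle to bound the energy by $\frac{1}{\kappa^2}\sum_i \ell_i$. The only difference is presentational: you work in the contracted graph $H/S$, while the paper writes the same flow directly in $H$ as a unit flow from $u$ into the set $S$. Note that keeping parallel edges in $H/S$ is exactly what shorting $S$ means, so it is not a relaxation that ``only lowers resistance.''
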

\begin{proof}
See Appendix \ref{proof4}. 
\pushQED{}
\end{proof}

Theorem~\ref{thm:walklength} implies that activated nodes are not arbitrary long-range shortcuts: they admit $\kappa$ edge-disjoint short supports to the seed, and the resulting $R_\mathrm{eff}$ bound scales with $\ell$. While this is a one-way certificate rather than a global optimization guarantee, it motivates using smaller $\ell$ in our experiments to concentrate $f_v(u)$ on mesoscopically well-supported neighbors.

%characterizes the structural bias of thresholded cascade selection: an activated node is not an arbitrary long-range shortcut, since activation within $\ell$ steps certifies $\kappa$ short edge-disjoint supports and yields $R_{\mathrm{eff}} \le \ell/\kappa$. The result does not claim that the final top-$k$ graph $G^*$ globally optimizes effective resistance, but it provides a structured way to choose $\ell$: since this bound scales with $\ell$, our experiments use smaller $\ell$ to favor stronger effective-resistance connectivity, yielding scores $f_v(u)$ concentrated on mesoscopically well-supported neighbors.

\textit{\textbf{Edge-Connectivity Certificate.}}
A complementary certificate follows from the same activation-path argument. Appendix~\ref{app:edge-connectivity} shows that any nonlocal directed edge \(v\to u\) selected by a fixed-threshold TAS-\(\kappa\) run satisfies \(\lambda_G(N[v],u)\ge \kappa\) in the original graph. Intuitively, TAS rewiring reinforces redundant mesoscopic connectivity and cannot introduce propagation-selected shortcuts across cuts of size smaller than \(\kappa\). Together, these results delineate the regime where cascade rewiring is expected to help—graphs whose label-relevant structure is carried by reinforced multi-hop neighborhoods with sufficient local redundancy—and identify bottlenecked sparse graphs as the predicted failure mode.

\section{Experiments}\label{sec:empirical}

In this section, we show how our cascade-based mesoscopic rewiring improves homophily in heterophilic graphs and boosts the performance of GNNs and GTs. Additionally, we introduce \textsc{CR-Graphormer}, a sparse GT architecture that leverages mesoscopic rewiring for sparse attention.

\textbf{Cascade-Rewired (\textsc{CR}) \textsc{Graphormer}.}
The cascade-rewired auxiliary graph is the central contribution of this paper and can be used directly by GNN and GT backbones that support edge-weighted inputs. To show that this structure also enables a standalone sparse GT, we introduce \textsc{CR-Graphormer}, a lightweight model that constructs node-specific token sequences from cascade-rewired neighborhoods, making it suitable for settings in which the backbone cannot directly exploit edge weights. The model leverages the auxiliary graph \(G^{\ast}\) constructed from contagion dynamics to define compact, node-specific attention neighborhoods that capture higher-order structure beyond immediate adjacency.

Given an undirected, unweighted graph \(G = (V, E)\) with node features \(\mathbf{X} \in \reals^{n \times d}\), we construct a \emph{cascade-rewired} auxiliary graph \(G^{\ast} = (V, E^{\ast}, W^{\ast})\) following Section \ref{sec:activation-rewire}. To enable efficient mini-batch processing, we derive length-\((k+1)\) token sequences from \(G^{\ast}\) for each target node \(v\) (equation \eqref{eq:topk}), combining its original features \(\mathbf{x}_v\) with those of its neighbors. These tokens are processed by a standard transformer \citep{vaswani2017attention} to produce a node representation vector. Formally, the token sequence is $\mathbf{T}_v :=
\Bigl(
\underbrace{\mathbf{x}_v \Vert 1}_{\text{self}},\ 
\underbrace{\{W^\ast_{uv}\mathbf{x}_u \Vert W^\ast_{uv}\}_{u\in N^\ast(v)}}_{\text{neighbors in }G^{\ast}\text{, weighted}}
\Bigr)$, where \(\mathbf{T}_v\in\mathbb{R}^{(k+1)\times(d+1)}\) and \(\Vert\) denotes concatenation. Using \(\mathbf{Z}^{(0)}_v = \mathbf{T}_v\) as the node-specific input sequence, each encoder layer applies standard transformer updates:
$\tilde{\mathbf{Z}}^{(t)}_v = \operatorname{MHA}(\operatorname{LN}(\mathbf{Z}^{(t-1)}_v)) + \mathbf{Z}^{(t-1)}_v, \; 
\mathbf{Z}^{(t)}_v = \operatorname{FFN}(\operatorname{LN}(\tilde{\mathbf{Z}}^{(t)}_v)) + \tilde{\mathbf{Z}}^{(t)}_v$,
where MHA is multi-head self-attention, FFN is a position-wise feed-forward network, and LN denotes layer normalization. After \(T\) layers, a readout (e.g., mean pooling) over \(\mathbf{Z}^{(T)}_v\) produces the final node embedding. Since all tokens are pre-selected, training is mini-batchable with a sequence length of at most \(k+1\), independent of the original graph size, where \(k\) (defined in \eqref{eq:topk}) controls the node degrees in the auxiliary graph \(G^{\ast}\).

%\subsection*{Experimental Setup}
\textit{\textbf{Datasets.}} We consider 16 graph benchmarks from the Deep Graph Library (DGL)\footnote{\url{https://www.dgl.ai/dgl_docs/api/python/dgl.data.html}}, including homophilic and heterophilic settings. Dataset information is given in Table \ref{tab:dataset_stats} in Appendix \ref{app:numerical1_appendix}. To study the effects of mesoscopic cascades, we generate 125 balanced-community SBM networks with a fixed intra-community edge probability \(p_{\mathrm{in}} = 0.005\). We vary the number of nodes \(n \in \{500, 1000, 1500, 2000, 2500\}\), the number of communities \(k \in \{2, 4, 6, 8, 10\}\), and the inter-community edge probability \(p_{\mathrm{out}} \in \{10^{-5}, 10^{-4}, 10^{-3}, 10^{-2}, 10^{-1}\}\) to cover a range of network structures. 
Node features (covariates) are sampled i.i.d. as \(\mathbf{X} \sim \mathcal{N}(0,1)^{n \times 5}\). We use these synthetic graphs to examine how mesoscopic cascades affect label homophily (Appendix \ref{exp1}) and to assess the correlation between model performance and graph structural statistics (Table~\ref{tab:correlation}). Additional experiments on long-range graph benchmarks (\textit{PascalVOC-SP}, \textit{COCO-SP}, \textit{Questions}, \textit{Roman-empire}, \textit{Amazon-ratings}) and non-duplicate heterophily datasets (\textit{chameleon-filtered}, \textit{squirrel-filtered}) are reported in Appendix~\ref{app:longrange}, using the standard metrics for each setting, namely macro-F1 for class-imbalanced tasks and AUROC for Questions.

\textit{\textbf{Mesoscopic Cascades.}} We rewire graphs using the MAS and TAS cascades from Section~\ref{sec:activation-rewire}, with initial active-set size $|R|=5$, cascade length $\ell=10$, number of permutations $P=5$, and $k$ in~\eqref{eq:topk} set to the rounded average degree to preserve sparsity. For TAS, we use thresholds $\tau \in \{1,2,3,4,5\}$ and select the frequency normalization by cross-validation performance (Remark~\ref{remark1}). Methods with MAS/TAS in their names use the corresponding weighted rewired graph $(G^\ast, W^\ast)$; all others use the original unweighted graph $G$. From Appendix~\ref{ablation}, TAS is generally preferable on heterophilic graphs because thresholding filters weakly supported activations, while MAS gives tighter structural alignment but is more sensitive to walk length. On skewed-degree graphs, setting $k$ to the median degree rather than the average reduces over-inclusion.

%GCN \citep{kipf17-classifgcnn}, GraphSAGE \citep{hamilton2017inductive}, GAT \citep{velivckovic2017graph}, PPRGo \citep{bojchevski2020scaling}, GRAND+ \citep{feng2022grand+}, GT \citep{dwivedi2020generalization}, Gophormer \citep{zhao2021gophormer}, Graphormer \citep{ying2021transformers}, SAN \citep{kreuzer2021rethinking}, GraphGPS \citep{rampavsek2022recipe}, NAGphormer \citetext{chennagphormer}{NAG}, SpecFormer \citep{bospecformer}, Exphormer \citep{shirzad2023exphormer}, SGFormer \citep{wu2023sgformer}, VCR-Graphormer \citetext{fu2024vcr}{VCR}, and PolyFormer \citep{ma2024polyformer}. All the models are fine-tuned as specified in their papers.

\textit{\textbf{Experimental Details.}}  We compare against sixteen representative methods, comprising five GNNs and eleven GTs, with details provided in Appendix~\ref{app:baselines}. Each model is trained for up to 2000 epochs. All experiments use a 50\%–25\%–25\% train/validation/test split and are repeated over 20 random splits per graph for robustness. Because \textsc{Graph Cascades} is a rewiring operator rather than a new architecture, we evaluate whether fixed backbones improve under cascade-rewired neighborhoods. Table~\ref{tab:numerical1} reports this within-backbone comparison for GCN, GraphGPS, NAGphormer, and VCR-Graphormer, and Table~\ref{tab:numerical2} provides the broader cross-architecture context. We report Tables~\ref{tab:summary} and~\ref{tab:numerical2} without standard deviations for readability; complete numerical results with standard deviations are in Appendix~\ref{app:numerical1_appendix} (Tables~\ref{std2} and \ref{std1}). Additional experimental details, the assessment of Assumptions~\ref{as1}--\ref{as3} and Proposition~\ref{prop:bayes-monotone-matrix}, comparisons with other rewiring methods,  heterophily-specific architectures, and an ablation study are provided in Appendices~\ref{app:exp}, \ref{app:paragraph_empirical}, \ref{app:rewiring}, \ref{app:heterophily}, and \ref{ablation} respectively.

%\onecolumn
\begin{table*}[!ht]
\begin{minipage}[t]{0.47 \textwidth}
\centering
\setlength{\tabcolsep}{4pt}
\renewcommand{\arraystretch}{1.05}
\caption{Number of DGL graphs (out of 16) where \textsc{Graph Cascades} improves test accuracy (GraphGPS counts exclude $3$ OOM datasets). Colors denote the graph type with the largest gain: \red{\textbf{red}} (heterophilic), \blue{\textbf{blue}} (moderate- to high-degree homophilic), and \black{\textbf{black}} (low-degree homophilic). Complete results are in Table~\ref{std1}.}
\tiny
\begin{tabular}{llc}
\hline
Baseline & Cascade Model & \# Graphs Improved \\
\hline
GCN & GCN-MAS & 8 (up to \red{\textbf{+17.11\%}}) \\
 & GCN-TAS & 10 (up to \red{\textbf{+20.16\%}}) \\
\hline
GraphGPS & GPS-MAS & 8 (up to \blue{\textbf{+8.15\%}}) \\
 & GPS-TAS & 9 (up to \blue{\textbf{+9.52\%}}) \\
\hline
NAG & NAG-MAS & 11 (up to \red{\textbf{+19.77\%}}) \\
 & NAG-TAS & 13 (up to \blue{\textbf{+34.54\%}}) \\
\hline
VCR & VCR-MAS & 13 (up to \red{\textbf{+32.74\%}}) \\
 & VCR-TAS & 13 (up to \red{\textbf{+32.33\%}}) \\
\hline
\end{tabular}
\label{tab:summary}
\end{minipage}
\hfill
\begin{minipage}[t]{0.5\textwidth}
\tiny
\centering
\setlength{\tabcolsep}{4pt}
\renewcommand{\arraystretch}{1.05}
\caption{Regression coefficients for $A(M) \sim H \ast L(D) + H \ast L(D) \ast C$. Here $H$ is label homophily, $L(D)$ the log average degree, and $C \in \{0,1\}$ indicates whether the original graph is connected. $A(M)$ denotes test accuracy, computed on the original graph unless MAS/TAS specifies an auxiliary one.}\label{tab:correlation}
\begin{tabular}{lcccc}
    \hline
    M & $\beta_0$ & $\beta_1$ & $\beta_2$ & Adjusted $R^2$ \\
    \hline
    NAG & 0.193*** & 0.197*** & 0.170**\phantom{*} & 0.454 \\
    NAG-MAS & 0.143*** & 0.258*** & 0.173*** & 0.722 \\
    NAG-TAS & 0.152*** & 0.265*** & 0.237*** & 0.695\\
    VCR & 0.253*** & 0.233*** & -0.005 \phantom{***} & 0.426 \\
    VCR-MAS & 0.155*** & 0.306*** & 0.157*** & 0.704 \\
    VCR-TAS & 0.172*** & 0.301*** & 0.201*** & 0.607 \\
    CR-MAS & 0.091*** & 0.238*** & 0.261*** & 0.760 \\
    CR-TAS & 0.132*** & 0.219*** & 0.421*** & 0.639\\
    \hline
\end{tabular}
\makebox[\linewidth][c]{%
     $^{***}p<0.001$; $^{**}p<0.01$; none: $p\geq0.1$%
}
\end{minipage}\\

\centering
\tiny
\setlength{\tabcolsep}{5pt}
\renewcommand{\arraystretch}{1.05}
\caption{Alphabetical dataset labels and names. Colors indicate graph type, aligned with Table \ref{tab:summary}. See Table~\ref{tab:dataset_stats} for exact label homophily and average degree.}
\begin{tabular}{c l c l c l c l c l c l c l c l}
\hline
\red{\textbf{A}} & \red{\textbf{actor}} &
\blue{\textbf{E}} & \blue{\textbf{computer}} &
\black{\textbf{I}} & \black{\textbf{grid}} &
\red{\textbf{M}} & \red{\textbf{squirrel}} &
\red{\textbf{B}} & \red{\textbf{chameleon}} &
\black{\textbf{F}} & \black{\textbf{cora}} &
\blue{\textbf{J}} & \blue{\textbf{photo}} &
\red{\textbf{N}} & \red{\textbf{texas}} \\

\black{\textbf{C}} & \black{\textbf{citeseer}} &
\red{\textbf{G}} & \red{\textbf{cornell}} &
\blue{\textbf{K}} & \blue{\textbf{pubmed}} &
\blue{\textbf{P}} & \blue{\textbf{wiki}} &
\blue{\textbf{D}} & \blue{\textbf{community}} &
\black{\textbf{H}} & \black{\textbf{cycle}} &
\blue{\textbf{L}} & \blue{\textbf{shape}} &
\red{\textbf{Q}} & \red{\textbf{wisconsin}} \\
\hline
\end{tabular}
\label{tab:dataset-labels-compact}
\tiny
\centering
\setlength{\tabcolsep}{2.7pt}
\renewcommand{\arraystretch}{1.05}
\caption{Test accuracy (mean, in \%) across DGL datasets. OOM indicates out-of-memory errors. Accuracy is encoded by shading, with darker cells indicating better performance. See Table~\ref{tab:dataset-labels-compact} for dataset name encodings and Table~\ref{std1} for the corresponding standard deviations.}
\begin{tabular}{llcccccccccccccccc}
\hline
& Method & \red{\textbf{A}} & \red{\textbf{B}} & \black{\textbf{C}} & \blue{\textbf{D}} & \blue{\textbf{E}} & \black{\textbf{F}} & \red{\textbf{G}} & \black{\textbf{H}} & \black{\textbf{I}} & \blue{\textbf{J}} & \blue{\textbf{K}} & \blue{\textbf{L}} & \red{\textbf{M}} & \red{\textbf{N}} & \blue{\textbf{P}} & \red{\textbf{Q}} \\
\hline

GNN-based & GCN & 29.59 & 57.30 & \cellcolor{myred!60} 75.70 & \cellcolor{myred!60} 55.63 & \cellcolor{myred!30} 90.30 & \cellcolor{myred!60} 87.96 & 45.11 & \cellcolor{myred!60} 60.73 & \cellcolor{myred!60} 59.34 & 93.90 & 87.16 & \cellcolor{myred!30} 89.46 & 39.10 & 54.57 & \cellcolor{myred!60} 83.89 & 51.41 \\
Architectures & GCN-MAS & \cellcolor{myred!30} 34.28 & \cellcolor{myred!30} 57.67 & 74.71 & \cellcolor{myred!30} 51.76 & 89.34 & 84.93& \cellcolor{myred!60} 60.85 & 59.27 & \cellcolor{myred!30} 58.30 & \cellcolor{myred!30} 93.91 & \cellcolor{myred!30} 87.87 & 57.69 & \cellcolor{myred!30} 40.01 & \cellcolor{myred!60} 66.49 & 81.79& \cellcolor{myred!30} 68.52 \\
& GCN-TAS & \cellcolor{myred!60} 34.62 & \cellcolor{myred!60} 58.75 & \cellcolor{myred!30} 75.46 & 50.11 & \cellcolor{myred!60} 90.87 & \cellcolor{myred!30} 87.29 & \cellcolor{myred!30} 57.02 & \cellcolor{myred!30} 60.39 & 58.19 & \cellcolor{myred!60} 94.17 & \cellcolor{myred!60} 88.15 & \cellcolor{myred!60} 89.57 & \cellcolor{myred!60} 42.38 & \cellcolor{myred!30} 65.00 & \cellcolor{myred!30} 83.54& \cellcolor{myred!60} 71.56\\
& GraphGPS & 35.19 & \cellcolor{mygreen!60} 60.78 & \cellcolor{mygreen!30}75.23 & 44.00 & OOM & \cellcolor{mygreen!60} 86.65 & 61.38 & 55.11 & \cellcolor{mygreen!60} 57.41 & 94.64 & OOM & 46.74 & \cellcolor{mygreen!60} 40.04 & 69.68 & OOM & 72.66 \\
& GPS-MAS & \cellcolor{mygreen!30} 36.39 & \cellcolor{mygreen!30} 59.90 & 74.09 & \cellcolor{mygreen!60} 47.43 & OOM & 85.49 & \cellcolor{mygreen!60} 68.72 & \cellcolor{mygreen!30} 55.78& 55.44 & \cellcolor{mygreen!30} 94.71& OOM & \cellcolor{mygreen!30} 54.89 & \cellcolor{mygreen!30} 39.20 & \cellcolor{mygreen!30} 75.00 & OOM & \cellcolor{mygreen!60} 78.44 \\
& GPS-TAS & \cellcolor{mygreen!60} 36.46 & 59.66 & \cellcolor{mygreen!60} 75.28 & \cellcolor{mygreen!30} 45.26 & OOM & \cellcolor{mygreen!30} 86.53 & \cellcolor{mygreen!30} 67.23 & \cellcolor{mygreen!60} 57.51 & \cellcolor{mygreen!30} 56.33 & \cellcolor{mygreen!60} 94.84 & OOM & \cellcolor{mygreen!60} 56.26 & 39.18 & \cellcolor{mygreen!60} 75.64 & OOM & \cellcolor{mygreen!30} 78.28 \\

\hline
Tokenized & NAG & 31.68 & 40.38 & \cellcolor{mycyan!60} 74.72 & 46.17 & 88.66 & \cellcolor{mycyan!30} 86.75 & 58.94 & \cellcolor{mycyan!60} 77.24 & \cellcolor{mycyan!60} 70.99 & 93.94 & 87.26 & 49.89 & 25.80 & 63.30 & \cellcolor{mycyan!30} 82.40 & 61.25 \\
Sparse- & NAG-MAS & \cellcolor{mycyan!60} 35.10 & \cellcolor{mycyan!60} 59.15 & 72.17 & \cellcolor{mycyan!30} 52.59 & \cellcolor{mycyan!30} 89.10 & 84.84 & \cellcolor{mycyan!60} 74.36 & 59.00 & \cellcolor{mycyan!30} 62.25 & \cellcolor{mycyan!30} 94.27 & \cellcolor{mycyan!30} 88.19 & \cellcolor{mycyan!30} 57.54 & \cellcolor{mycyan!30} 40.89 & \cellcolor{mycyan!60} 81.28 & 82.38 & \cellcolor{mycyan!60} 81.02 \\
Attention & NAG-TAS & \cellcolor{mycyan!60} 35.10 & \cellcolor{mycyan!30} 58.41 & \cellcolor{mycyan!30} 74.15 & \cellcolor{mycyan!60} 58.21 & \cellcolor{mycyan!60} 89.19 & \cellcolor{mycyan!60} 86.82 & \cellcolor{mycyan!60} 74.36 & \cellcolor{mycyan!30} 59.20 & 60.60 & \cellcolor{mycyan!60} 94.61 & \cellcolor{mycyan!60} 88.26 & \cellcolor{mycyan!60} 84.43 & \cellcolor{mycyan!60} 42.13 & \cellcolor{mycyan!60} 81.28 & \cellcolor{mycyan!60} 82.59 & \cellcolor{mycyan!60} 81.02 \\
GTs & VCR & 26.41 & 27.28 & 71.15 & 28.13 & 82.45 & \cellcolor{mypurple!60} 85.41 & 44.89 & \cellcolor{mypurple!60} 61.64 & \cellcolor{mypurple!60} 58.32 & 88.45 & 83.25 & 44.66 & 22.48 & 54.68 & 78.46 & 51.33 \\

& VCR-MAS & \cellcolor{mypurple!30} 36.18 & \cellcolor{mypurple!60}  60.02 & \cellcolor{mypurple!30} 71.47 & \cellcolor{mypurple!30}47.22 & \cellcolor{mypurple!60}89.43 & 85.08 & \cellcolor{mypurple!30} 73.85 & \cellcolor{mypurple!30} 51.70 & 44.05 & \cellcolor{mypurple!30} 93.88 & \cellcolor{mypurple!30}87.59 & \cellcolor{mypurple!30}52.12 & \cellcolor{mypurple!30} 39.37 & \cellcolor{mypurple!30} 80.99 & \cellcolor{mypurple!30} 82.68 & \cellcolor{mypurple!30} 79.93 \\

& VCR-TAS & \cellcolor{mypurple!60} 36.28 & \cellcolor{mypurple!30} 59.61 & \cellcolor{mypurple!60} 74.14 & \cellcolor{mypurple!60} 57.97 & \cellcolor{mypurple!30} 88.95 & \cellcolor{mypurple!30} 85.11 & \cellcolor{mypurple!60} 75.02 & 50.09 & \cellcolor{mypurple!30} 47.84 & \cellcolor{mypurple!60} 94.22 & \cellcolor{mypurple!60} 88.70 & \cellcolor{mypurple!60} 76.94 & \cellcolor{mypurple!60} 46.06 & \cellcolor{mypurple!60} 82.31 & \cellcolor{mypurple!60} 82.84 & \cellcolor{mypurple!60} 81.00 \\
& CR-MAS & 31.52 & 62.82 & 50.63 & 54.04 & \cellcolor{myblack!45} 89.44 & 79.81 & 54.04 & 59.18 & \cellcolor{myblack!45} 61.80 & 93.73 & 87.53 & 54.37 & 45.72 & \cellcolor{myblack!45} 65.43 & 82.65 & 66.56 \\
& CR-TAS & \cellcolor{myblack!45} 34.64 & \cellcolor{myblack!45} 63.65 & \cellcolor{myblack!45} 51.84 & \cellcolor{myblack!45} 78.60 & 89.33 & \cellcolor{myblack!45} 81.02 & \cellcolor{myblack!45} 55.74 & \cellcolor{myblack!45} 61.32 & 60.74 & \cellcolor{myblack!45} 93.95 & \cellcolor{myblack!45} 87.75 & \cellcolor{myblack!45} 85.94 & \cellcolor{myblack!45} 47.31 & \cellcolor{myblack!45} 65.43 & \cellcolor{myblack!45} 83.04 & \cellcolor{myblack!45} 70.39 \\
\hline
\end{tabular}
\label{tab:numerical1}
\end{table*}

\textbf{4.1\;\;\;Model-Agnostic Gains from Cascades.} Across Tables~\ref{tab:summary} and~\ref{std1}, cascade rewiring improves accuracy on a substantial portion of benchmarks for both GNNs and GTs. The gains are most pronounced and frequent for NAG and VCR, where cascades improve performance on $13/16$ graphs, indicating that these architectures benefit particularly strongly from cascade rewiring. Improvements are also widespread for GCN ($10/16$ graphs) and consistent for GraphGPS ($9/13$ graphs; the remaining $3$ are OOM). Overall, the strongest single improvements reach \textbf{+32.74\%} under MAS and \textbf{+34.54\%} under TAS as shown in Table~\ref{tab:summary}. These results show that cascade rewiring provides a robust accuracy boost across both message-passing GNNs and GTs, rather than being specific to a single model. Notably, these classification gains are achieved using a fixed hyperparameter setting per GNN/GT model, applied unchanged when training on both the original and rewired graphs. 

%\textbf{Cascade Weights.} The cascade weights $W^\ast$ are not a post-hoc embellishment: they are the empirical reinforcement scores induced by the contagion dynamics. Cascade rewiring is most naturally viewed as a single weighted operator, rather than as a topology followed by a separate weighting step. The ablations in Appendix~\ref{ablation} (Tables~\ref{tab:summary_unweighted}--\ref{tab:numerical3}) compare the unweighted auxiliary graph $G^\ast$ with the full weighted operator, and reveal that the contribution of $W^\ast$ is architecture-dependent. For sparse-attention GTs, $G^\ast$ alone already yields broad gains, with VCR improving on 13 of 16 graphs, indicating that the rewired topology carries meaningful structural signal. For GNN-based backbones, whose aggregation is more sensitive to degree imbalance, the frequency weights yield substantially larger and more consistent gains. We therefore report \textsc{Graph Cascades} with $W^\ast$ as the recommended configuration. The topology-only ablation shows that both the rewired structure and the reinforcement weights contribute to performance, with their relative contributions depending on the backbone.

We observe pronounced improvements on all $6$ heterophilic datasets (\textit{actor}: +9.87\%; \textit{chameleon}: +32.74\%; \textit{cornell}: +30.13\%; \textit{squirrel}: +23.58\%; \textit{texas}: +27.63\%; \textit{wisconsin}: +29.67\%). A possible explanation for this—and for the moderate performance declines in homophilic networks such as \textit{citeseer}, \textit{cora}, \textit{cycle}, and \textit{grid}—is that cascades primarily boost homophily in heterophilic and moderate- to high-degree homophilic graphs, but decrease homophily in low-degree homophilic graphs due to the violation of Assumption~\ref{as2}. This trend aligns with Proposition~\ref{prop:bayes-monotone-matrix} and is consistent across GNNs and GTs (see Figure~\ref{fig:summary} in Appendix~\ref{app:numerical1_appendix}, Table~\ref{tab:reinforcement-homophily-full} in Appendix~\ref{app:paragraph_empirical}). Table~\ref{tab:regimes} summarizes the regimes induced by Assumptions~\ref{as2} and~\ref{as3} and the empirical behavior they predict; \emph{the assumptions are not assumed but evaluated directly on each benchmark} in Appendix~\ref{app:paragraph_empirical}.

Furthermore, cascade rewiring improves macro-F1 on the cohesive mesoscopic long-range datasets PascalVOC-SP and COCO-SP, improves AUROC on Questions across all tested backbones, and yields gains on chameleon-filtered. On Roman-empire, the effect is backbone-dependent: GNN-based architectures degrade, consistent with the scarcity of the $\kappa$ edge-disjoint short paths required by  Theorem~\ref{thm:walklength} and Appendix~\ref{app:edge-connectivity}, whereas sparse-attention GT backbones such as NAG and VCR recover substantial gains. See Appendix~\ref{app:longrange} for numerical results.

\begin{table}[ht]
\centering
\small
\setlength{\tabcolsep}{4.7pt}
\renewcommand{\arraystretch}{1.2}
\caption{Regimes identified by alignment (Assumption \ref{as2}) and selectivity (Assumption \ref{as3}) conditions of Proposition~\ref{prop:bayes-monotone-matrix}, together with their empirical behavior. Full per-dataset verification is provided in Appendix~\ref{app:paragraph_empirical}.}
\label{tab:regimes}

\begin{tabular}{p{2.3cm} c c p{3.6cm} p{3.6cm}}
\hline
\textbf{Regime} & \textbf{Alignment} & \textbf{Selectivity} & \textbf{Example datasets} & \textbf{Observed behavior} \\
\hline

Heterophilic
  & Holds & Holds
  & texas, cornell, wisconsin, chameleon, squirrel, actor
  & Largest gains ($+9.87$ to $+32.74$) \\

Mid-/high-degree homophilic
  & Holds & Fails
  & community, shape, computer, photo, pubmed, wiki
  & Often positive via Theorem~\ref{thm:walklength} \\

Low-degree homophilic
  & Fails & Holds
  & citeseer, cora
  & Small or unstable changes \\

Bottlenecked/highly regular
  & Fails & N/A
  & cycle, grid, Roman-empire
  & Predicted failure mode \\

\hline
\end{tabular}
\end{table}

\begin{remark}[A Note on Cascade Weights]
    The weights $W^\ast$ are intrinsic to cascade rewiring: they are the empirical reinforcement scores produced by the contagion dynamics, not a post-hoc addition. Thus, \textsc{Graph Cascades} is best viewed as a single weighted operator. The ablations in Appendix~\ref{weights} (Tables~\ref{tab:summary_unweighted} and \ref{tab:numerical3}) show that the role of $W^\ast$ depends on the backbone. For sparse-attention GTs, the unweighted graph $G^\ast$ already gives broad gains, indicating that the rewired topology carries meaningful structural signal. For GNN backbones, whose aggregation is more sensitive to degree imbalance, the frequency weights provide larger and more consistent improvements. We therefore use the weighted operator as the default configuration. The topology-only ablation confirms that both structure and reinforcement strength contribute to performance.
\end{remark}

\textbf{4.2\;\;\;How Accuracy Relates to Homophily, Degree, and Connectivity.} To further assess the impact of cascade rewiring on graph learning performance, we conduct a correlation study between test accuracy and key graph characteristics, including label homophily, average degree, and graph connectivity. These statistics are computed on the input graph: the original graph with unit edge weights for baseline methods, and the MAS/TAS-rewired graphs with frequency-based weights.

Table \ref{tab:correlation} reports results from linear regressions of test accuracy on these graph characteristics across 125 synthesized graphs and 16 benchmark datasets. Overall, we find that rewired graph characteristics explain a larger fraction of performance variation than those derived from the original graphs. While TAS typically yields larger absolute performance gains across datasets and architectures (Tables \ref{tab:summary} and \ref{std1}), MAS shows stronger alignment with structural properties, attaining higher adjusted $R^2$ values for NAG, VCR, and CR-Graphormer. Notably, despite the synthesized graphs using random node features with no semantic content, we observe consistent correlations between accuracy and structural metrics across both synthesized and real-world benchmarks (Figure \ref{fig:correlation} in Appendix \ref{app:correlation}). This indicates that the observed effects are primarily driven by graph topology rather than feature semantics.

\begin{table*}[!ht]
\tiny
\setlength{\tabcolsep}{2.95pt}
\renewcommand{\arraystretch}{1.05}
\caption{Test accuracy (mean, \%) on DGL datasets. OOM denotes out-of-memory. Top-5 models are shaded (darker indicates better performance). See Table~\ref{tab:dataset-labels-compact} for dataset codes, Table~\ref{tab:numerical1} for ``Our Best'' accuracy, and Table~\ref{std2} for standard deviations.}
\begin{tabular}{llcccccccccccccccc}
\hline
 & & \red{\textbf{A}} & \red{\textbf{B}} & \black{\textbf{C}} & \blue{\textbf{D}} & \blue{\textbf{E}} & \black{\textbf{F}} & \red{\textbf{G}} & \black{\textbf{H}} & \black{\textbf{I}} & \blue{\textbf{J}} & \blue{\textbf{K}} & \blue{\textbf{L}} & \red{\textbf{M}} & \red{\textbf{N}} & \blue{\textbf{P}} & \red{\textbf{Q}} \\
\hline

GNNs & GCN
& 29.59 & 57.30 & \cellcolor{mycolor!54} 75.70 & 55.63 & 90.30 & \cellcolor{mycolor!72} 87.96 & 45.11
& 60.73 & \cellcolor{mycolor!18} 59.34 & 93.90 & \cellcolor{mycolor!36} 87.16 & \cellcolor{mycolor!72} 89.46 & 39.10 & 54.57 & \cellcolor{mycolor!54} 83.89 & 51.41 \\

& GraphSAGE
& 34.57 & \cellcolor{mycolor!72} 64.36& \cellcolor{mycolor!90} 75.88 & 52.09 & \cellcolor{mycolor!18} 90.39 & \cellcolor{mycolor!90} 88.09 & \cellcolor{mycolor!18} 64.47
& 59.27 & 58.19 & \cellcolor{mycolor!90} 95.31 & \cellcolor{mycolor!72} 87.33 & 43.11 & \cellcolor{mycolor!72} 45.41 & \cellcolor{mycolor!36} 79.79 & \cellcolor{mycolor!90} 84.57 & 72.89 \\

& GAT
& 28.85 & 58.63 & \cellcolor{mycolor!72} 75.77 & \cellcolor{mycolor!18} 58.99 & OOM & \cellcolor{mycolor!54} 87.38 & 44.15
& 59.27 & 58.19 & 94.49 & 86.88 & 43.11 & OOM & 56.70 & OOM & 49.45 \\

& PPRGo & 31.36 & 45.83 & 74.33 & 39.43 & \cellcolor{mycolor!90} 91.13 & 84.30 & 46.91 & 59.27 & 58.19 & \cellcolor{mycolor!72} 95.09 & 86.72 & 43.11 & 30.22 & 58.51 & \cellcolor{mycolor!72} 84.16 & 55.63\\

& GRAND+ & 30.20 & 45.54 & 73.80 & 39.37 & \cellcolor{mycolor!54}90.85 & 83.70 & 45.11 & 59.27& 58.19 & 94.82& 85.95 & 43.11 & 30.23 & 56.49 & 83.38 & 53.20\\
\hline
GTs & GT
& 34.19 & \cellcolor{mycolor!90} 64.83 & 68.60 & 54.00 & OOM & 85.41 & 61.91
& 56.32 & 54.64 & OOM & 86.59 & 23.23 & OOM & 77.77 & OOM & \cellcolor{mycolor!18} 73.52 \\

& Gophormer
& 34.90 & 53.16 & 31.48 & \cellcolor{mycolor!54} 71.00 & \cellcolor{mycolor!36} 90.60 & 49.18 & \cellcolor{mycolor!36} 67.13
& 54.98 & \cellcolor{mycolor!54} 62.62 & \cellcolor{mycolor!54}  94.98 & 40.40 & \cellcolor{mycolor!36} 79.83 & 37.44 & \cellcolor{mycolor!72} 81.49 & 83.34 & \cellcolor{mycolor!90} 81.95 \\

& Graphormer
& \cellcolor{mycolor!18} 35.01 & 50.70 & 33.18 & \cellcolor{mycolor!72} 73.20 & OOM & 35.89 & 51.49
& \cellcolor{mycolor!90} 81.48 & \cellcolor{mycolor!90} 72.14 & 89.02 & OOM & \cellcolor{mycolor!54} 85.57& \cellcolor{mycolor!36} 40.55 & 68.62 & OOM & 71.33 \\

& SAN
& \cellcolor{mycolor!54} 36.04 & 59.22 & 73.37 & 46.00 & OOM & 85.07 & \cellcolor{mycolor!54} 67.55
& 59.27 & 57.27 & \cellcolor{mycolor!36} 94.85 & OOM & 29.40 & 39.00 & \cellcolor{mycolor!54} 80.00 & OOM & \cellcolor{mycolor!54} 80.63\\

& GraphGPS
& \cellcolor{mycolor!36} 35.19 & \cellcolor{mycolor!36} 60.78 & \cellcolor{mycolor!18} 75.23 & 44.00 & OOM & 86.65 & 61.38
& 55.11 & 57.41 & 94.64 & OOM & 46.74 & \cellcolor{mycolor!18} 40.04 & 69.68 & OOM & 72.66 \\

& NAG
& 31.68 & 40.38 & 74.72 & 46.17 & 88.66 & \cellcolor{mycolor!18} 86.75 & 58.94
& \cellcolor{mycolor!72} 77.24 & \cellcolor{mycolor!72} 70.99 & 93.94 & \cellcolor{mycolor!54} 87.26 & 49.89 & 25.80 & 63.30 & 82.40 & 61.25\\

& SpecFormer
& 30.74 & 42.29 & 70.33 & 47.69 & 88.05 & 82.49 & 54.00 & 54.32 & 53.77 & 89.22 & 85.58 & 45.77 & 31.44 & 63.26 & 71.82 & 68.90 \\

& Exphormer
& \cellcolor{mycolor!90} 36.62 & \cellcolor{mycolor!18} 59.36 & 68.34 & \cellcolor{mycolor!36} 65.63 & 89.22 & 78.17 & \cellcolor{mycolor!72} 68.19
& \cellcolor{mycolor!54} 61.80 & 57.31 & 94.25 & \cellcolor{mycolor!18} 86.97 & \cellcolor{mycolor!18} 76.31 & \cellcolor{mycolor!54} 42.31 & \cellcolor{mycolor!18} 79.47 & \cellcolor{mycolor!36}  83.84 & \cellcolor{mycolor!36} 80.39\\

& SGFormer
& 31.06 & 42.37 & 69.91 & 44.61 & 85.40 & 81.41 & 55.19 & 55.19 & 52.53 & 87.93 & 85.07 & 42.80 & 32.86 & 61.61 & 67.03 & 63.86 \\

& VCR & 26.41 & 27.28 & 71.15 & 28.13 & 82.45 & 85.41 & 44.89 & \cellcolor{mycolor!36} 61.64 & 58.32 & 88.45 & 83.25 & 44.66 & 22.48 & 54.68 & 78.46 & 51.33 \\

& PolyFormer
& 32.12 & 46.83 & 72.13 & 48.91 & 89.60 & 83.51 & 59.02 & 58.11 & 56.15 & 91.93 & 86.82 & 47.29 & 34.42 & 64.38 & 73.18 & 69.77 \\

\hline
Our Best & Accuracy
& \cellcolor{mycolor!72} 36.46 & \cellcolor{mycolor!54} 63.65 & \cellcolor{mycolor!36} 75.46 & \cellcolor{mycolor!90} 78.60 & \cellcolor{mycolor!72} 90.87 & \cellcolor{mycolor!36} 87.29 & \cellcolor{mycolor!90} 75.02 & \cellcolor{mycolor!18} 61.32 & \cellcolor{mycolor!36} 62.25 & \cellcolor{mycolor!18} 94.84 & \cellcolor{mycolor!90} 88.70 & \cellcolor{mycolor!90} 89.57 & \cellcolor{mycolor!90} 47.31 & \cellcolor{mycolor!90} 82.31 & \cellcolor{mycolor!18} 83.54 & \cellcolor{mycolor!72} 81.02 \\
\hline
\end{tabular}
\label{tab:numerical2}
\end{table*}

\textbf{4.3\;\;\;Comparison to GNN and GT Baselines.} We compare our best-performing cascade-rewired models against the five GNN and eleven GT baselines described in the experimental setup. Our top results consistently rank within the top 5 across all datasets: 1st for \textit{community}, \textit{cornell}, \textit{pubmed}, \textit{shape}, \textit{squirrel}, and \textit{texas}; 2nd for \textit{actor}, \textit{computer}, and \textit{wisconsin}; 3rd for \textit{chameleon}; 4th for \textit{citeseer}, \textit{cora}, and \textit{grid}; and 5th for \textit{cycle}, \textit{photo}, and \textit{wiki} (Table \ref{tab:numerical2}).  
While our best accuracy is lower than the highest accuracy achieved by baseline GNNs and GTs—by 20.16\% on \textit{cycle} and 9.89\% on \textit{grid}—it remains competitive on the other graphs, at most $\emph{1.18\%}$ lower than the best of all GNN and GT baselines. An explanation for the decline on \textit{cycle} and \textit{grid} is, again, that these graphs have very high label homophily but very low average degree, with highly regular structures—settings where contagion-based rewiring procedures may struggle to capture meaningful structural variations.

\section{Conclusion}
We introduced \textsc{Graph Cascades}, a contagion-based mesoscopic rewiring procedure that constructs sparse auxiliary graphs for both message passing and attention. The rewiring improves structural resolution when reinforced multi-hop support is present. We provide theory connecting cascade-style reinforcement to homophily and effective resistance, clarifying how rewiring can improve label agreement while favoring mesoscopically well-supported neighbors. The cascade-rewired neighborhoods also instantiate CR-Graphormer, a sparse transformer whose token sequences are pre-selected by contagion reinforcement; this demonstrates that the auxiliary graph alone is sufficient to drive a sparse-attention GT at fixed sequence length. Across datasets, \textsc{Graph Cascades} improves node classification for multiple GNN and GT backbones, with the largest improvements occurring on heterophilic and moderate- to high-degree homophilic graphs. A regression study further demonstrates that performance aligns more strongly with structural properties after rewiring,  indicating that the rewired structure carries the predictive signal.

\textbf{Limitations.} \textsc{Graph Cascades} is less effective on highly regular, low-degree, or bottlenecked graphs, such as cycles, grids, and chain-like benchmarks including Roman-empire. In these settings, structural homogeneity and sparse connectivity limit the redundant short-path support needed for contagion-based rewiring. Theorem~\ref{thm:walklength} formalizes this requirement, while Appendix~\ref{app:edge-connectivity} shows that fixed-threshold TAS-$\kappa$ can add a nonlocal edge only when the target is already $\kappa$-edge-connected to the seed's closed neighborhood. Thus, when graphs are dominated by narrow cuts rather than cohesive mesoscopic regions, bottleneck-targeted rewiring methods may be preferable; combining these approaches is a natural direction for future work. A second limitation is that \textsc{Graph Cascades} is topology-only by design, making it applicable when features are noisy or scarce; on graphs with strong node features, feature-aware methods such as feature-similarity or joint feature--structure denoising can exploit signal our construction does not use. Integrating cascade-based reinforcement with feature-aware edge selection is left to future work. %A second limitation is that \textsc{Graph Cascades} is topology-only: on graphs with strong and reliable node features, feature-aware rewiring can exploit signals unavailable to our method.

\bibliographystyle{plainnat}
\bibliography{example_paper}

@article{chaitanyaadjacency,
  title={{Adjacency Search Embeddings}},
  author={Chaitanya, Meher and Jaglan, Kshitijaa and Brandes, Ulrik},
  journal={Transactions on Machine Learning Research},
  year={2025}
}

@book{centola-book,
  title={How Behavior Spreads: The Science of Complex Contagions},
  author={Centola, Damon},
  year={2018},
  publisher={Princeton University Press}
}

@inproceedings{bospecformer,
  title={Specformer: Spectral {G}raph {N}eural {N}etworks {M}eet {T}ransformers},
  author={Bo, Deyu and Shi, Chuan and Wang, Lele and Liao, Renjie},
  booktitle={The Eleventh International Conference on Learning Representations},
  year={2023}
}

@article{wu2023sgformer,
  title={{SGFormer: Simplifying and Empowering Transformers for Large-Graph Representations}},
  author={Wu, Qitian and Zhao, Wentao and Yang, Chenxiao and Zhang, Hengrui and Nie, Fan and Jiang, Haitian and Bian, Yatao and Yan, Junchi},
  journal={Advances in Neural Information Processing Systems},
  pages={64753--64773},
  year={2023}
}

@inproceedings{ma2024polyformer,
  title={{Polyformer: Scalable Node-Wise Filters via Polynomial Graph Transformer}},
  author={Ma, Jiahong and He, Mingguo and Wei, Zhewei},
  booktitle={Proceedings of the 30th ACM SIGKDD Conference on Knowledge Discovery and Data Mining},
  pages={2118--2129},
  year={2024}
}

@inproceedings{
zhang2023rethinking,
title={{Rethinking the Expressive Power of {GNN}s via Graph Biconnectivity}},
author={Bohang Zhang and Shengjie Luo and Liwei Wang and Di He},
booktitle={The International Conference on Learning Representations },
year={2023}
}

@article{granovetter1978threshold,
  title={{Threshold Models of Collective Behavior}},
  author={Granovetter, Mark},
  journal={American Journal of Sociology},
  volume={83},
  number={6},
  pages={1420--1443},
  year={1978},
  publisher={University of Chicago Press}
}

@article{lim2022sign,
  title={{Sign and Basis Invariant Networks for Spectral Graph Representation Learning}},
  author={Lim, Derek and Robinson, Joshua and Zhao, Lingxiao and Smidt, Tess and Sra, Suvrit and Maron, Haggai and Jegelka, Stefanie},
  journal={The International Conference on Learning Representations },
   year={2023}
}

@inproceedings{
    kipf17-classifgcnn,
    title={{Semi-Supervised Classification with Graph Convolutional Networks}},
    author={Thomas N. Kipf and Max Welling},
    booktitle={International Conference on Learning Representations},
    year={2017}
}

@article{centola_and_macy,
  title = {{Complex Contagions and the Weakness of Long Ties}},
  author = {Centola, Damon and Macy, Michael},
  journal = {American Journal of Sociology},
  volume = 113,
  number = 3,
  pages = {702--734},
  year = 2007,
  publisher = {The University of Chicago Press}
}

@article{hamilton2017inductive,
  title={{Inductive Representation Learning on Large Graphs}},
  author={Hamilton, Will and Ying, Zhitao and Leskovec, Jure},
  journal={Advances in Neural Information Processing Systems},
  volume={30},
  year={2017}
}

@article{alon2020bottleneck,
title={{On the Bottleneck of Graph Neural Networks and its Practical Implications}},
author={Uri Alon and Eran Yahav},
journal={International Conference on Learning Representations},
year={2021}
}

@article{topping2021understanding,
  title={Understanding over-squashing and bottlenecks on graphs via curvature},
  author={Topping, Jake and Di Giovanni, Francesco and Chamberlain, Benjamin Paul and Dong, Xiaowen and Bronstein, Michael M},
  journal={arXiv preprint arXiv:2111.14522},
  year={2021}
}

@inproceedings{spielman-srivastava,
  title     = {{Graph Sparsification by Effective Resistances}},
  author    = {Daniel A. Spielman and Nikhil Srivastava},
  booktitle = {Proceedings of the 40th Annual ACM Symposium on Theory of Computing},
  pages     = {563--568},
  year      = 2008
}

@article{zhao2021gophormer,
  title={{Gophormer: Ego-Graph Transformer for Node Classification}},
  author={Zhao, Jianan and Li, Chaozhuo and Wen, Qianlong and Wang, Yiqi and Liu, Yuming and Sun, Hao and Xie, Xing and Ye, Yanfang},
  journal={arXiv preprint arXiv:2110.13094},
  year={2021}
}

@article{kreuzer2021rethinking,
  title={{Rethinking Graph Transformers with Spectral Attention}},
  author={Kreuzer, Devin and Beaini, Dominique and Hamilton, Will and L{\'e}tourneau, Vincent and Tossou, Prudencio},
  journal={Advances in Neural Information Processing Systems},
  volume={34},
  pages={21618--21629},
  year={2021}
}

@article{rampavsek2022recipe,
  title={{Recipe for a General, Powerful, Scalable Graph Transformer}},
  author={Ramp{\'a}{\v{s}}ek, Ladislav and Galkin, Michael and Dwivedi, Vijay Prakash and Luu, Anh Tuan and Wolf, Guy and Beaini, Dominique},
  journal={Advances in Neural Information Processing Systems},
  pages={14501--14515},
  year={2022}
}

@inproceedings{fu2024vcr,
  title={{VCR-Graphormer: A Mini-Batch Graph Transformer via Virtual Connections}},
  author={Fu, Dongqi and Hua, Zhigang and Xie, Yan and Fang, Jin and Zhang, Si and Sancak, Kaan and Wu, Hao and Malevich, Andrey and He, Jingrui and Long, Bo},
  booktitle={The International Conference on Learning Representations},
  year={2024}
}

@inproceedings{chen2022structure,
  title={{Structure-Aware Transformer for Graph Representation Learning}},
  author={Chen, Dexiong and O’Bray, Leslie and Borgwardt, Karsten},
  booktitle={International conference on machine learning},
  year={2022},
  organization={PMLR}
}

@inproceedings{kim2022pure,
  title     = {{Pure Transformers are Powerful Graph Learners}},
  author    = {Kim, Jinwoo and Nguyen, Tien Dat and Min, Seonwoo and Cho, Sungjun and Lee, Moontae and Lee, Honglak and Hong, Seunghoon},
  booktitle = {Advances in Neural Information Processing Systems},
  volume    = {35},
  year      = {2022}
}

@inproceedings{morris2021weisfeiler,
  title     = {{Weisfeiler and Leman Go Neural: Higher-Order Graph Neural Networks}},
  author    = {Morris, Christopher and Ritzert, Martin and Fey, Matthias and Hamilton, William L. and Lenssen, Jan Eric and Rattan, Gaurav and Grohe, Martin},
  booktitle = {Proceedings of the AAAI Conference on Artificial Intelligence},
  volume    = {33},
  year      = {2019}
}

@article{dwivedi2021benchmarking,
  title     = {{Benchmarking Graph Neural Networks}},
  author    = {Dwivedi, Vijay Prakash and Joshi, Chaitanya K. and Luu, Anh Tuan and Laurent, Thomas and Bengio, Yoshua and Bresson, Xavier},
  journal   = {Journal of Machine Learning Research},
  volume    = {24},
  number    = {43},
  pages     = {1--68},
  year      = {2023}
}

@inproceedings{choromanski2021rethinking,
  title={{Rethinking Attention with Performers}},
  author={Choromanski, Krzysztof Marcin and Likhosherstov, Valerii and Dohan, David and Song, Xingyou and Gane, Andreea and Sarlos, Tamas and Hawkins, Peter and Davis, Jared Quincy and Mohiuddin, Afroz and Kaiser, Lukasz and Belanger, David Benjamin and Colwell, Lucy and Weller, Adrian},
  booktitle={International Conference on Learning Representations},
  year={2021}
}

@article{wu2022nodeformer,
  title={{NodeFormer: A Scalable Graph Structure Learning Transformer for Node Classification}},
  author={Wu, Qitian and Zhao, Wentao and Li, Zenan and Wipf, David P and Yan, Junchi},
  journal={Advances in Neural Information Processing Systems},
  pages={27387--27401},
  year={2022}
}

@inproceedings{kong2023goat,
  title={{GOAT: A Global Transformer on Large-Scale Graphs}},
  author={Kong, Kezhi and Chen, Jiuhai and Kirchenbauer, John and Ni, Renkun and Bruss, C Bayan and Goldstein, Tom},
  booktitle={International Conference on Machine Learning},
  pages={17375--17390},
  year={2023},
  organization={PMLR}
}

@inproceedings{shirzad2023exphormer,
  title={{Exphormer: Sparse Transformers for Graphs}},
  author={Shirzad, Hamed and Velingker, Ameya and Venkatachalam, Balaji and Sutherland, Danica J and Sinop, Ali Kemal},
  booktitle={International Conference on Machine Learning},
  pages={31613--31632},
  year={2023},
  organization={PMLR}
}

@article{grotschla2024benchmarking,
  title={{Benchmarking Positional Encodings for GNNs and Graph Transformers}},
  author={Gr{\"o}tschla, Florian and Xie, Jiaqing and Wattenhofer, Roger},
  journal={arXiv preprint arXiv:2411.12732},
  year={2024}
}

@article{vaswani2017attention,
  title={{Attention Is All You Need}},
  author={Vaswani, Ashish and Shazeer, Noam and Parmar, Niki and Uszkoreit, Jakob and Jones, Llion and Gomez, Aidan N and Kaiser, {\L}ukasz and Polosukhin, Illia},
  journal={Advances in Neural Information Processing Systems},
  year={2017}
}

@article{velivckovic2017graph,
  title={{Graph {A}ttention {N}etworks}},
   author={Veli{\v{c}}kovi{\'c}, Petar and Cucurull, Guillem and Casanova, Arantxa and Romero, Adriana and Lio, Pietro and Bengio, Yoshua},
journal={International Conference on Learning Representations},
year={2018}
}

@inproceedings{agrawal2024no,
  title={{No Prejudice! Fair Federated Graph Neural Networks for Personalized Recommendation}},
  author={Agrawal, Nimesh and Sirohi, Anuj Kumar and Kumar, Sandeep and others},
  booktitle={Proceedings of the AAAI Conference on Artificial Intelligence},
  volume={38},
  year={2024}
}

@inproceedings{chen2020measuring,
  title={{Measuring and Relieving the Over-Smoothing Problem for Graph Neural Networks from the Topological View}},
  author={Chen, Deli and Lin, Yankai and Li, Wei and Li, Peng and Zhou, Jie and Sun, Xu},
  booktitle={Proceedings of the AAAI Conference on Artificial Intelligence},
  volume={34},
  year={2020}
}

@inproceedings{kong2024spatio,
  title={{Spatio-Temporal Pivotal Graph Neural Networks for Traffic Flow Forecasting}},
  author={Kong, Weiyang and Guo, Ziyu and Liu, Yubao},
  booktitle={Proceedings of the AAAI Conference on Artificial Intelligence},
  volume={38},
  year={2024}
}

@inproceedings{deng2021graph,
  title={{Graph Neural Network-Based Anomaly Detection in Multivariate Time Series}},
  author={Deng, Ailin and Hooi, Bryan},
  booktitle={Proceedings of the AAAI Conference on Artificial Intelligence},
  volume={35},
  year={2021}
}

@article{mialon2021graphit,
  title={{Graphit: Encoding Graph Structure in Transformers}},
  author={Mialon, Gr{\'e}goire and Chen, Dexiong and Selosse, Margot and Mairal, Julien},
  journal={arXiv preprint arXiv:2106.05667},
  year={2021}
}

@inproceedings{zhou2025tokenphormer,
  title={{Tokenphormer: Structure-Aware Multi-Token Graph Transformer for Node Classification}},
  author={Zhou, Zijie and Lu, Zhaoqi and Wei, Xuekai and Chen, Rongqin and Zhang, Shenghui and Ip, Pak Lon and others},
  booktitle={Proceedings of the AAAI Conference on Artificial Intelligence},
  pages={13428--13436},
  year={2025}
}

@article{bi2024make,
  title={Make {H}eterophilic {G}raphs {B}etter {F}it {G}{N}{N}: {A} {G}raph {R}ewiring {A}pproach}, 
  author={Bi, Wendong and Du, Lun and Fu, Qiang and Wang, Yanlin and Han, Shi and Zhang, Dongmei},
  journal={IEEE Transactions on Knowledge and Data Engineering},
  year={2024},
  publisher={IEEE}
}

@inproceedings{guo2023homophily,
  title={{Homophily-oriented Heterogeneous Graph Rewiring}},
  author={Guo, Jiayan and Du, Lun and Bi, Wendong and Fu, Qiang and Ma, Xiaojun and Chen, Xu and Han, Shi and Zhang, Dongmei and Zhang, Yan},
  booktitle={Proceedings of the ACM web conference 2023},
  pages={511--522},
  year={2023}
}

@article{ying2021transformers,
  title={{Do Transformers Really Perform Badly for Graph Representation?}},
  author={Ying, Chengxuan and Cai, Tianle and Luo, Shengjie and Zheng, Shuxin and Ke, Guolin and He, Di and Shen, Yanming and Liu, Tie-Yan},
  journal={Advances in Neural Information Processing Systems},
  pages={28877--28888},
  year={2021}
}

@inproceedings{chennagphormer,
  title={{NAGphormer: A Tokenized Graph Transformer for Node Classification in Large Graphs}},
  author={Chen, Jinsong and Gao, Kaiyuan and Li, Gaichao and He, Kun},
  booktitle={The International Conference on Learning Representations},
  year={2023}
}

@inproceedings{feng2022grand+,
  title={Grand+: Scalable graph random neural networks},
  author={Feng, Wenzheng and Dong, Yuxiao and Huang, Tinglin and Yin, Ziqi and Cheng, Xu and Kharlamov, Evgeny and Tang, Jie},
  booktitle={Proceedings of the ACM Web Conference 2022},
  pages={3248--3258},
  year={2022}
}

@inproceedings{bojchevski2020scaling,
  title={Scaling graph neural networks with approximate pagerank},
  author={Bojchevski, Aleksandar and Gasteiger, Johannes and Perozzi, Bryan and Kapoor, Amol and Blais, Martin and R{\'o}zemberczki, Benedek and Lukasik, Michal and G{\"u}nnemann, Stephan},
  booktitle={Proceedings of the 26th ACM SIGKDD international conference on knowledge discovery \& data mining},
  pages={2464--2473},
  year={2020}
}

@article{dwivedi2020generalization,
  title={{A Generalization of Transformer Networks to Graphs}},
  author={Dwivedi, Vijay Prakash and Bresson, Xavier},
  journal={arXiv preprint arXiv:2012.09699},
  year={2020}
}

@inproceedings{
topping2022understanding,
title={Understanding over-squashing and bottlenecks on graphs via curvature},
author={Jake Topping and Francesco Di Giovanni and Benjamin Paul Chamberlain and Xiaowen Dong and Michael M. Bronstein},
booktitle={International Conference on Learning Representations},
year={2022}
}

@article{arnaiz2022diffwire,
  title = 	 {{DiffWire: Inductive Graph Rewiring via the Lov{\'a}sz Bound}},
  author =       {Arnaiz-Rodr{\'i}guez, Adri{\'a}n and Begga, Ahmed and Escolano, Francisco and Oliver, Nuria M},
  journal = 	 {Proceedings of the First Learning on Graphs Conference},
  year = 	 {2022},
  publisher =    {PMLR}
}

@inproceedings{nguyen2023revisiting,
  title={{Revisiting Over-Smoothing and Over-Squashing using Ollivier-Ricci Curvature}},
  author={Nguyen, Khang and Hieu, Nong Minh and Nguyen, Vinh Duc and Ho, Nhat and Osher, Stanley and Nguyen, Tan Minh},
  booktitle={International Conference on Machine Learning},
  pages={25956--25979},
  year={2023},
  organization={PMLR}
}

@inproceedings{fesser2024mitigating,
  title={{Mitigating Over-Smoothing and Over-Squashing using Augmentations of Forman-Ricci Curvature}},
  author={Fesser, Lukas and Weber, Melanie},
  booktitle={Learning on Graphs Conference},
  pages={19--1},
  year={2024},
  organization={PMLR}
}

@inproceedings{
karhadkar2023fosr,
title={Fo{SR}: First-order spectral rewiring for addressing oversquashing in {GNN}s},
author={Kedar Karhadkar and Pradeep Kr. Banerjee and Guido Montufar},
booktitle={The International Conference on Learning Representations },
year={2023}
}

@article{karhadkar2022fosr,
  title={FoSR: First-order spectral rewiring for addressing oversquashing in GNNs},
  author={Karhadkar, Kedar and Banerjee, Pradeep Kr and Mont{\'u}far, Guido},
  journal={arXiv preprint arXiv:2210.11790},
  year={2022}
}

@article{luan2022revisiting,
  title={Revisiting heterophily for graph neural networks},
  author={Luan, Sitao and Hua, Chenqing and Lu, Qincheng and Zhu, Jiaqi and Zhao, Mingde and Zhang, Shuyuan and Chang, Xiao-Wen and Precup, Doina},
  journal={Advances in neural information processing systems},
  volume={35},
  pages={1362--1375},
  year={2022}
}

@article{zhu2020beyond,
  title={Beyond homophily in graph neural networks: Current limitations and effective designs},
  author={Zhu, Jiong and Yan, Yujun and Zhao, Lingxiao and Heimann, Mark and Akoglu, Leman and Koutra, Danai},
  journal={Advances in neural information processing systems},
  volume={33},
  pages={7793--7804},
  year={2020}
}

@inproceedings{li2022finding,
  title={Finding global homophily in graph neural networks when meeting heterophily},
  author={Li, Xiang and Zhu, Renyu and Cheng, Yao and Shan, Caihua and Luo, Siqiang and Li, Dongsheng and Qian, Weining},
  booktitle={International conference on machine learning},
  pages={13242--13256},
  year={2022},
  organization={PMLR}
}

@article{liang2024sign,
  title={Sign is not a remedy: Multiset-to-multiset message passing for learning on heterophilic graphs},
  author={Liang, Langzhang and Kim, Sunwoo and Shin, Kijung and Xu, Zenglin and Pan, Shirui and Qi, Yuan},
  journal={arXiv preprint arXiv:2405.20652},
  year={2024}
}

@article{song2023ordered,
  title={Ordered gnn: Ordering message passing to deal with heterophily and over-smoothing},
  author={Song, Yunchong and Zhou, Chenghu and Wang, Xinbing and Lin, Zhouhan},
  journal={arXiv preprint arXiv:2302.01524},
  year={2023}
}

@inproceedings{barberolocality,
  title={Locality-Aware Graph Rewiring in GNNs},
  author={Barbero, Federico and Velingker, Ameya and Saberi, Amin and Bronstein, Michael M and Di Giovanni, Francesco},
  booktitle={The Twelfth International Conference on Learning Representations},
  year={2024}
}

@inproceedings{rubiognns,
  title={GNNs Getting ComFy: Community and Feature Similarity Guided Rewiring},
  author={Rubio-Madrigal, Celia and Jamadandi, Adarsh and Burkholz, Rebekka},
  booktitle={The Thirteenth International Conference on Learning Representations},
  year={2025}
}

@inproceedings{linkerhagnerjoint,
  title={Joint Graph Rewiring and Feature Denoising via Spectral Resonance},
  author={Linkerh{\"a}gner, Jonas and Shi, Cheng and Dokmani{\'c}, Ivan},
  booktitle={The Thirteenth International Conference on Learning Representations},
  year={2025}
}

@inproceedings{attali2024delaunay,
  title={Delaunay graph: Addressing over-squashing and over-smoothing using delaunay triangulation},
  author={Attali, Hugo and Buscaldi, Davide and Pernelle, Nathalie},
  booktitle={Forty-first International Conference on Machine Learning},
  year={2024}
}

@inproceedings{attali2025dynamic,
  title={Dynamic Triangulation-Based Graph Rewiring for Graph Neural Networks},
  author={Attali, Hugo and Papastergiou, Thomas and Pernelle, Nathalie and Malliaros, Fragkiskos D},
  booktitle={Proceedings of the 34th ACM International Conference on Information and Knowledge Management},
  pages={87--97},
  year={2025}
}

@inproceedings{yan2022two,
  title={Two sides of the same coin: Heterophily and oversmoothing in graph convolutional neural networks},
  author={Yan, Yujun and Hashemi, Milad and Swersky, Kevin and Yang, Yaoqing and Koutra, Danai},
  booktitle={2022 IEEE International Conference on Data Mining (ICDM)},
  pages={1287--1292},
  year={2022},
  organization={IEEE}
}

@inproceedings{zhengunderstanding,
  title={Understanding and Enhancing Message Passing on Heterophilic Graphs via Compatibility Matrix},
  author={Zheng, Zhuonan and Bei, Yuanchen and Zhou, Zhiyao and Zhou, Sheng and Ma, Yao and Gu, Ming and Xu, Hongjia and Chen, Jiawei and Bu, Jiajun},
  booktitle={The Thirty-ninth Annual Conference on Neural Information Processing Systems},
  year={2025}
}

@article{platonov2023critical,
  title={A critical look at the evaluation of GNNs under heterophily: Are we really making progress?},
  author={Platonov, Oleg and Kuznedelev, Denis and Diskin, Michael and Babenko, Artem and Prokhorenkova, Liudmila},
  journal={arXiv preprint arXiv:2302.11640},
  year={2023}
}

@article{dwivedi2022long,
  title={Long range graph benchmark},
  author={Dwivedi, Vijay Prakash and Ramp{\'a}{\v{s}}ek, Ladislav and Galkin, Michael and Parviz, Ali and Wolf, Guy and Luu, Anh Tuan and Beaini, Dominique},
  journal={Advances in Neural Information Processing Systems},
  volume={35},
  pages={22326--22340},
  year={2022}
}

%%%%%%%%%%%%%%%%%%%%%%%%%%%%%%%%%%%%%%%%%%%%%%%%%%%%%%%%%%%%
\newpage

\appendix

\section{GNNs and GTs: Preliminaries and Related Work}
\subsection{Preliminaries} \label{app:prelim}
\textit{\textbf{GNNs.}}
Let $\mathbf H\in\mathbb R^{n\times d}$ be the node features of $G$, with row $\mathbf h_i\in\mathbb R^d$ for node $i$. A GNN processes $\mathbf H$ through message-passing layers that respect graph sparsity, updating each node's embedding by aggregating over its one-hop neighborhood: $$\mathbf h'_i = \phi\!\left(\mathbf h_i,\ \bigoplus_{j\in N(i)}\psi(\mathbf h_i,\mathbf h_j)\right),$$ where $\psi$ and $\phi$ are learnable and $\bigoplus$ is a permutation-invariant aggregator. Stacking $L$ layers expands the receptive field to $L$-hop neighborhoods at the cost of oversmoothing, oversquashing, and weak long-range interaction \citep{alon2020bottleneck}. Sparse variants restrict message passing to a subset of neighbors, equivalent to running the same update on a rewired graph with neighborhoods $N^{\diamond}(i)$ in place of $N(i)$.

\textit{\textbf{GTs.}}
GTs adopt the standard transformer block: an attention layer followed by a feed-forward operation. Dense variants \citep{ying2021transformers, mialon2021graphit, kreuzer2021rethinking, zhao2021gophormer} treat each node as a token attending to every other node, encoding graph topology through positional encodings such as shortest-path embeddings \citep{ying2021transformers, zhang2023rethinking} or Laplacian eigenvectors \citep{dwivedi2020generalization}. The single-head self-attention update over a node set $N^\diamond(i)$ is 
$$\mathbf h'_i = \sum_{j\in N^\diamond(i)} \alpha_{ij}\, \mathbf v_j, \qquad \alpha_{ij}=\frac{\exp(\mathbf q_i^\top \mathbf k_j/\sqrt{d'})} {\sum_{r\in N^\diamond(i)}\exp(\mathbf q_i^\top \mathbf k_r/\sqrt{d'})},$$
where $\mathbf Q=\mathbf H\mathbf W_Q$, $\mathbf K=\mathbf H\mathbf W_K$, $\mathbf V=\mathbf H\mathbf W_V$ are learnable projections with $\mathbf W_Q,\mathbf W_K,\mathbf W_V\in\mathbb R^{d\times d'}$. Setting $N^\diamond(i)=N(i)$ recovers one-hop attention as in GATs \citep{velivckovic2017graph}. Scalable sparse variants instead define $N^\diamond(\cdot)$ via diffusion- or range-limited neighborhoods (NAGphormer \citep{chennagphormer}, VCR-Graphormer \citep{fu2024vcr}) or fixed expander/landmark patterns (Exphormer \citep{shirzad2023exphormer}).

These architectures rely on either strictly local neighborhoods or sparsified approximations of global interactions, and do not explicitly encode intermediate-scale structure. Our proposed \textsc{Graph Cascades} provide a structural augmentation that induces mesoscopic connectivity patterns, enabling efficient modeling of interactions beyond one-hop neighborhoods in both GNN and GT architectures.

\subsection{Related Work}\label{related_work}

Existing research on GTs can be broadly classified into three categories: 
(i) GT architectures, 
(ii) positional encodings for graphs, and 
(iii) graph rewiring techniques.

\textit{\textbf{GT Architectures.}}
GT models adapt the self-attention mechanism of the original transformer \citep{vaswani2017attention} to graph-structured data. The GT network \citep{dwivedi2020generalization} applies dense self-attention across all $\mathcal{O}(n^{2})$ node pairs, achieving strong performance on several benchmarks. Subsequent architectures introduce graph-specific inductive biases into the attention mechanism: 
Graphormer \citep{ying2021transformers} encodes node centrality and pairwise shortest-path distances; 
GraphiT \citep{mialon2021graphit} employs kernelized relative encodings and explicit sub-path features; and
SAN \citep{kreuzer2021rethinking} and Gophormer \citep{zhao2021gophormer} incorporate edge and node attributes. 
Although these models achieve strong performance across domains such as recommendation, question answering, and bioinformatics, their fully connected attention mechanisms incur quadratic computational cost.
 
To address scalability, a growing body of work focuses on lightweight and sparse-attention GT variants. 
GraphGPS \citep{rampavsek2022recipe} interleaves message passing with global attention and replaces quadratic attention with linear mechanisms \citep{choromanski2021rethinking}, whereas
Exphormer \citep{shirzad2023exphormer} imposes expander-graph sparsity and GOAT \citep{kong2023goat} projects node features into lower-dimensional spaces. 
NodeFormer \citep{wu2022nodeformer} introduces topology-aware relational biases, achieving provably linear complexity in the number of nodes.
A parallel line of work embeds graphs to confine attention to compact, information-rich subsets through sparse tokenization schemes.
TokenGT \citep{kim2022pure} represents nodes and edges as independent tokens with structural positional encodings, achieving competitive or superior performance to GNNs on molecular benchmarks.
Tokenphormer \citep{zhou2025tokenphormer} extends this idea by generating mixed random-walk, hop, and global SGPM tokens for each node, enabling applicability to both homogeneous and heterogeneous graphs; however, its multi-token aggregation lacks an explicit heterophily-aware mechanism and may introduce label noise when neighboring nodes exhibit dissimilar labels or attributes.
PolyFormer \citep{ma2024polyformer} introduces PolyAttn, an attention-based node-wise polynomial graph filter that avoids the expensive positional encodings used by prior node-wise spectral methods, improving scalability.
Building on PolyAttn, PolyFormer attends over per-node polynomial ``tokens'' to efficiently capture spectral information and reports strong performance on both homophilic and heterophilic node-level benchmarks, scaling to very large graphs.
SGFormer \citep{wu2023sgformer} argues that a deep, multi-head GT is not necessary for strong performance: a single-layer global attention model can be surprisingly competitive on node property prediction while being much cheaper to run.
It removes positional encodings and other preprocessing from its design and scales linearly with the number of nodes.
Specformer \citep{bospecformer} is a spectral GNN that replaces traditional scalar spectral filters with a learnable set-to-set spectral filter by encoding all Laplacian eigenvalues and applying self-attention in the spectral domain.
It also introduces a learnable-basis decoder to enable non-local graph convolution, while remaining permutation-equivariant, and reports strong gains on both node- and graph-level benchmarks.
NAGphormer \citep{chennagphormer} constructs fixed-length hop-aggregate token lists and performs localized self-attention within each list, improving mini-batch efficiency and neighborhood encoding.
VCR-Graphormer \citep{fu2024vcr} explicitly targets heterophily by introducing virtual connections and PPR-based token lists that encode both local and global dependencies while maintaining linear complexity per layer.
Nevertheless, its reliance on virtual connections and structural partitioning increases preprocessing overhead and complicates scalability.
\textit{In contrast, our label-free mesoscopic rewiring captures both structural and heterophily-driven relationships without requiring virtual connections or graph partitioning, yielding improved performance on heterophilic graphs, maintaining competitive accuracy on homophilic ones, and providing interpretable insights in cases where improvements are limited.}

\textit{\textbf{Positional Encodings for Graphs.}}
Since graphs lack an intrinsic notion of order, GTs must rely on positional encodings to inject structural information. 
Laplacian-based encodings (LapPE) \citep{dwivedi2020generalization}, full-spectrum methods \citep{kreuzer2021rethinking}, and sign-invariant variants such as SignNet \citep{lim2022sign} assign each node absolute coordinates in a spectral space, but can be sensitive to eigenvector permutations.
Random-walk and diffusion-based encodings (RWSE, RWDiffusion, RRWP) \citep{dwivedi2021benchmarking, grotschla2024benchmarking} capture multi-scale proximity patterns, while shortest-path or distance encodings underpin Graphormer's attention biases \citep{ying2021transformers}.
Weisfeiler--Lehman subtree encodings (WL-PE) \citep{morris2021weisfeiler} and edge-aware learnable schemes such as PureGT's relative positional encodings \citep{kim2022pure} further enrich positional signals.
Systematic studies \citep{rampavsek2022recipe} demonstrate that combining complementary positional encodings (e.g., Laplacian and shortest-path encodings) yields robust gains across heterogeneous benchmarks. Overall, well-designed positional encodings are now recognized as a primary driver of GT performance, enabling discrimination of node roles, distances, and higher-order structural motifs.

\textit{\textbf{Graph Rewiring Techniques.}}
Beyond architectural design, graph rewiring has emerged as an effective strategy for improving learning on challenging graph structures.
HDHGR \citep{guo2023homophily} and DHGR \citep{bi2024make} rely on (pseudo-)labels to add homophilic edges and remove heterophilic ones, which can discard informative cross-class links and make performance dependent on label quality and availability.
\emph{In contrast, our mesoscopic rewiring approach is entirely label-free. It connects node pairs reinforced by multi-path short walks (Theorem~\ref{thm:walklength}) while pruning weak or noisy edges, yielding consistent gains on heterophilic graphs (Proposition~\ref{prop:bayes-monotone-matrix} and Theorem \ref{thm:sbm-profile-homophily}) and remains competitive on many homophilic graphs, with predictable failures on low-degree regular structures. Our method integrates seamlessly with GTs using sparse attention or as a standalone GT, without introducing virtual edges or requiring graph partitioning.} Curvature-based approaches \citep{topping2022understanding, nguyen2023revisiting, fesser2024mitigating} and differentiable or spectral rewiring methods \citep{arnaiz2022diffwire, karhadkar2023fosr} typically aim to increase algebraic connectivity to facilitate information flow.
\emph{By contrast, our approach does not raise global connectivity or introduce shortcuts between weakly connected regions; instead, it emphasizes walk-reinforced \emph{mesoscopic} ties that better capture both structural and heterophily-driven relationships.} PPRGo \citep{bojchevski2020scaling} is a scalable GNN that decouples local feature transformation from graph propagation and then propagates predictions using precomputed sparse approximations of personalized PageRank (PPR), avoiding expensive multi-hop message passing or per-step power iteration while keeping long-range information. It is not ``rewiring'' in the strict topology-editing sense (i.e., explicitly adding/removing edges in the original graph), but it does effectively replace the adjacency-based propagation with a sparsified PPR diffusion operator which is equivalent to working on an implicit new weighted neighborhood graph defined by top-PPR mass—so it is reasonable to describe it as an implicit diffusion-based rewiring mechanism. 
GRAND+ \cite{feng2022grand+} is a scalable version of GRAND that pre-computes a (generalized) mixed-order propagation matrix using a generalized forward push (GFPush) algorithm, so it can do GRAND-style random propagation graph augmentations in mini-batches, and it adds a confidence-aware consistency loss to improve generalization. It is not a formal method, i.e., it does not edit the original edge set, but because it replaces adjacency-based propagation with a learned/selected propagation matrix (a diffusion operator), it effectively induces an implicit weighted ``rewired'' neighborhood structure for propagation/augmentation. Our mesoscopic rewiring conceptually falls under the same ``rewiring'' umbrella as diffusion-based methods (e.g., PPRGo/GRAND+), but it is explicit topology construction rather than an implicit propagation operator.

\textit{\textbf{Positioning Relative to Recent Rewiring Directions.}}
Recent rewiring methods can be organized along two axes: the structural objective (bottleneck mitigation vs. neighborhood realignment) and the information used (purely structural vs. feature- or label-aware). Curvature-based~\citep{topping2022understanding, nguyen2023revisiting, fesser2024mitigating} and spectral~\citep{karhadkar2023fosr, arnaiz2022diffwire} methods are structural and target spectral bottlenecks to alleviate oversquashing. Locality-aware rewiring~\citep{barberolocality} preserves local geometry while adding shortcuts. Feature-aware methods take a complementary route: Delaunay-based approaches~\citep{attali2024delaunay, attali2025dynamic} replace the graph via feature-space triangulation, joint spectral-resonance methods~\citep{linkerhagnerjoint} co-optimize feature denoising and rewiring, and community–feature similarity methods~\citep{rubiognns} combine structural and feature signals. These methods use strictly more information than the original graph topology and are most appropriate when features are reliable and informative.

\textsc{Graph Cascades} occupies a distinct position on both axes. It is purely structural and label-free, like FoSR \citep{karhadkar2023fosr} and SDRF \citep{topping2022understanding}, but its objective is neighborhood realignment via reinforced multi-hop co-activation rather than bottleneck mitigation. Theorem~\ref{thm:walklength} formalizes this: cascade activations are supported by $\kappa$ edge-disjoint short paths, so the rewiring concentrates on cohesive mesoscopic regions rather than spanning weak cuts. Empirically, this distinction is reflected in where cascades help most --- heterophilic graphs whose label structure is carried by reinforced multi-hop neighborhoods --- and where they do not (chain-like or grid-like structures, where bottleneck-targeted methods remain more appropriate). \textsc{Graph Cascades} is therefore best understood as complementary to existing rewiring directions rather than as a replacement for any of them.

\section{Proofs of Theoretical Results}

In this section, we provide proofs of some theoretical results introduced in the main body.

\subsection{Proof of Proposition~\ref{prop:bayes-monotone-matrix}}
\label{proof1}

By Bayes' rule,
\[
    h_G
    =
    \Pr{\mathcal S\mid \mathcal E}
    =
    \frac{\Pr{\mathcal E\mid \mathcal S}\Pr{\mathcal S}}
         {\Pr{\mathcal E}},
\]
and
\[
    h_\mathcal F
    =
    \Pr{\mathcal S\mid \mathcal F}
    =
    \frac{\Pr{\mathcal F\mid \mathcal S}\Pr{\mathcal S}}
         {\Pr{\mathcal F}}.
\]
By Assumption~\ref{as2},
\[
    \Pr{\mathcal F\mid \mathcal S}
    \ge
    \Pr{\mathcal E\mid \mathcal S}.
\]
Therefore,
\[
    h_\mathcal F
    \ge
    \frac{\Pr{\mathcal E\mid \mathcal S}\Pr{\mathcal S}}
         {\Pr{\mathcal F}}
    =
    h_G\frac{\Pr{\mathcal E}}{\Pr{\mathcal F}}.
\]
If Assumption~\ref{as3} also holds, then
\[
    \frac{\Pr{\mathcal E}}{\Pr{\mathcal F}}\ge 1,
\]
and so
\[
    h_\mathcal F\ge h_G.
\]

If Assumption~\ref{as2} is strict, then the first inequality above is strict,
so \(h_\mathcal F>h_G\) whenever Assumption~\ref{as3} holds. If
Assumption~\ref{as3} is strict and \(h_G>0\), then
\[ h_G\frac{\Pr{\mathcal E}}{\Pr{\mathcal F}}
    >
    h_G,
\]
and therefore, \(h_\mathcal F>h_G\).

\subsection{Proof of Theorem \ref{thm:sbm-profile-homophily}} \label{profile_proof}

We first state the lemma used in the proof for completeness.

\begin{lemma}[Chernoff/Bernstein Bounds for Poisson-Binomial Sums]
\label{lem:tails}
Let $X=\sum_i X_i$, where the $X_i$ are independent Bernoulli random variables
with mean $\mu=\E X$. Then, for $0\le t\le \mu$,
\[
\P{X-\mu\le -t}\le \exp\left(-\frac{t^2}{2\mu}\right),
\]
and, for $t\ge 0$,
\[
\P{X-\mu\ge t}\le \exp\left(-\frac{t^2}{2\mu+2t/3}\right).
\]
\end{lemma}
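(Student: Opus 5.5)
We prove both tails with the exponential-moment (Chernoff) method. The only property of the summands we use is the one-line bound $1+x\le e^{x}$ applied to each Bernoulli moment generating function. Write $p_i=\E X_i$, so $\mu=\sum_i p_i$. For any real $\lambda$, independence gives
\[
\E e^{\lambda X}=\prod_i\bigl(1+p_i(e^{\lambda}-1)\bigr)\le \exp\bigl(\mu(e^{\lambda}-1)\bigr).
\]
This domination by a Poisson$(\mu)$ moment generating function is the key step. After it, both tails reduce to scalar inequalities in the relative deviation $\delta:=t/\mu$. If $\mu=0$, then $X=0$ almost surely and both statements are trivial (interpreting $t^2/0$ appropriately), so assume $\mu>0$.

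\textbf{Lower tail.} Take $\lambda>0$ and apply Markov's inequality to $e^{-\lambda X}$:
\[
\P{X\le \mu-t}\le \exp\bigl(\mu(e^{-\lambda}-1)+\lambda(\mu-t)\bigr).
\]
For $0\le\delta<1$ we choose $\lambda=-\ln(1-\delta)$. This yields the bound
\[
\exp\bigl(-\mu\,[\delta+(1-\delta)\ln(1-\delta)]\bigr).
\]
It then suffices to show $g(\delta):=\delta+(1-\delta)\ln(1-\delta)\ge \delta^2/2$ on $[0,1)$. Let $\phi(\delta)=g(\delta)-\delta^2/2$. We have $\phi(0)=\phi'(0)=0$, and $\phi''(\delta)=\frac{1}{1-\delta}-1\ge 0$, so $\phi\ge0$. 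Since $\mu\delta^2/2=t^2/(2\mu)$, this is exactly the claimed bound. The endpoint $t=\mu$ (i.e.\ $\delta=1$) needs separate treatment. Either let $\lambda\to\infty$ directly, which gives $\P{X=0}\le e^{-\mu}\le e^{-\mu/2}$, or use continuity of $g$, noting that $(1-\delta)\ln(1-\delta)\to0$.

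\textbf{Upper tail.} Take $\lambda=\ln(1+\delta)>0$ with $\delta=t/\mu\ge0$. The same Markov argument gives
\[
\P{X\ge \mu+t}\le \exp\bigl(-\mu\,h(\delta)\bigr),\qquad h(\delta):=(1+\delta)\ln(1+\delta)-\delta.
\]
The target $\exp\bigl(-t^2/(2\mu+2t/3)\bigr)$ equals $\exp\bigl(-\mu\,\delta^2/(2+2\delta/3)\bigr)$. So it remains to prove the classical Bernstein-type inequality
\[
h(\delta)\ge \frac{\delta^2}{2(1+\delta/3)}\qquad(\delta\ge0).
\]
I expect this elementary inequality to be the only real obstacle. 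My first attempt is a derivative comparison. Set $\psi(\delta)=h(\delta)-\frac{3\delta^2}{6+2\delta}$. Then $\psi(0)=\psi'(0)=0$, and after clearing denominators
\[
\psi''(\delta)=\frac{1}{1+\delta}-\frac{27}{(3+\delta)^3}.
\]
The claim $\psi''(\delta)\ge0$ is equivalent to $(3+\delta)^3\ge 27(1+\delta)$. Expanding, this is $\delta^2(9+\delta)\ge0$, which holds for $\delta\ge0$. Integrating twice from $0$ then gives $\psi\ge0$ and closes the argument.

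As a fallback, one could instead quote Bernstein's inequality for the centered variables $Y_i=X_i-p_i$. These satisfy $|Y_i|\le1$ and $\sum_i\operatorname{Var}(Y_i)=\sum_i p_i(1-p_i)\le\mu$. Bernstein's inequality for such variables yields precisely $\exp\bigl(-t^2/(2\mu+2t/3)\bigr)$.
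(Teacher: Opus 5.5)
Your proof is correct. The paper's own proof is a two-line citation: the multiplicative Chernoff bound $\mathbb{P}(X\le(1-\delta)\mu)\le e^{-\delta^2\mu/2}$ for the lower tail, and Bernstein's inequality with $\mathrm{Var}(X)\le\mu$ and $|X_i-p_i|\le 1$ for the upper tail. Your stated fallback is exactly that second citation.

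\emph{Lower tail.} Here you are simply proving the cited Chernoff bound from scratch. The convexity argument for $g(\delta)\ge\delta^2/2$, together with the separate treatment of $\delta=1$ via $\mathbb{P}(X=0)\le e^{-\mu}$, is sound.

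\emph{Upper tail.} Here your route differs in mechanism. You dominate each Bernoulli moment generating function by a Poisson one, which gives the Bennett-type bound $\exp(-\mu h(t/\mu))$. You then reduce to the scalar inequality $h(\delta)\ge\delta^2/(2(1+\delta/3))$. I checked the key computations: $\psi''(\delta)=\frac{1}{1+\delta}-\frac{27}{(3+\delta)^3}$ and $(3+\delta)^3-27(1+\delta)=\delta^2(9+\delta)$ are both right. Together with $\psi(0)=\psi'(0)=0$, this closes the argument.

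\emph{Comparison.} Your approach gives a single mechanism for both tails, uses no external theorem, and yields the intermediate bound $\exp(-\mu h(t/\mu))$, which is sharper than the stated Bernstein form for $t>0$. The paper's citation buys brevity. Bernstein's inequality could also use the true variance $\sum_i p_i(1-p_i)$ in place of $\mu$, but the lemma does not need that, since it is stated with $\mu$.
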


\begin{proof}
The lower-tail inequality follows from the multiplicative Chernoff bound
$\P{X\le (1-\delta)\mu}\le \exp(-\delta^2\mu/2)$ with $t=\delta\mu$.
The upper-tail inequality follows from Bernstein's inequality using
$\mathrm{Var}(X)\le \mu$ and $|X_i-\E X_i|\le 1$.
\end{proof}
    
We prove Theorem \ref{thm:sbm-profile-homophily} in five steps. 

\paragraph{Step 1: Population Identity.}
Fix two distinct nodes $u,v$ with $y_u=a$ and $y_v=b$. By definition,
\[
 s_2(u,v)=|N(u)\cap N(v)|=\sum_{x\in V\setminus\{u,v\}} A_{ux}A_{vx}.
\]
Conditional on the labels, the summands are independent across $x$, because
different $x$'s involve disjoint unordered edge variables. If $y_x=c$, then
\begin{align*}
\E{A_{ux}A_{vx}\mid y_u=a,y_v=b,y_x=c}
&=\E{A_{ux}\mid y_u=a,y_x=c}\,\E{A_{vx}\mid y_v=b,y_x=c}
\\
&=B_{ac}B_{bc}.
\end{align*}
Therefore,
\begin{align*}
\mu_{ab}^{(n)}
&=\sum_{c=1}^C\bigl(n_c-\ind{c=a}-\ind{c=b}\bigr)B_{ac}B_{bc}=n\sum_{c=1}^C\pi_c B_{ac}B_{bc}+O(1)\\
&=n(BD_\pi B^\top)_{ab}+O(1).
\end{align*}
Equivalently,
\[
\frac{1}{n}\mu_{ab}^{(n)}=(BD_\pi B^\top)_{ab}+O(1/n).
\]
Define the finite-sample profile margin and the maximum two-hop mean by
\[
\gamma_n:=\min_{a\in[C]}\left(\mu_{aa}^{(n)}-\max_{b\ne a}\mu_{ab}^{(n)}\right),
\qquad
\mu_{\max}:=\max_{a,b\in[C]}\mu_{ab}^{(n)}.
\]
By the population identity and $\Delta_2>0$,
\[
\gamma_n=n\Delta_2+O(1),\qquad \mu_{\max}=O(n).
\]
Thus, for all sufficiently large $n$,
\begin{equation}
\label{eq:finite-sample-conditions}
\gamma_n^2\ge 48\mu_{\max}\log n,
\qquad
\gamma_n\ge 16\log n.
\end{equation}
We assume from now on that $n$ is large enough for
\eqref{eq:finite-sample-conditions} to hold.

\paragraph{Step 2: Per-Class Midpoint Thresholds.}
For each class $a\in[C]$, define the midpoint threshold
\[
\kappa_a^\star:=\frac{\mu_{aa}^{(n)}+\max_{b\ne a}\mu_{ab}^{(n)}}{2}.
\]
Then
\[
\mu_{aa}^{(n)}-\kappa_a^\star
=\frac{\mu_{aa}^{(n)}-\max_{b\ne a}\mu_{ab}^{(n)}}{2}
\ge \frac{\gamma_n}{2}.
\]
For every $b\ne a$,
\[
\kappa_a^\star-\mu_{ab}^{(n)}
\ge \kappa_a^\star-\max_{b'\ne a}\mu_{ab'}^{(n)}
=\frac{\mu_{aa}^{(n)}-\max_{b'\ne a}\mu_{ab'}^{(n)}}{2}
\ge \frac{\gamma_n}{2}.
\]
Thus, one threshold $\kappa_a^\star$ separates the same-class population mean
from all different-class population means for seed class $a$, with margin at
least $\gamma_n/2$ on both sides.

\paragraph{Step 3: Concentration Around the Midpoint.}
Fix distinct $u,v$ with $y_u=a$ and $y_v=b$. Then $s_2(u,v)$ is a sum of
independent Bernoulli variables with mean $\mu_{ab}^{(n)}\le \mu_{\max}$.

\medskip\noindent
\emph{Same-class case.}
Assume $b=a$. Let
\[
 t_{\mathrm{same}}:=\mu_{aa}^{(n)}-\kappa_a^\star\ge \gamma_n/2.
\]
Since $\kappa_a^\star\ge 0$, we have $t_{\mathrm{same}}\le \mu_{aa}^{(n)}$, so
Lemma~\ref{lem:tails} applies:
\begin{align*}
\P{s_2(u,v)\le \kappa_a^\star\mid y}
&\le \exp\left(-\frac{t_{\mathrm{same}}^2}{2\mu_{aa}^{(n)}}\right) \le \exp\left(-\frac{\gamma_n^2}{8\mu_{\max}}\right).
\end{align*}
By \eqref{eq:finite-sample-conditions}, the exponent is at most $-6\log n$
and therefore
\[
\P{s_2(u,v)\le \kappa_a^\star\mid y}\le n^{-6}\le n^{-4}.
\]

\medskip\noindent
\emph{Different-class case.}
Assume $b\ne a$. Let
\[
 t_{\mathrm{diff}}:=\kappa_a^\star-\mu_{ab}^{(n)}\ge \gamma_n/2.
\]
By Lemma~\ref{lem:tails},
\[
\P{s_2(u,v)\ge \kappa_a^\star\mid y}
\le \exp\left(-\frac{t_{\mathrm{diff}}^2}{2\mu_{ab}^{(n)}+2t_{\mathrm{diff}}/3}\right).
\]
For fixed $\mu\ge 0$, the function
\[
\phi_\mu(t):=\frac{t^2}{2\mu+2t/3},\qquad t>0,
\]
is increasing, since
\[
\phi_\mu'(t)=\frac{4\mu t+(2/3)t^2}{(2\mu+2t/3)^2}>0.
\]
Using $t_{\mathrm{diff}}\ge \gamma_n/2$ and $\mu_{ab}^{(n)}\le\mu_{\max}$,
\begin{align*}
\frac{t_{\mathrm{diff}}^2}{2\mu_{ab}^{(n)}+2t_{\mathrm{diff}}/3}
&\ge \frac{(\gamma_n/2)^2}{2\mu_{ab}^{(n)}+\gamma_n/3}=\frac{\gamma_n^2}{8\mu_{ab}^{(n)}+(4/3)\gamma_n}
\ge \frac{\gamma_n^2}{8\mu_{\max}+(4/3)\gamma_n}.
\end{align*}
We show that this last quantity is at least $4\log n$. It is enough to prove
\[
\gamma_n^2\ge 4\log n\bigl(8\mu_{\max}+(4/3)\gamma_n\bigr)
=32\mu_{\max}\log n+(16/3)\gamma_n\log n.
\]
Since $\gamma_n\ge16\log n$,
\[
(16/3)\gamma_n\log n\le \gamma_n^2/3.
\]
Also $\gamma_n^2\ge48\mu_{\max}\log n$ implies
\[
32\mu_{\max}\log n\le (2/3)\gamma_n^2.
\]
Therefore,
\[
32\mu_{\max}\log n+(16/3)\gamma_n\log n
\le (2/3)\gamma_n^2+(1/3)\gamma_n^2
=\gamma_n^2.
\]
Hence,
\[
\P{s_2(u,v)\ge \kappa_a^\star\mid y}\le n^{-4}.
\]

\paragraph{Step 4: Union Bound Over Ordered Pairs.}
Define the favorable event
\[
\mathcal F:=
\bigcap_{\substack{u,v\in V:\,u\ne v,\ y_u=y_v}}
\{s_2(u,v)>\kappa_{y_u}^\star\}
\cap
\bigcap_{\substack{u,v\in V:\,u\ne v,\ y_u\ne y_v}}
\{s_2(u,v)<\kappa_{y_u}^\star\}.
\]
There are at most $n(n-1)\le n^2$ ordered pairs $(u,v)$, and each corresponding
failure event has probability at most $n^{-4}$. Thus,
\[
\P{\mathcal F^c\mid y}\le n^2 n^{-4}=n^{-2}\le 2n^{-2}.
\]
Therefore, $\P{\mathcal F\mid y}\ge1-2n^{-2}$.

\paragraph{Step 5: Uniform Separation and Top-$k$ Homophily.}
On $\mathcal F$, fix any seed $u$ with $y_u=a$. For every same-class candidate
$v$ and every different-class candidate $w$,
\[
 s_2(u,v)>\kappa_a^\star>s_2(u,w).
\]
Hence, every same-class candidate has strictly larger two-hop score than every
different-class candidate relative to seed $u$.

If $k\le \min_a(n_a-1)$, every seed $u$ has at least $k$ other nodes in its own
class. Hence, the directed top-$k$ rule from $u$ selects only same-label nodes.
Symmetrization cannot create a cross-label edge, because an undirected edge
$\{u,v\}$ appears only if at least one of the directed edges $u\to v$ or
$v\to u$ was selected, and every selected directed edge is same-label.
Consequently, $h_{G_2^\star(k)}=1$.

\subsubsection{Edge Homophily of the Original SBM}\label{proof_sbm}

\begin{corollary}[Limiting Edge Homophily of the Original SBM]
\label{cor:original-sbm-homophily}
Under the SBM setting above, assume $\sum_{a,b=1}^C \pi_a \pi_b B_{ab} > 0$.
Then the edge homophily of the original graph satisfies the following: $ h_G \xrightarrow{p} \frac{\sum_{a=1}^C \pi_a^2 B_{aa}} {\sum_{a,b=1}^C \pi_a\pi_b B_{ab}},$
where $\xrightarrow{p}$ denotes convergence in probability. In particular, the
limit is strictly below $1/2$ whenever
$\sum_{a=1}^C \pi_a^2 B_{aa} < \sum_{a\neq b} \pi_a \pi_b B_{ab}$.
\end{corollary}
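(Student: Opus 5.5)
The limit is a ratio $h_G = M_{\mathrm{same}}/M$, where $M=|E|$ is the number of edges and $M_{\mathrm{same}}=\sum_{\{u,v\}\in E}\mathbf 1\{y_u=y_v\}$ is the number of same-label edges. I would prove a weak law of large numbers for the numerator and the denominator separately, and then combine them with the continuous mapping theorem, since $(x,y)\mapsto x/y$ is continuous wherever $y>0$. Throughout I condition on the labels. Block sizes are then deterministic, $n_a=\pi_a n$, and the $\binom n2$ edge indicators $A_{uv}$ are independent Bernoulli variables with means $B_{y_uy_v}$.

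\emph{Step 1: expectations.} Counting pairs,
\[
\E{M_{\mathrm{same}}}=\sum_a \binom{n_a}{2}B_{aa}=\tfrac{n^2}{2}\sum_a\pi_a^2B_{aa}+O(n).
\]
For the total,
\[
\E{M}=\sum_a\binom{n_a}{2}B_{aa}+\sum_{a<b}n_an_bB_{ab}=\tfrac{n^2}{2}\sum_{a,b}\pi_a\pi_bB_{ab}+O(n),
\]
where the last equality uses the symmetry of $B$ to convert the sum over $a<b$ into half of the sum over ordered pairs $a\neq b$.

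\emph{Step 2: concentration.} Both $M_{\mathrm{same}}$ and $M$ are sums of independent Bernoulli variables, so each variance is at most its mean, which is $O(n^2)$. Chebyshev's inequality then gives
\[
\frac{2}{n^2}M_{\mathrm{same}}\xrightarrow{p}\sum_a\pi_a^2B_{aa},\qquad \frac{2}{n^2}M\xrightarrow{p}\sum_{a,b}\pi_a\pi_bB_{ab}.
\]
The deviations are $O_p(n)$, which is $o(n^2)$. Alternatively, Lemma~\ref{lem:tails} yields exponential bounds.

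\emph{Step 3: the ratio.} The denominator limit $D:=\sum_{a,b}\pi_a\pi_bB_{ab}$ is positive by assumption. Hence $\P{M=0}\to0$, so the convention for $h_G$ when $E=\varnothing$ does not matter asymptotically. The continuous mapping theorem (or Slutsky's theorem) then gives $h_G\xrightarrow{p}\sum_a\pi_a^2B_{aa}/D$. Because $C$, $\pi$ and $B$ do not depend on $n$, conditioning on the labels does no harm; if the labels are instead drawn i.i.d.\ from $\pi$, one adds $n_a/n\to\pi_a$ almost surely and applies the same argument.

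\emph{Step 4: the ``in particular'' clause.} Write $D=N+X$ with $N=\sum_a\pi_a^2B_{aa}$ and $X=\sum_{a\neq b}\pi_a\pi_bB_{ab}$. Then $N/(N+X)<1/2$ is equivalent to $N<X$.

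The main obstacle is only bookkeeping: keeping track of the factor of $2$ that comes from counting unordered versus ordered pairs, and making sure $M>0$ with high probability. Neither step is deep.
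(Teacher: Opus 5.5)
Your proposal is correct and follows essentially the same route as the paper: compute $\mathbb{E}[M_{\mathrm{same}}]$ and $\mathbb{E}[M]$ up to $O(n)$ corrections, use $\mathrm{Var}\le$ mean together with Chebyshev, and conclude with the continuous mapping theorem. The differences are cosmetic. Normalizing by the deterministic factor $n^2/2$ instead of by $\mathbb{E}[M_{\mathrm{same}}]$ absorbs the degenerate case $\sum_a \pi_a^2 B_{aa}=0$, which the paper handles separately, and you also spell out the elementary equivalence $N/(N+X)<1/2 \iff N<X$ that the paper leaves unstated.
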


\begin{proof}
Let $M_{\rm same}$ denote the number of same-label edges and $M$ the total
number of edges, so that $h_G = M_{\rm same}/M$. By independence of edges in
the SBM,
\begin{equation}
\label{eq:Esame}
    \mathbb{E}[M_{\rm same}]
    =
    \sum_{a=1}^C \binom{n_a}{2} B_{aa}
    =
    \frac{n^2}{2}\sum_{a=1}^C \pi_a^2 B_{aa}+O(n),
\end{equation}
and
\begin{equation}
\label{eq:Etot}
    \mathbb{E}[M]
    =
    \sum_{a=1}^C \binom{n_a}{2} B_{aa}
    +
    \sum_{a<b} n_a n_b B_{ab}
    =
    \frac{n^2}{2}\sum_{a,b=1}^C \pi_a\pi_b B_{ab}+O(n).
\end{equation}
Both $M_{\rm same}$ and $M$ are sums of independent Bernoulli random variables,
so
\[
\mathrm{Var}(M)\le \mathbb{E}[M],
\qquad
\mathrm{Var}(M_{\rm same})\le \mathbb{E}[M_{\rm same}].
\]
Since $\sum_{a,b}\pi_a\pi_b B_{ab}>0$, \eqref{eq:Etot} gives
$\mathbb{E}[M]=\Theta(n^2)$, so Chebyshev's inequality implies
$M/\mathbb{E}[M]\xrightarrow{p} 1$. If $\sum_a \pi_a^2 B_{aa}>0$, the same
argument applied to \eqref{eq:Esame} gives
$M_{\rm same}/\mathbb{E}[M_{\rm same}]\xrightarrow{p} 1$, and the
continuous-mapping theorem yields 
\begin{equation}
\label{eq:original-sbm-homophily}
    h_G
    \xrightarrow{p}
    \frac{\sum_{a=1}^C \pi_a^2 B_{aa}}
         {\sum_{a,b=1}^C \pi_a\pi_b B_{ab}}.
\end{equation}If
$\sum_a \pi_a^2 B_{aa}=0$, then $B_{aa}=0$ for every $a$ with $\pi_a>0$, so
$M_{\rm same}=0$ deterministically and \eqref{eq:original-sbm-homophily}
still holds with limit zero.
\end{proof}

\subsection{Learnability After Cascade Rewiring}
\label{learnability}
\begin{corollary}[Learnability After Cascade Rewiring]\label{cor:learnability}
Assume the setting and profile-margin condition of
Theorem~\ref{thm:sbm-profile-homophily}, and let \(G_2^\star(k)\) be the symmetrized top-\(k\) two-hop cascade-rewired graph, with $k\le \min_{a\in[C]}(n_a-1)$. Suppose, independently of the graph conditional on the labels, node features are generated by $X_i = \mu_{y_i}+\sigma \xi_i, \; \xi_i\sim \mathcal N(0,I_d)$, where \(\mu_1,\dots,\mu_C\in\mathbb{R}^d\), \(\sigma>0\), and $\Delta := \min_{a\neq b}\|\mu_a-\mu_b\|_2 >0$. For each node \(i\), let \(N_\star(i)\) denote its neighborhood in \(G_2^\star(k)\), and define $Z_i := \frac{1}{q_i} \sum_{j\in \{i\}\cup N_\star(i)} X_j, \; q_i := |\{i\}\cup N_\star(i)|$. Consider the nearest-centroid affine linear classifier $\widehat y_i := \arg\max_{a\in[C]} \left\{ \mu_a^\top Z_i-\frac12\|\mu_a\|_2^2 \right\}$.
Then, with probability at least $1 - 2n^{-2} - n(C-1)\exp\left( -\frac{(k+1)\Delta^2}{8\sigma^2} \right)$, all nodes are classified correctly. In particular, if $\frac{(k+1)\Delta^2}{8\sigma^2} \ge \log n+\log(C-1)+r_n$ for some \(r_n\to\infty\), then $\Pr{\widehat y_i=y_i\ \text{for all }i\in V}\to 1$. Moreover, the expected misclassification fraction satisfies $\mathbb{E}\left[
\frac1n\sum_{i=1}^n \mathbf 1\{\widehat y_i\neq y_i\} \right] \le
2n^{-2} + (C-1)\exp\left( -\frac{(k+1)\Delta^2}{8\sigma^2} \right).
$
\end{corollary}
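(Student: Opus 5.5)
The plan is to condition on the favorable event $\mathcal F$ from the proof of Theorem~\ref{thm:sbm-profile-homophily}. By Step~4 there, $\Pr{\mathcal F^c\mid y}\le 2n^{-2}$. On $\mathcal F$ the graph $G_2^\star(k)$ is perfectly homophilic. Moreover, since each directed top-$k$ rule selects $k$ same-label nodes (using $k\le\min_a(n_a-1)$), every node $i$ has $|N_\star(i)|\ge k$, so $q_i\ge k+1$, and every $j\in\{i\}\cup N_\star(i)$ has $y_j=y_i$. The graph $G_2^\star(k)$ is a function of $A$ alone, and the features are independent of $A$ given the labels. Therefore, conditional on $(y,A)$ with $A\in\mathcal F$, the neighborhood sets are fixed and the noise vectors $\xi_j$ are still i.i.d.\ standard Gaussian. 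Hence $Z_i=\mu_{y_i}+\sigma\bar\xi_i$, where $\bar\xi_i\sim\mathcal N(0,I_d/q_i)$.

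Next, fix $i$ with $y_i=a$. The classifier errs only if, for some $b\ne a$,
\[
\mu_b^\top Z_i-\tfrac12\|\mu_b\|^2\ge\mu_a^\top Z_i-\tfrac12\|\mu_a\|^2 .
\]
Substituting $Z_i$, this is equivalent to
\[
\sigma(\mu_b-\mu_a)^\top\bar\xi_i\ge\tfrac12\|\mu_a-\mu_b\|^2 .
\]
The left-hand side is a centered Gaussian with variance $\sigma^2\|\mu_a-\mu_b\|^2/q_i$. The Gaussian tail bound $\Pr{N(0,1)\ge x}\le e^{-x^2/2}$ then bounds the probability by
\[
\exp\!\bigl(-q_i\|\mu_a-\mu_b\|^2/(8\sigma^2)\bigr)\le\exp\!\bigl(-(k+1)\Delta^2/(8\sigma^2)\bigr).
\]
A union bound over the $C-1$ competitors $b$ gives a per-node error bound of $(C-1)\exp(-(k+1)\Delta^2/(8\sigma^2))$ on $\mathcal F$.

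For the high-probability statement, I take a union bound over the $n$ nodes on $\mathcal F$ and add $\Pr{\mathcal F^c}\le 2n^{-2}$. Integrating over $y$ removes the conditioning. The limit claim is immediate: under the stated condition, $n(C-1)\exp(\cdot)\le e^{-r_n}\to0$. For the expected misclassification fraction I avoid the factor $n$. Write
\[
\mathbb E\bigl[\tfrac1n\textstyle\sum_i\mathbf 1\{\widehat y_i\ne y_i\}\bigr]\le\Pr{\mathcal F^c}+\tfrac1n\textstyle\sum_i\Pr{\widehat y_i\ne y_i,\ \mathcal F},
\]
and bound each summand by the per-node bound.

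The step needing the most care is the conditioning argument. The neighborhoods $N_\star(i)$ are random and data-dependent, so I have to justify that averaging over them preserves the Gaussian law with variance at most $\sigma^2/(k+1)$. This follows because $G_2^\star(k)$ depends only on $A$, which is independent of $\xi$ given $y$. I also have to check that ties in top-$k$ do not reduce $q_i$ below $k+1$; on $\mathcal F$ the strict separation means each seed selects $k$ same-class nodes whenever the class has at least $k+1$ members, which holds by the assumption $k\le\min_a(n_a-1)$.
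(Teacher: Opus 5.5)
Your proposal is correct and follows essentially the same argument as the paper. You condition on the high-probability event from Theorem~\ref{thm:sbm-profile-homophily} and use conditional independence of features and graph given the labels to get $Z_i\sim\mathcal N(\mu_{y_i},\sigma^2 I_d/q_i)$ with $q_i\ge k+1$. You then bound each pairwise comparison by a Gaussian tail and union bound over competing classes and nodes, averaging instead of union bounding for the expected misclassification fraction.

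The differences are cosmetic. You condition on the separation event $\mathcal F$ and on $A$, whereas the paper conditions on the perfect-homophily event $\mathcal E_{\mathrm{hom}}$ and on the realization of $G_2^\star(k)$. You also make the tie-breaking point explicit, where the paper leaves it implicit.
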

\begin{proof}
We prove the result conditional on the label vector \(y=(y_1,\dots,y_n)\). All probabilities below are over the randomness of the graph and the Gaussian features. The same bounds hold unconditionally if the labels are random and independent of the feature noise. Let 
\[  \mathcal E_{\mathrm{hom}} := \left\{ G_2^\star(k)\ \text{is perfectly homophilic} \right\}. \]
By Theorem~\ref{thm:sbm-profile-homophily}, under the stated profile-margin condition, $\Pr{\mathcal E_{\mathrm{hom}}}\ge 1-2n^{-2}$.
Equivalently, $\Pr{\mathcal E_{\mathrm{hom}}^c}\le 2n^{-2}$. We first analyze the classifier on the event \(\mathcal E_{\mathrm{hom}}\). Fix a realization of \(G_2^\star(k)\) for which \(\mathcal E_{\mathrm{hom}}\) holds. Since
\(G_2^\star(k)\) is perfectly homophilic, every edge of \(G_2^\star(k)\) connects two
vertices with the same label. Therefore, for every node \(i\), every vertex in
\(\{i\}\cup N_\star(i)\) has label \(y_i\).

Moreover, because \(G_2^\star(k)\) is the symmetrization of a directed top-\(k\) graph,
node \(i\) is adjacent in the symmetrized graph to at least the \(k\) vertices selected by
the directed top-\(k\) rule from \(i\). Hence,
\[
q_i
=
|\{i\}\cup N_\star(i)|
\ge k+1 .
\]

Fix a node \(i\in V\) and let \(a:=y_i\). On \(\mathcal E_{\mathrm{hom}}\), every
\(j\in \{i\}\cup N_\star(i)\) has label \(a\). Since the features are independent
conditional on the labels, and are independent of the graph conditional on the labels, we
have, conditional on \(G_2^\star(k)\), \(\mathcal E_{\mathrm{hom}}\), and the labels,
\[
X_j \sim \mathcal N(\mu_a,\sigma^2 I_d)
\qquad
\text{independently for all } j\in \{i\}\cup N_\star(i).
\]
Therefore,
\[
Z_i = \frac{1}{q_i} \sum_{j\in \{i\}\cup N_\star(i)} X_j
\sim \mathcal N\left(\mu_a,\frac{\sigma^2}{q_i}I_d\right).
\]
Equivalently, we may write
\[
Z_i=\mu_a+\eta_i, \qquad \eta_i\sim \mathcal N\left(0,\frac{\sigma^2}{q_i}I_d\right).
\]

The classifier is the nearest-centroid classifier because
\[
\arg\max_{c\in[C]} \left\{ \mu_c^\top Z_i-\frac12\|\mu_c\|_2^2 \right\} = \arg\min_{c\in[C]}\|Z_i-\mu_c\|_2^2 .
\]
As
\[
\|Z_i-\mu_c\|_2^2 = \|Z_i\|_2^2 -2\mu_c^\top Z_i +\|\mu_c\|_2^2,
\]
the term \(\|Z_i\|_2^2\) does not depend on \(c\).

Now fix an incorrect class \(b\neq a\). The classifier can prefer \(b\) over the true class
\(a\) only if
\[ \|Z_i-\mu_b\|_2^2 \le \|Z_i-\mu_a\|_2^2 .
\]
Substituting \(Z_i=\mu_a+\eta_i\), and defining
\[
\delta_{ab}:=\mu_a-\mu_b,
\]
this event becomes
\[
\|\eta_i+\delta_{ab}\|_2^2 \le \|\eta_i\|_2^2 .
\]
Expanding both sides gives
\[
\|\eta_i\|_2^2 + 2\delta_{ab}^\top \eta_i + \|\delta_{ab}\|_2^2 \le
\|\eta_i\|_2^2,
\]
or equivalently
\[
\delta_{ab}^\top \eta_i \le -\frac12\|\delta_{ab}\|_2^2 .
\]
Since
\[
\eta_i\sim \mathcal N\left(0,\frac{\sigma^2}{q_i}I_d\right),
\]
the one-dimensional projection satisfies
\[
\delta_{ab}^\top \eta_i \sim \mathcal N\left( 0, \frac{\sigma^2}{q_i}\|\delta_{ab}\|_2^2
\right).
\]
Using the standard Gaussian tail bound
\[
\Pr{W\le -t}\le \exp\left(-\frac{t^2}{2s^2}\right), \qquad W\sim \mathcal N(0,s^2),
\]
with
\[
t=\frac12\|\delta_{ab}\|_2^2, \qquad
s^2=\frac{\sigma^2}{q_i}\|\delta_{ab}\|_2^2,
\]
we obtain
\[ \Pr{ \|Z_i-\mu_b\|_2^2 \le \|Z_i-\mu_a\|_2^2 \ \middle|\  G_2^\star(k),\mathcal E_{\mathrm{hom}},y }
\le \exp\left( -\frac{q_i\|\delta_{ab}\|_2^2}{8\sigma^2} \right).
\]
By the definitions of \(q_i\) and \(\Delta\),
\[
q_i\ge k+1, \qquad \|\delta_{ab}\|_2=\|\mu_a-\mu_b\|_2\ge \Delta.
\]
Hence,
\[
\Pr{ \|Z_i-\mu_b\|_2^2 \le \|Z_i-\mu_a\|_2^2 \ \middle|\  G_2^\star(k),\mathcal E_{\mathrm{hom}},y
} \le \exp\left( -\frac{(k+1)\Delta^2}{8\sigma^2} \right).
\]

Taking a union bound over all incorrect classes \(b\neq a\), we get
\[
\Pr{ \widehat y_i\neq y_i \ \middle|\  G_2^\star(k),\mathcal E_{\mathrm{hom}},y }
\le (C-1) \exp\left( -\frac{(k+1)\Delta^2}{8\sigma^2} \right).
\]
This bound is uniform in the realization of \(G_2^\star(k)\) on
\(\mathcal E_{\mathrm{hom}}\). Therefore,
\[
\Pr{ \widehat y_i\neq y_i \ \middle|\  \mathcal E_{\mathrm{hom}},y } \le
(C-1) \exp\left( -\frac{(k+1)\Delta^2}{8\sigma^2} \right).
\]

Next, taking a union bound over all \(n\) vertices gives
\[
\Pr{ \exists i\in V:\widehat y_i\neq y_i \ \middle|\  \mathcal E_{\mathrm{hom}},y }
\le n(C-1) \exp\left( -\frac{(k+1)\Delta^2}{8\sigma^2} \right).
\]
Therefore,
\[
\begin{aligned}
\Pr{ \exists i\in V:\widehat y_i\neq y_i } &\le \Pr{\mathcal E_{\mathrm{hom}}^c} +
\Pr{ \exists i\in V:\widehat y_i\neq y_i \ \middle|\  \mathcal E_{\mathrm{hom}},y } \\
&\le 2n^{-2} + n(C-1) \exp\left( -\frac{(k+1)\Delta^2}{8\sigma^2} \right).
\end{aligned}
\]
Equivalently, with probability at least
\[
1 - 2n^{-2} - n(C-1) \exp\left( -\frac{(k+1)\Delta^2}{8\sigma^2} \right),
\]
we have
\[ \widehat y_i=y_i \qquad \text{for all } i\in V.
\]

Now suppose
\[ \frac{(k+1)\Delta^2}{8\sigma^2} \ge \log n+\log(C-1)+r_n,
\]
where \(r_n\to\infty\). Then
\[ n(C-1) \exp\left( -\frac{(k+1)\Delta^2}{8\sigma^2} \right) \le n(C-1)\exp\left(-\log n-\log(C-1)-r_n\right)
= e^{-r_n} \to 0.
\]
Since also \(2n^{-2}\to 0\), we conclude that
\[
\Pr{ \widehat y_i=y_i\ \text{for all }i\in V } \to 1.
\]

It remains to prove the expected misclassification fraction bound. Define
\[ L_n := \frac1n\sum_{i=1}^n \mathbf 1\{\widehat y_i\neq y_i\}.
\]
Since \(0\le L_n\le 1\),
\[ \mathbb E[L_n] = \mathbb E[L_n\mathbf 1_{\mathcal E_{\mathrm{hom}}}]
+ \mathbb E[L_n\mathbf 1_{\mathcal E_{\mathrm{hom}}^c}]
\le \mathbb E[L_n\mathbf 1_{\mathcal E_{\mathrm{hom}}}]
+ \Pr(\mathcal E_{\mathrm{hom}}^c).
\]
Using the single-node error bound on \(\mathcal E_{\mathrm{hom}}\),
\[
\begin{aligned}
\mathbb E[L_n\mathbf 1_{\mathcal E_{\mathrm{hom}}}]
&=
\frac1n\sum_{i=1}^n
\Pr{ \widehat y_i\neq y_i,\mathcal E_{\mathrm{hom}} }\\ &\le
\frac1n\sum_{i=1}^n (C-1) \exp\left( -\frac{(k+1)\Delta^2}{8\sigma^2} \right) \\
&=
(C-1) \exp\left( -\frac{(k+1)\Delta^2}{8\sigma^2} \right).
\end{aligned}
\]
Combining this with \(\Pr{\mathcal E_{\mathrm{hom}}^c}\le 2n^{-2}\), we obtain
\[
\mathbb E\left[ \frac1n\sum_{i=1}^n \mathbf 1\{\widehat y_i\neq y_i\} \right]
\le 2n^{-2} +(C-1)\exp\left( -\frac{(k+1)\Delta^2}{8\sigma^2} \right).
\]

\end{proof}

\subsection{Extending Profile Homophily on SBM : $\ell \geq 2$} \label{app:extension}
\begin{lemma}[Hoeffding Bound for Dependency Graphs]
\label{lem:dep-graph-hoeffding}
Let $X_1,\ldots,X_N\in[0,1]$ be random variables with a dependency graph of
maximum degree $\Delta$. That is, whenever two disjoint index sets have no edge
between them in the dependency graph, the corresponding collections of random
variables are independent. Let $S:=\sum_{i=1}^N X_i$. Then, for every $t>0$,
$$\Pr{ |S-\mathbb E S|\ge t } \le 2\exp\left( -\frac{2t^2}{(\Delta+1)N} \right).$$
\end{lemma}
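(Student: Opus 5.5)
This is the Janson-type Hoeffding bound for dependency graphs. The plan is to reduce it to ordinary Hoeffding by splitting the sum into independent blocks, and the tool for that is a proper coloring of the dependency graph.

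\textbf{Step 1: Color the dependency graph.} The dependency graph has maximum degree $\Delta$, so the greedy algorithm gives a proper coloring with at most $\chi\le \Delta+1$ colors. Let the color classes be $I_1,\dots,I_\chi$ and write $N_j:=|I_j|$, so that $\sum_j N_j=N$. Within each class $I_j$ no two indices are adjacent. By the defining property of a dependency graph, applied inductively to singletons, the variables $\{X_i\}_{i\in I_j}$ are mutually independent.

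\textbf{Step 2: Decompose the sum and apply Jensen.} Write $S-\mathbb E S=\sum_j Y_j$ with $Y_j:=\sum_{i\in I_j}(X_i-\mathbb E X_i)$. Choose convex weights $p_j:=N_j/N$, so that $\sum_j Y_j=\sum_j p_j (Y_j/p_j)$. For $\lambda>0$, convexity of the exponential gives
\[
\mathbb E e^{\lambda(S-\mathbb E S)}\le \sum_j p_j\,\mathbb E e^{\lambda Y_j/p_j}.
\]
This step is what lets the colors be handled separately despite the dependence between classes.

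\textbf{Step 3: Bound each block by Hoeffding's lemma.} Inside $I_j$ the summands are independent and lie in intervals of length $1$. Hoeffding's lemma therefore gives
\[
\mathbb E e^{\mu Y_j}\le e^{\mu^2N_j/8}.
\]
Substituting $\mu=\lambda/p_j=\lambda N/N_j$ yields $e^{\lambda^2N^2/(8N_j)}$. This is the main obstacle: unequal class sizes make the small classes costly. To get around it, I will use the weights $p_j$ more cleverly. The better choice is $p_j=w_j/W$ with $w_j=\sqrt{N_j}$ (the Janson trick). Then each term becomes $\exp\bigl(\lambda^2 W^2/8\bigr)$, where $W=\sum_j\sqrt{N_j}$. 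By Cauchy–Schwarz, $W^2\le \chi N\le(\Delta+1)N$. Hence
\[
\mathbb E e^{\lambda(S-\mathbb E S)}\le \exp\bigl(\lambda^2(\Delta+1)N/8\bigr).
\]

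\textbf{Step 4: Chernoff and the two tails.} Markov's inequality, optimized at $\lambda=4t/((\Delta+1)N)$, gives $\Pr{S-\mathbb E S\ge t}\le \exp(-2t^2/((\Delta+1)N))$. The same argument applied to $1-X_i$ gives the lower tail, and a union bound over the two tails produces the factor $2$ in the stated inequality.
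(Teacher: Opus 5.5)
Your proof is correct, and it departs from the paper's only at the decoupling step. The paper applies H\"older's inequality with every exponent equal to $\chi$: $\mathbb E\prod_j e^{\lambda Y_j}\le\prod_j\bigl(\mathbb E e^{\chi\lambda Y_j}\bigr)^{1/\chi}$. Because the right-hand side is a geometric mean, the per-class Hoeffding exponents $\chi^2\lambda^2N_j/8$ are averaged in the exponent, giving $\chi\lambda^2N/8$ with no tuning of weights.

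You instead use pointwise convexity of $\exp$ (Jensen with weights $p_j$), which produces an arithmetic mean of the per-class bounds. An arithmetic mean is controlled by its largest term. That is why your first choice $p_j=N_j/N$ fails (variance proxy $N^2/\min_j N_j$). It is also why uniform weights $p_j=1/\chi$ would be lossy (proxy $\chi^2\max_j N_j\ge\chi N$). So you genuinely need Janson's weights $p_j=\sqrt{N_j}/W$, which make all the terms equal.

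Your route buys an entirely elementary decoupling step and a slightly sharper constant, $W^2=(\sum_j\sqrt{N_j})^2\le\chi N$, with equality only when the color classes are balanced. The paper's route is shorter and lands directly on the stated bound; H\"older with exponents $q_j=W/\sqrt{N_j}$ would recover your sharper constant as well.

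Two small housekeeping points: discard empty color classes before defining $p_j$, and drop the false start with $p_j=N_j/N$ from a final write-up. Your lower-tail reduction via $1-X_i$ and the union bound are fine. Alternatively, Hoeffding's lemma already holds for negative $\lambda$.
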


\begin{proof}
Let $\chi$ be the chromatic number of the dependency graph. Since the maximum
degree is $\Delta$, we have $\chi \le \Delta+1$. Let
$I_1,\ldots,I_\chi$ be a proper coloring of the dependency graph and define
\[
    S_j:=\sum_{i\in I_j}X_i .
\]
Within each color class $I_j$, the variables are mutually independent. For
$\lambda>0$, H\"older's inequality gives
\[
    \mathbb E\exp\left(\lambda(S-\mathbb E S)\right)
    =
    \mathbb E
    \prod_{j=1}^{\chi}
    \exp\left(\lambda(S_j-\mathbb E S_j)\right)
    \le
    \prod_{j=1}^{\chi}
    \left[
        \mathbb E
        \exp\left(\chi\lambda(S_j-\mathbb E S_j)\right)
    \right]^{1/\chi}.
\]
Since the variables in each $S_j$ are independent and lie in $[0,1]$,
Hoeffding's lemma gives
\[
    \mathbb E
    \exp\left(\chi\lambda(S_j-\mathbb E S_j)\right)
    \le
    \exp\left(
        \frac{\chi^2\lambda^2 |I_j|}{8}
    \right).
\]
Therefore,
\[
    \mathbb E\exp\left(\lambda(S-\mathbb E S)\right)
    \le
    \exp\left(
        \frac{\chi\lambda^2}{8}
        \sum_{j=1}^{\chi}|I_j|
    \right)
    =
    \exp\left(
        \frac{\chi\lambda^2 N}{8}
    \right).
\]
Chernoff's bound yields
\[
    \Pr{S-\mathbb E S\ge t}
    \le
    \exp\left(
        -\frac{2t^2}{\chi N}
    \right)
    \le
    \exp\left(
        -\frac{2t^2}{(\Delta+1)N}
    \right).
\]
The lower tail is identical. Combining the two tails proves the claim.
\end{proof}

\subsection{Profile Homophily of $\ell$-Hop Top-$k$ Selection}\label{app:sbm-raw-l-hop}

\begin{theorem}[Profile Homophily of $\ell$-Hop Top-$k$ Selection]
\label{thm:sbm-raw-l-hop}
Let $G\sim\operatorname{SBM}(n,\pi,B)$ be a $C$-block stochastic block model
with fixed block sizes $n_a=\pi_a n$, where $\pi_a>0$ and
$\sum_{a=1}^C\pi_a=1$. Assume $B\in[0,1]^{C\times C}$ is symmetric. Conditional
on the labels, edges are independent and
$\Pr{A_{uv}=1\mid y_u=a,y_v=b}=B_{ab}$.
Let $D_\pi:=\operatorname{diag}(\pi_1,\ldots,\pi_C)$.
Fix an integer $\ell\ge 2$ independent of $n$. For two distinct nodes $u,v$,
define the raw $\ell$-hop reinforcement score $s_\ell(u,v):=(A^\ell)_{uv}$,
i.e., the number of length-$\ell$ walks from $u$ to $v$. Define the
$\ell$-hop profile matrix $K_\ell:=B(D_\pi B)^{\ell-1}$, and assume that
$K_\ell$ separates same-label pairs from different-label pairs:
\[
    \Delta_\ell
    :=
    \min_{a\in[C]}
    \left[
        (K_\ell)_{aa}
        -
        \max_{b\ne a}(K_\ell)_{ab}
    \right]
    >0.
\]
Then, with probability $1-o(1)$, for every seed node $u\in V$ and every pair
of distinct nodes $v,w\in V\setminus\{u\}$ satisfying $y_v=y_u$ and
$y_w\ne y_u$, we have $s_\ell(u,v)>s_\ell(u,w)$. Consequently, for every
$k\le \min_{a\in[C]}(n_a-1)$, the directed top-$k$ graph obtained by
connecting each node $u$ to the $k$ nodes $v\ne u$ with largest raw score
$s_\ell(u,v)$ contains only same-label directed edges. Its symmetrization is
therefore perfectly homophilic: $h_{G_\ell^\star(k)}=1$ with probability
$1-o(1)$.
\end{theorem}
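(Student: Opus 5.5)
We will mirror the five-step structure of the proof of Theorem~\ref{thm:sbm-profile-homophily}. The new ingredients are a population identity for $(A^\ell)_{uv}$ and a concentration bound that tolerates dependence between walks. Lemma~\ref{lem:dep-graph-hoeffding} replaces the Poisson-binomial Chernoff bounds of Lemma~\ref{lem:tails}.

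\textbf{Step 1: population identity.} Write
\[
s_\ell(u,v)=\sum_{x_1,\ldots,x_{\ell-1}} A_{ux_1}A_{x_1x_2}\cdots A_{x_{\ell-1}v},
\]
and split it into two parts.
\begin{itemize}
\item The \emph{self-avoiding} part $W$ sums over walks whose vertices $u,x_1,\ldots,x_{\ell-1},v$ are pairwise distinct. Each such walk uses $\ell$ distinct edges, so its expectation is the product of the corresponding $B$ entries. Summing over the block labels of the intermediate vertices gives
\[
\mathbb E W=n^{\ell-1}\bigl(B(D_\pi B)^{\ell-1}\bigr)_{ab}+O(n^{\ell-2})=n^{\ell-1}(K_\ell)_{ab}+O(n^{\ell-2}).
\]
\item The \emph{degenerate} part $R$ collects walks with a repeated vertex. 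Each of these has at most $\ell-2$ free intermediate vertices, so the count is deterministically $O(n^{\ell-2})$. We bound $R$ crudely rather than in expectation.
\end{itemize}
Consequently the same-class versus different-class gap in $\mathbb E s_\ell$ is at least $n^{\ell-1}\Delta_\ell-O(n^{\ell-2})$. This is order $n^{\ell-1}$ for large $n$.

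\textbf{Step 2: midpoint thresholds.} We define per-class thresholds exactly as in Step~2 of the proof of Theorem~\ref{thm:sbm-profile-homophily}. Set
\[
\kappa^\star_a=\tfrac12 n^{\ell-1}\Bigl((K_\ell)_{aa}+\max_{b\ne a}(K_\ell)_{ab}\Bigr).
\]
This separates same-class means from all different-class means with margin $\gamma_n\ge n^{\ell-1}\Delta_\ell/3$ for large $n$.

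\textbf{Step 3: concentration (the main obstacle).} The indicator variables $X_P=\prod_{e\in P}A_e$ for self-avoiding paths $P$ from $u$ to $v$ are not independent, because paths share edges. We handle this in three moves.
\begin{itemize}
\item Build the dependency graph by joining $P$ and $P'$ whenever they share an edge. Two paths sharing an edge share at least one intermediate vertex, so the maximum degree is $D=O(n^{\ell-2})$.
\item There are $N=O(n^{\ell-1})$ paths. Lemma~\ref{lem:dep-graph-hoeffding} then gives
\[
\Pr{|W-\mathbb E W|\ge t}\le 2\exp\!\bigl(-2t^2/((D+1)N)\bigr)=2\exp\!\bigl(-\Omega(t^2/n^{2\ell-3})\bigr).
\]
\item Take $t=\gamma_n/3=\Theta(n^{\ell-1})$. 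The exponent becomes $-\Omega(n)$, so each ordered pair fails with probability $e^{-\Omega(n)}$.
\end{itemize}
The difficulty is precisely that the dependency degree must be small compared with $N$. This holds for fixed $\ell$, which is why the conclusion weakens to $1-o(1)$ instead of an explicit $n^{-2}$ rate.

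The degenerate part $R$ is also not deterministically small in every graph realization. However, $0\le R$ and $R$ is at most the number of degenerate walk sequences, which is $O(n^{\ell-2})=o(\gamma_n)$. It therefore shifts scores by less than $\gamma_n/3$.

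\textbf{Steps 4--5: union bound and conclusion.} A union bound over at most $n^2$ ordered pairs gives total failure probability $n^2e^{-\Omega(n)}=o(1)$. On the favorable event, for every seed $u$ of class $a$,
\[
s_\ell(u,v)>\kappa^\star_a>s_\ell(u,w)
\]
for every same-class $v$ and different-class $w$. The top-$k$ and symmetrization argument then follows verbatim from Step~5 of the proof of Theorem~\ref{thm:sbm-profile-homophily}, yielding $h_{G_\ell^\star(k)}=1$.
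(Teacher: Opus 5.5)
Your proof is correct and follows essentially the paper's route: you split $(A^\ell)_{uv}$ into a simple-path part and a deterministically $O(n^{\ell-2})$ degenerate remainder. You concentrate the simple-path part with the dependency-graph Hoeffding bound ($N=O(n^{\ell-1})$, maximum dependency degree $O(n^{\ell-2})$), union-bound over ordered pairs, and reuse the top-$k$/symmetrization argument. The differences are cosmetic: you set the deviation at the margin scale $\Theta(n^{\ell-1})$ and separate via midpoint thresholds, whereas the paper uses deviation $C_\ell n^{\ell-3/2}\sqrt{\log n}$ and the direct comparison $\gamma_{n,\ell}>2\varepsilon_{n,\ell}$. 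Your scale gives failure probability $n^2e^{-\Omega(n)}$, stronger than the paper's $2n^{-2}$, so your remark that fixed $\ell$ is what forces the weaker $1-o(1)$ rate is inaccurate.
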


\begin{proof}
We first establish a finite-sample concentration bound. The asymptotic statement follows
because $\Delta_\ell>0$ implies the finite-sample margin condition for all
sufficiently large $n$.

Throughout the proof, probabilities and expectations are conditional on the
fixed labels; we suppress this conditioning from the notation.

\paragraph{Step 1: $\ell$-Hop Profile Mean.}
Fix two distinct nodes $u,v$ with labels $y_u=a$ and $y_v=b$. The raw score
\[
    s_\ell(u,v)=(A^\ell)_{uv}
\]
counts length-$\ell$ walks
\[
    u=x_0,x_1,\ldots,x_{\ell-1},x_\ell=v.
\]

First consider walks whose intermediate nodes
$x_1,\ldots,x_{\ell-1}$ are all distinct and different from $u$ or $v$. For a
label sequence
\[
    c_1,\ldots,c_{\ell-1}\in[C],
\]
the number of such intermediate tuples with $y_{x_i}=c_i$ is
\[
    n^{\ell-1}\pi_{c_1}\cdots\pi_{c_{\ell-1}}
    +
    O_\ell(n^{\ell-2}).
\]
For each such tuple, conditional independence of SBM edges gives
\[
    \mathbb E
    \left[
        A_{ux_1}A_{x_1x_2}\cdots A_{x_{\ell-1}v}
    \right]
    =
    B_{ac_1}B_{c_1c_2}\cdots B_{c_{\ell-1}b}.
\]
Therefore the expected number of simple length-$\ell$ paths from $u$ to $v$ is
\[
    n^{\ell-1}
    \sum_{c_1,\ldots,c_{\ell-1}}
    \pi_{c_1}\cdots\pi_{c_{\ell-1}}
    B_{ac_1}B_{c_1c_2}\cdots B_{c_{\ell-1}b}
    +
    O_\ell(n^{\ell-2}).
\]
The summation is exactly the $(a,b)$ entry of
\[
    B(D_\pi B)^{\ell-1}=K_\ell .
\]
The remaining length-$\ell$ walks, namely those with repeated vertices or with
an intermediate vertex equal to $u$ or $v$, are at most $O_\ell(n^{\ell-2})$ in
number. Since each contributes at most one, their total contribution to the
expectation is also $O_\ell(n^{\ell-2})$. Hence,
\[
    \mu_{ab}^{(\ell,n)}
    =
    n^{\ell-1}(K_\ell)_{ab}
    +
    O_\ell(n^{\ell-2}).
\]

\paragraph{Step 2: Uniform Concentration of Raw $\ell$-Hop Scores.}
Let $\widetilde s_\ell(u,v)$ denote the number of simple length-$\ell$ paths
from $u$ to $v$. We can write
\[
    \widetilde s_\ell(u,v)
    =
    \sum_{\mathbf x} Z_{\mathbf x},
\]
where $\mathbf x=(x_1,\ldots,x_{\ell-1})$ ranges over valid simple intermediate
tuples and
\[
    Z_{\mathbf x}
    :=
    A_{ux_1}A_{x_1x_2}\cdots A_{x_{\ell-1}v}.
\]
Each $Z_{\mathbf x}$ is a Bernoulli random variable.

Construct a dependency graph on the variables $\{Z_{\mathbf x}\}$ by connecting
two variables whenever their corresponding paths share at least one edge. If
two collections of variables have no edge between them in this dependency
graph, then their underlying sets of SBM edge variables are disjoint. Hence, the
collections are independent, so this is a valid dependency graph.

The number of variables is
\[
    N=O_\ell(n^{\ell-1}).
\]
The maximum degree of the dependency graph satisfies
\[
    \Delta=O_\ell(n^{\ell-2}).
\]
Indeed, fix one simple length-$\ell$ path from $u$ to $v$. Another simple
length-$\ell$ path can share one of its $\ell$ edges. If the shared edge is the
first edge incident to $u$, or the last edge incident to $v$, then one
intermediate coordinate is fixed and the remaining $\ell-2$ intermediate
coordinates are free, giving $O_\ell(n^{\ell-2})$ choices. If the shared edge
is internal, then two intermediate coordinates are fixed and only $\ell-3$
remain free, giving $O_\ell(n^{\ell-3})$ choices. The endpoint-edge cases
dominate, so
\[
    \Delta=O_\ell(n^{\ell-2}).
\]
For $\ell=2$, distinct simple two-hop paths between fixed endpoints share no
edge, so the dependency degree is zero; this is also covered by the bound above.

By Lemma~\ref{lem:dep-graph-hoeffding}, for a sufficiently large constant
$C_\ell>0$ depending only on $\ell$ and $C$, for every fixed ordered pair
$(u,v)$,
\[
    \Pr{
        \left|
            \widetilde s_\ell(u,v)
            -
            \mathbb E\widetilde s_\ell(u,v)
        \right|
        >
        C_\ell n^{\ell-\frac32}\sqrt{\log n}
    }
    \le
    2n^{-4}.
\]
Indeed,
\[
    N(\Delta+1)=O_\ell(n^{2\ell-3}),
\]
so the deviation scale from Lemma~\ref{lem:dep-graph-hoeffding} is
\[
    \sqrt{N(\Delta+1)\log n}
    =
    O_\ell\left(n^{\ell-\frac32}\sqrt{\log n}\right).
\]

Taking a union bound over at most $n^2$ ordered pairs, with probability at
least $1-2n^{-2}$,
\[
    \left|
        \widetilde s_\ell(u,v)
        -
        \mathbb E\widetilde s_\ell(u,v)
    \right|
    \le
    C_\ell n^{\ell-\frac32}\sqrt{\log n}
\]
simultaneously for all distinct $u,v$.

It remains to pass from simple paths to all walks. Let
\[
    s_\ell(u,v)=\widetilde s_\ell(u,v)+R_\ell(u,v),
\]
where $R_\ell(u,v)$ counts non-simple length-$\ell$ walks from $u$ to $v$.
As in Step 1,
\[
    0\le R_\ell(u,v)\le C_\ell' n^{\ell-2}
\]
for a constant $C_\ell'>0$ depending only on $\ell$ and $C$. Therefore,
\[
    0\le \mathbb E R_\ell(u,v)\le C_\ell' n^{\ell-2}.
\]
Hence,
\[
    \left|
        s_\ell(u,v)-\mu_{y_uy_v}^{(\ell,n)}
    \right|
    \le
    \left|
        \widetilde s_\ell(u,v)
        -
        \mathbb E\widetilde s_\ell(u,v)
    \right|
    +
    2C_\ell'n^{\ell-2}.
\]
After increasing $C_\ell$ if necessary, we obtain the uniform bound
\[
    \left|
        s_\ell(u,v)-\mu_{y_uy_v}^{(\ell,n)}
    \right|
    \le
    C_\ell n^{\ell-\frac32}\sqrt{\log n}
    +
    C_\ell n^{\ell-2}
    =:\varepsilon_{n,\ell}
\]
simultaneously for all distinct $u,v$, with probability at least $1-2n^{-2}$.

\paragraph{Step 3: Uniform Separation of Same-Label and Different-Label Scores.}
Define the finite-sample class-separation margin
\[
\gamma_{n,\ell} := \min_{a \in [C]}\left[\mu_{aa}^{(\ell,n)} - \max_{b \neq a}\mu_{ab}^{(\ell,n)}\right].
\]
By Step 1, $\gamma_{n,\ell} = n^{\ell-1}\Delta_\ell + O_\ell(n^{\ell-2})$. Assume the concentration event from Step 2 holds. Fix a seed node $u$ with
label $y_u=a$. Let $v$ satisfy $y_v=a$, and let $w$ satisfy $y_w=b\ne a$.
Then
\[
    s_\ell(u,v)
    \ge
    \mu_{aa}^{(\ell,n)}
    -
    \varepsilon_{n,\ell},
\]
while
\[
    s_\ell(u,w)
    \le
    \mu_{ab}^{(\ell,n)}
    +
    \varepsilon_{n,\ell}
    \le
    \mu_{aa}^{(\ell,n)}
    -
    \gamma_{n,\ell}
    +
    \varepsilon_{n,\ell}.
\]
Since $\Delta_\ell > 0$, we have $\gamma_{n,\ell} = n^{\ell-1}\Delta_\ell + O_\ell(n^{\ell-2})$, while $\varepsilon_{n,\ell} = O_\ell(n^{\ell-3/2}\sqrt{\log n})$. Hence, $\gamma_{n,\ell} > 2\varepsilon_{n,\ell}$ for all sufficiently large $n$ and
\[
    s_\ell(u,v)>s_\ell(u,w).
\]
In particular, $s_\ell(u,v) > s_\ell(u,w)$ holds whenever the finite-sample margin satisfies $\gamma_{n,\ell} > 2\varepsilon_{n,\ell}$, with $\varepsilon_{n,\ell} = C_\ell n^{\ell - 3/2}\sqrt{\log n} + C_\ell n^{\ell - 2}$.
since
\[
    \varepsilon_{n,\ell}
    =
    C_\ell n^{\ell-\frac32}\sqrt{\log n}
    +
    C_\ell n^{\ell-2}.
\]
Thus, uniformly over every seed node, every same-label candidate has strictly
larger raw $\ell$-hop score than every different-label candidate.

\paragraph{Step 4: Homophily of the Top-$k$ Graph.}
Assume
\[
    k\le \min_{a\in[C]}(n_a-1).
\]
Then every seed node $u$ has at least $k$ other nodes with the same label. By
Step 3, all same-label candidates have larger score $s_\ell(u,\cdot)$ than all
different-label candidates. Therefore, every directed top-$k$ edge $u\to v$
satisfies
\[
    y_u=y_v.
\]
Symmetrization cannot create a cross-label edge, since an undirected edge
$\{u,v\}$ is included only if at least one of the directed edges $u\to v$ or
$v\to u$ was selected. Hence, the symmetrized graph is perfectly homophilic:
\[
    h_{G_\ell^\star(k)}=1.
\]

Finally, suppose $\Delta_\ell>0$. By Step 1,
\[
    \mu_{ab}^{(\ell,n)}
    =
    n^{\ell-1}(K_\ell)_{ab}
    +
    O_\ell(n^{\ell-2}).
\]
Therefore,
\[
    \gamma_{n,\ell}
    =
    n^{\ell-1}\Delta_\ell
    +
    O_\ell(n^{\ell-2}).
\]
Since
\[
    n^{\ell-1}
    \gg
    n^{\ell-\frac32}\sqrt{\log n}
    +
    n^{\ell-2},
\]
so $\gamma_{n,\ell} > 2\varepsilon_{n,\ell}$ holds for all sufficiently large $n$. Hence, the perfect-homophily conclusion holds with probability
$1-o(1)$.
\end{proof}

\subsection{Proof of Lemma \ref{lem:paths} (Effective Resistance Characterization)} \label{proof3}

\textbf{Activation Times and the Activation DAG.} For every \(x\in A^v\), define its activation time
\[
T(x):=\min\{t\in\{0,1,\dots,\ell\}:x\in S_t^v\}.
\]
Thus, \(T(x)=0\) for \(x\in S_0^v\), and \(T(x)\ge 1\) for every non-initially active
vertex. Construct a directed graph \(D^v=(A^v,\vec E^{\,v})\) as follows. For every edge \(\{a,b\}\in E(G_v)\) with \(T(a)<T(b)\), include the directed arc \(a\to b\). If \(T(a)=T(b)\), include no arc corresponding to \(\{a,b\}\). Thus, every arc strictly increases activation time, so \(D^v\) is acyclic.
 
\textbf{Cut Lower Bound on Subsets Disjoint From $S_0^v$.} We first prove a cut lower bound. Let \(U\subseteq A^v\) be nonempty and suppose \(U\cap S_0^v=\emptyset\). Choose \(x\in U\) minimizing \(T(x)\) over all vertices in \(U\). Since \(U\cap S_0^v=\emptyset\), we have \(T(x)\ge 1\). By minimality of \(T(x)\), no vertex of \(U\) is active before time \(T(x)\), so
\[
U\cap S_{T(x)-1}^v=\emptyset .
\]
Because \(x\) first activates at time \(T(x)\), the uniform threshold condition gives
\[
\bigl|N(x)\cap S_{T(x)-1}^v\bigr|\ge \kappa .
\]
Every vertex \(y\in N(x)\cap S_{T(x)-1}^v\) lies outside \(U\), and satisfies
\(T(y)<T(x)\). Hence, the edge \(\{y,x\}\) appears in \(D^v\) as the incoming arc
\(y\to x\). Therefore, at least \(\kappa\) arcs enter \(U\):
\[
\bigl|\delta^-_{D^v}(U)\bigr|
\ge
\bigl|N(x)\cap S_{T(x)-1}^v\bigr|
\ge \kappa ,
\]
where
\[
\delta^-_{D^v}(U)
:=
\{(a,b)\in \vec E^{\,v}: a\in A^v\setminus U,\ b\in U\}.
\]

\textbf{Augmented Network and Capacity Assignment.} We now add a new source vertex \(s\). For every \(z\in S_0^v\), add an arc \(s\to z\)
of capacity \(\kappa\), and give every arc of \(D^v\) capacity \(1\). We claim that
every \(s\)-\(u\) cut in this network has capacity at least \(\kappa\).

\textbf{Lower Bound on Every $s$-$u$ Cut.} Let \(X\) be the source side of an arbitrary \(s\)-\(u\) cut, so \(s\in X\) and
\(u\notin X\). Define
\[
U := A^v\setminus X .
\]
Then \(u\in U\), so \(U\neq\emptyset\). If \(U\cap S_0^v\neq\emptyset\), then for any
\(z\in U\cap S_0^v\), the source arc \(s\to z\) crosses the cut and has capacity
\(\kappa\). Hence, the cut has capacity at least \(\kappa\).

It remains to consider the case \(U\cap S_0^v=\emptyset\). By the cut lower bound proved
above,
\[
\bigl|\delta^-_{D^v}(U)\bigr|\ge \kappa .
\]
Each arc in \(\delta^-_{D^v}(U)\) crosses from \(X\cap A^v\) into \(U=A^v\setminus X\),
and each such arc has capacity \(1\). Thus, the cut again has capacity at least \(\kappa\).

\textbf{Max-Flow/Min-Cut and Integral Decomposition.} Therefore, every \(s\)-\(u\) cut has capacity at least \(\kappa\). By the max-flow/min-cut
theorem, the maximum \(s\)-\(u\) flow has value at least \(\kappa\). Since all capacities
are integral, there is an integral flow of value at least \(\kappa\). Decomposing this
flow into directed \(s\)-\(u\) paths and selecting any \(\kappa\) unit-flow paths, we
obtain \(\kappa\) directed paths from \(s\) to \(u\). The source arcs \(s\to z\) may be
used more than once, but every arc of \(D^v\) has unit capacity, so the portions of these
paths inside \(D^v\) are pairwise arc-disjoint.

Removing the initial source arcs gives \(\kappa\) pairwise arc-disjoint directed paths in
\(D^v\), each starting at some vertex of \(S_0^v\) and ending at \(u\). Since \(D^v\)
contains at most one orientation of any undirected edge of \(G_v\), these directed paths
correspond to \(\kappa\) pairwise edge-disjoint undirected paths in \(G_v\).

\textbf{Length Bound.} Finally, along every directed path in \(D^v\), activation time strictly increases at each
arc. A directed path starting in \(S_0^v\) starts at activation time \(0\) and ends at
activation time \(T(u)\). Hence, it has at most \(T(u)\) arcs, and so each resulting
undirected path has length at most \(T(u)\le \ell\).

\subsection{Proof of Lemma \ref{lem:res}}\label{proof4}

If some path \(P_i\) intersects \(S\) in more than one vertex, replace \(P_i\) by the
suffix of \(P_i\) beginning at its last vertex in \(S\) and ending at \(u\). This operation
does not increase its length and preserves pairwise edge-disjointness. Hence, we may assume
that each \(P_i\) meets \(S\) only at its initial endpoint.

Orient each path \(P_i\) from \(u\) toward its endpoint in \(S\), and send \(1/\kappa\)
units of current along \(P_i\). Since the paths are pairwise edge-disjoint, every used
edge carries current of magnitude exactly \(1/\kappa\), and all unused edges carry zero
current. The superposition of these path flows is a unit flow from \(u\) to the set \(S\):
the total outflow from \(u\) is \(1\), the total inflow into \(S\) is \(1\), and flow is
conserved at every vertex in \(W\setminus (S\cup\{u\})\).

By Thomson's principle, effective resistance is the minimum energy over all unit flows
from \(u\) to \(S\). Therefore,
\[
R_{\mathrm{eff}}^H(S,u)
\le
\sum_{e\in F} f(e)^2
=
\sum_{i=1}^{\kappa} \ell_i \left(\frac{1}{\kappa}\right)^2
=
\frac{1}{\kappa^2}\sum_{i=1}^{\kappa}\ell_i .
\]
If \(\ell_i\le L\) for all \(i\), this gives
\[
R_{\mathrm{eff}}^H(S,u)
\le
\frac{1}{\kappa^2}\cdot \kappa L
=
\frac{L}{\kappa}.
\]

\subsection{Proof of Theorem~\ref{thm:walklength}}\label{proof5}

Fix \(u\in S_\ell^v\setminus S_0^v\). By
Lemma~\ref{lem:paths}, the induced activated subgraph \(G_v\)
contains \(\kappa\) pairwise edge-disjoint paths from \(S_0^v\) to \(u\), each of length
at most \(T(u)\). Applying Lemma~\ref{lem:res} with
\(H=G_v\), \(S=S_0^v\), and \(L=T(u)\), we obtain
\[
R_{\mathrm{eff}}^{G_v}(S_0^v,u)
\le
\frac{T(u)}{\kappa}.
\]
Since \(u\) activates by time \(\ell\), \(T(u)\le \ell\). Hence,
\[
R_{\mathrm{eff}}^{G_v}(S_0^v,u)
\le
\frac{T(u)}{\kappa}
\le
\frac{\ell}{\kappa}.
\]

\subsection{Finite-Sample Cascade Scores and Short-Walk Reinforcement}
\label{app:finite-sample-cascades}

In the main text we define the short-walk event \(\mathcal F_L := \{(A^2+\cdots+A^L)_{uv}\ge \kappa\}\) as a population-level proxy for the empirical
co-activation score \(f_v(u)\). The following result makes this relation precise. It is stated
conditional on the input graph \(G\); if \(G\) is random, the same statement holds conditionally on
\(G\).

\begin{theorem}[Empirical Cascade Scores and Short-Walk Support]
\label{thm:finite-sample-cascades}
Let $G = (V, E)$ be a fixed undirected graph with adjacency matrix $A$ and $|V| = n$. For each seed $v \in V$ with $N_v \geq 1$, let $S_{v,b}$ denote the final active set of cascade realization $b$, and define $\widehat{p}_v(u) := \frac{1}{N_v} \sum_{b=1}^{N_v} \mathbf{1}\{u \in S_{v,b} \setminus \{v\}\}, \qquad p_v(u) := \mathbb{E}\bigl[\widehat{p}_v(u)\bigr]$. Assume that, for each such seed $v$, the $N_v$ realizations can be partitioned into independent blocks $B_{v,1}, \ldots, B_{v,m_v}$ with sizes $a_{v,j} = |B_{v,j}|$; dependence within a block is permitted. Let $N_{\mathrm{eff}}(v) := \frac{\bigl(\sum_{j=1}^{m_v} a_{v,j}\bigr)^2}{\sum_{j=1}^{m_v} a_{v,j}^2} = \frac{N_v^2}{\sum_{j=1}^{m_v} a_{v,j}^2}$ 
denote the effective number of independent cascade samples for seed $v$. Then the following statements hold.

\begin{enumerate}
\item[\textnormal{(i)}] \emph{(Concentration.)} For any $\delta \in (0,1)$, with probability at least $1 - \delta$, for every ordered pair $(v, u)$ with $u \neq v$ and $N_v \geq 1$, $\bigl|\widehat{p}_v(u) - p_v(u)\bigr| \leq \varepsilon_v(\delta) := \sqrt{\frac{\log\bigl(2n(n-1)/\delta\bigr)}{2 N_{\mathrm{eff}}(v)}}$. Equivalently, $|f_v(u) - N_v p_v(u)| \leq N_v \varepsilon_v(\delta)$ uniformly over all such pairs. we apply a union bound over the $n(n-1)$ ordered pairs $(v, u)$ with $u \neq v$, which already accounts for the union across seeds.

\item[\textnormal{(ii)}] \emph{(Top-$k$ Stability.)} Fix a seed $v$ and an integer $k$ with $1 \leq k \leq n - 2$. Let $T_k(v)$ denote the population top-$k$ set under $p_v$ and $\widehat{T}_k(v)$ the empirical top-$k$ set under $\widehat{p}_v$, using a common deterministic tie-breaking rule. Let $p_{v,(1)} \geq p_{v,(2)} \geq \cdots$ be the sorted values of $\{p_v(u) : u \neq v\}$, and define the population margin $\gamma_v := p_{v,(k)} - p_{v,(k+1)}$. If $\gamma_v > 2 \varepsilon_v(\delta)$, then $\widehat{T}_k(v) = T_k(v)$ on the event of part \textnormal{(i)}.

\item[\textnormal{(iii)}] \emph{(Short-Walk Certificate for Fixed-Threshold TAS.)} Suppose the cascade rule is fixed-threshold TAS-$\tau$ with $\tau \geq 1$, initialization $S_0^v = \{v\} \cup R$ for some $R \subseteq N(v)$, and at most $\ell$ post-initialization vertex activations. The conclusion below applies both to the clean sequential TAS rule of Section~2.1 and to the hub-restricted implementation, provided that any activation requires at least $\tau$ distinct already-active support vertices. Then for every nonlocal node $u \notin N[v]$, $f_v(u) > 0$ implies $\sum_{s=2}^{\ell+1} (A^s)_{vu} \geq \tau$.

The same implication holds with $p_v(u) > 0$ in place of $f_v(u) > 0$. When counts are aggregated over a threshold set $\mathcal{T}$, the implication holds unconditionally with $\tau_{\min} := \min \mathcal{T}$; if threshold-specific counts $f_v^{(\tau)}(u)$ are retained, the implication holds for that threshold $\tau$.
\end{enumerate}
\end{theorem}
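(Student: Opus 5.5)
I prove the three parts separately; only (iii) uses graph structure.

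\emph{Part (i).} Fix an ordered pair $(v,u)$ with $u\neq v$. For each independent block $B_{v,j}$, let $Y_j := \sum_{b\in B_{v,j}} \mathbf{1}\{u\in S_{v,b}\setminus\{v\}\}$. Then $Y_j\in[0,a_{v,j}]$, the $Y_j$ are independent across $j$, and $\widehat p_v(u)=N_v^{-1}\sum_j Y_j$. Dependence inside a block is absorbed because each block contributes a single bounded summand.

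Hoeffding's inequality for independent variables with ranges $a_{v,j}$ gives
\[
\Pr{\bigl|\widehat p_v(u)-p_v(u)\bigr|\ge\varepsilon}\le 2\exp\!\left(-\frac{2N_v^2\varepsilon^2}{\sum_j a_{v,j}^2}\right)=2\exp\!\left(-2N_{\mathrm{eff}}(v)\,\varepsilon^2\right).
\]
I choose $\varepsilon=\varepsilon_v(\delta)$ so that the right-hand side equals $\delta/(n(n-1))$. A union bound over the $n(n-1)$ ordered pairs then gives (i). The statement for $f_v$ follows by multiplying by $N_v$.

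\emph{Part (ii).} This is a deterministic argument on the event of (i). Take any $u\in T_k(v)$ and any $w\notin T_k(v)$. Then $p_v(u)\ge p_{v,(k)}$ and $p_v(w)\le p_{v,(k+1)}$. Hence
\[
\widehat p_v(u)\ge p_v(u)-\varepsilon_v > p_v(w)+\varepsilon_v\ge \widehat p_v(w),
\]
where the middle strict inequality uses $\gamma_v>2\varepsilon_v$. Every population top-$k$ element therefore strictly beats every non-member empirically. The empirical top-$k$ set is thus exactly $T_k(v)$, and tie-breaking plays no role because all the relevant comparisons are strict.

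\emph{Part (iii).} This part reuses Lemma~\ref{lem:paths}. Suppose $f_v(u)>0$. Then some realization has $u\in S_{v,b}$. Because $u\notin N[v]\supseteq S_0^v$, the node $u$ was activated at some time $T(u)\le \ell$ by a threshold-$\tau$ rule. The proof of Lemma~\ref{lem:paths} only uses that each activated vertex has at least $\tau$ already-active neighbours, which is exactly the hub-restricted hypothesis. It therefore yields $\tau$ pairwise edge-disjoint paths $P_1,\dots,P_\tau$, each running from $S_0^v$ to $u$ with length at most $\ell$.

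Each starting point $z\in S_0^v$ is either $v$ itself or a neighbour of $v$. If $z=v$, $P_i$ is already a walk from $v$ to $u$. If $z$ is a neighbour, prepending the edge $vz$ gives a walk from $v$ to $u$ of length at most $\ell+1$. Since $u\notin N[v]$, every such walk has length at least $2$, so each one contributes to some $(A^s)_{vu}$ with $2\le s\le \ell+1$.

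The main obstacle is showing that these $\tau$ walks are distinct, since prepending $vz$ could in principle merge two of them. It cannot: the original paths are edge-disjoint, so they are distinct edge sequences, and prepending the same or different edges keeps their suffixes distinct. Hence $\sum_{s=2}^{\ell+1}(A^s)_{vu}\ge\tau$.

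The remaining claims follow directly. First, $p_v(u)>0$ means some realization with positive probability activates $u$, so the same argument applies. Second, when counts are aggregated over a threshold set $\mathcal T$, any activating run used some $\tau\ge\tau_{\min}$, which gives the bound with $\tau_{\min}$. Third, threshold-specific counters certify their own $\tau$ by the argument above.
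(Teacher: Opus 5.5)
\textbf{Verdict.} Your proof is correct. Parts (i) and (ii) match the paper's argument: block-wise Hoeffding with ranges $a_{v,j}$, a union bound over the $n(n-1)$ ordered pairs, and then the deterministic margin comparison on that event. For (iii) you take a genuinely different route.

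\textbf{Your route for (iii).} You invoke Lemma~\ref{lem:paths}, whose max-flow/min-cut proof on the activation DAG uses only one fact: each activated vertex has at least $\tau$ neighbours in the earlier active set. The hub-restricted BFS guarantees this, because supports are incremented by distinct processed active vertices. The lemma gives $\tau$ edge-disjoint paths from $S_0^v$ to $u$, each of length at most $T(u)$. You then prepend the seed edge $vz$ and use edge-disjointness to show the resulting walks are distinct.

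\textbf{The paper's route.} The paper avoids flows entirely. It proves by induction that after $r$ post-initialization activations, every active vertex other than $v$ has a walk from $v$ of length at most $r+1$. It then appends the edges $(z_i,u)$ from the $\tau$ distinct support vertices of $u$. The resulting walks are distinct simply because their penultimate vertices differ.

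\textbf{Comparison.} Both routes give the same length bound $T(u)+1\le \ell+1$. The paper's argument is shorter and self-contained, needing only reachability and distinctness of supports, since the certificate counts walks rather than paths. Yours obtains the walk-count certificate as a corollary of the stronger edge-disjoint-path structure. This ties (iii) directly to Theorem~\ref{thm:walklength} and Proposition~\ref{prop:tas-edge-connectivity-certificate}.

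\textbf{One small tightening.} Your distinctness sentence is loose in the mixed case, where one path starts at $v$ and another at a neighbour $z$ of $v$. The cleanest justification is that edge-disjoint paths into $u$ have pairwise distinct final edges. The prepended walks therefore already differ in their last edge, which is exactly the paper's penultimate-vertex criterion.
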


\begin{proof}
For \emph{(i)}, fix an ordered pair \((v,u)\), \(u\neq v\), with \(N_v\ge 1\). For block \(j\), define
\[
Y_{v,j}(u):=
\sum_{b\in B_{v,j}}\mathbf{1}\{u\in S_{v,b}\setminus\{v\}\}.
\]
The variables \(Y_{v,1}(u),\ldots,Y_{v,m_v}(u)\) are independent by assumption and satisfy
\(0\le Y_{v,j}(u)\le a_{v,j}\). Since
\[
\widehat p_v(u)=\frac{1}{N_v}\sum_{j=1}^{m_v}Y_{v,j}(u),
\]
Hoeffding's inequality for independent bounded random variables gives, for every \(\epsilon>0\),
\[
\Pr{
\big|\widehat p_v(u)-p_v(u)\big|\ge \epsilon
}
\le
2\exp\!\left(
-\frac{2N_v^2\epsilon^2}{\sum_{j=1}^{m_v}a_{v,j}^2}
\right)
=
2\exp\!\left(-2N_{\rm eff}(v)\epsilon^2\right).
\]
Taking
\[
\epsilon=
\sqrt{\frac{\log\big(2n(n-1)/\delta\big)}{2N_{\rm eff}(v)}}
\]
makes the failure probability at most \(\delta/(n(n-1))\). A union bound over all \(n(n-1)\) ordered
pairs \((v,u)\), \(u\neq v\), proves \emph{(i)}. Pairs with seeds for which no cascades are run are
excluded by the condition \(N_v\ge 1\).

For \emph{(ii)}, work on the event from \emph{(i)} and fix \(v\). For any
\(a\in T_k(v)\) and \(b\notin T_k(v)\), the population margin gives
\(p_v(a)-p_v(b)\ge \gamma_v\). Therefore,
\[
\widehat p_v(a)-\widehat p_v(b)
\ge p_v(a)-p_v(b)-2\varepsilon_v(\delta)
\ge \gamma_v-2\varepsilon_v(\delta)>0.
\]
Thus, every population top-\(k\) node has a strictly larger empirical score than every non-top-\(k\)
node, so the empirical and population top-\(k\) sets coincide.

For \emph{(iii)}, consider a single TAS-\(\tau\) realization from seed \(v\). Order the
post-initialization activations by their insertion time. We first show by induction that, after \(r\)
post-initialization activations, every active vertex other than \(v\) itself has a walk from \(v\) of
length at most \(r+1\), while \(v\) itself has a trivial length-zero walk. At initialization,
every vertex in \(R\subseteq N(v)\) has a length-one walk from \(v\), so the claim holds for \(r=0\).
If \(x\) is the \((r+1)\)-st activated vertex, the TAS rule activates \(x\) only after at least one
already-active support vertex \(z\in N(x)\) has been observed, because \(\tau\ge 1\). By the
induction hypothesis, \(z\) has a walk from \(v\) of length at most \(r+1\), or \(z=v\) has a
length-zero walk. Appending the edge \((z,x)\) gives a walk from \(v\) to \(x\) of length at most
\(r+2\). This proves the induction claim.

Now let \(u\notin N[v]\) be activated as the \(r\)-th post-initialization activation, where
\(1\le r\le \ell\). Since \(u\notin N[v]\), it was not active at initialization. At the activation
time of \(u\), the rule has observed at least \(\tau\) distinct already-active support vertices
\(z_1,\ldots,z_\tau\in N(u)\). By the induction claim, each \(z_i\) has a walk from \(v\) of length at
most \(r\), or a length-zero walk if \(z_i=v\). The latter case is impossible because
\(z_i\in N(u)\) and \(u\notin N[v]\) imply \(z_i\neq v\). Appending the distinct final edges
\((z_i,u)\) gives \(\tau\) walks from \(v\) to \(u\), each of length at most \(r+1\le \ell+1\). These
walks are distinct because they end with distinct second-to-last vertices \(z_i\). Because
\(u\notin N[v]\), each walk has length at least two. Hence,
\[
\sum_{s=2}^{\ell+1}(A^s)_{vu}\ge \tau .
\]
If $f_v(u) > 0$, some sampled realization activated $u$, so the deterministic argument above applies. If $p_v(u) > 0$, then a realization activating $u$ occurs with positive probability, and the same certificate follows. Under the nonzero-count convention of equation~\eqref{eq:topk}, every selected nonlocal directed edge $v \to u$ has $f_v(u) > 0$, so the certificate applies to every such edge of $G^\ast$.
\end{proof}

\begin{corollary}[TAS Lower Bound From Two-Hop Common Neighbors]
\label{cor:tas-two-hop-lower-bound}
Consider the TAS with threshold $\tau$ (TAS-$\tau$) rule, and suppose the initial active set is $S_0^v = \{v\} \cup R$, where $R$ is drawn uniformly from the $r_v$-subsets of $N(v)$ with $r_v \leq d(v)$. For a nonlocal node $u \notin N[v]$, define the number of two-hop common neighbors
\[
h_{vu} := |N(v) \cap N(u)| = (A^2)_{vu},
\]
and the hypergeometric tail
\[
\Phi_{\tau, r_v, d(v)}(h) := \sum_{i = \tau}^{\min\{r_v, h\}} \frac{\binom{h}{i}\binom{d(v) - h}{r_v - i}}{\binom{d(v)}{r_v}}.
\]
The function $h \mapsto \Phi_{\tau, r_v, d(v)}(h)$ is nondecreasing.

\begin{enumerate}
\item[\textnormal{(i)}] \emph{(Parallel Activation.)} If TAS activates all initially eligible vertices in parallel, then for every nonlocal $u \notin N[v]$,
\[
p_v(u) \geq \Phi_{\tau, r_v, d(v)}\bigl((A^2)_{vu}\bigr).
\]

\item[\textnormal{(ii)}] \emph{(Sequential Uniform Activation.)} Suppose instead that, at each step, one initially eligible vertex is selected uniformly at random. Let
\[
M_v := \max\Bigl\{1,\, \max_{R} \bigl|\{x \in V \setminus S_0^v : \sigma_0(x) \geq \tau\}\bigr|\Bigr\}
\]
denote the maximum size of the initially eligible set over all initializations $R$, taken to be at least one. Then for every nonlocal $u \notin N[v]$,
\[
p_v(u) \geq \frac{1}{M_v} \, \Phi_{\tau, r_v, d(v)}\bigl((A^2)_{vu}\bigr).
\]
\end{enumerate}

On the concentration event of Theorem~\ref{thm:finite-sample-cascades}, the same lower bounds hold for the empirical score $\widehat{p}_v(u)$ up to a subtractive correction of $\varepsilon_v(\delta)$.
\end{corollary}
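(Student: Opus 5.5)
The plan is to prove the monotonicity claim first, then the two lower bounds, and finally the empirical version, which is immediate.

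\emph{Monotonicity.} I would use a coupling. Fix $d(v)$ and $r_v$. Let $R$ be a uniform $r_v$-subset of $N(v)$, and let $H_h\subseteq N(v)$ be a set of size $h$. Then $|R\cap H_h|$ is hypergeometric, and $\Phi_{\tau,r_v,d(v)}(h)=\Pr{|R\cap H_h|\ge\tau}$. Nesting $H_h\subseteq H_{h+1}$ gives $|R\cap H_h|\le|R\cap H_{h+1}|$ pointwise, so $\Phi$ is nondecreasing. The same representation with $H:=N(v)\cap N(u)$, so $|H|=(A^2)_{vu}$, is what links $\Phi$ to the cascade.

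\emph{Eligibility event.} Fix a nonlocal $u\notin N[v]$. Then $v\notin N(u)$, so the initial support is $\sigma_0(u)=|N(u)\cap S_0^v|=|N(u)\cap R|=|R\cap H|$. Also $u\notin S_0^v$, since $u\notin N[v]$. Hence on the event $\mathcal{G}_u:=\{|R\cap H|\ge\tau\}$, the node $u$ is initially eligible, and $\Pr{\mathcal{G}_u}=\Phi_{\tau,r_v,d(v)}((A^2)_{vu})$. For (i), under parallel activation every initially eligible vertex activates in the first step. So $u\in S_{v,b}\setminus\{v\}$ whenever $\mathcal{G}_u$ occurs, which gives $p_v(u)\ge\Pr{\mathcal{G}_u}$. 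I would assume $\ell\ge1$ here, as the setting implicitly does.

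\emph{Sequential case (ii).} Condition on $R$ with $\mathcal{G}_u$ holding. The first step picks uniformly from the initially eligible set, whose size is at most $M_v$. So $u$ is chosen at step one with conditional probability at least $1/M_v$, and therefore $u$ ends up in the final active set. Averaging over $R$ gives $p_v(u)\ge \Phi/M_v$. The main subtlety is the phrase ``one initially eligible vertex is selected uniformly.'' I read it as saying that the first activation is uniform over $A_0$, and I only need the first step, so later dynamics are irrelevant: activation is monotone and $u$ stays active. The convention $M_v\ge1$ only prevents division by zero when no initialization has eligible vertices, in which case $\Phi=0$ for all relevant $u$ and the bound is trivial.

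\emph{Empirical version.} On the concentration event of Theorem~\ref{thm:finite-sample-cascades}(i), $\widehat p_v(u)\ge p_v(u)-\varepsilon_v(\delta)$. Substituting the bounds from (i) and (ii) gives the stated subtractive correction.
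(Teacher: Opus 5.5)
Your proposal is correct and follows the paper's proof essentially step for step. The shared steps are: identifying $\sigma_0(u)=|R\cap(N(v)\cap N(u))|$ as hypergeometric for nonlocal $u$; first-layer activation in the parallel case; averaging $1/|\mathcal{E}(R)|\ge 1/M_v$ over $R$ in the sequential case; the nested marked-set coupling for monotonicity; and the one-line transfer to $\widehat p_v(u)$ on the concentration event. Your explicit remarks that $v\notin N(u)$ and that the $M_v\ge 1$ convention only matters when $\Phi=0$ are small clarifications the paper leaves implicit.
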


\begin{proof}
Let \(S_v\) denote the final active set of a generic clean-TAS realization from seed \(v\). For
\(u \notin N[v]\), the initial support of \(u\) is
\[
\sigma_0(u) = |N(u) \cap R|.
\]
Among the \(d(v)\) vertices in \(N(v)\), exactly
\[
h_{vu}=|N(v)\cap N(u)|=(A^2)_{vu}
\]
are also neighbors of \(u\). Since \(R\) is sampled uniformly among all \(r_v\)-subsets of \(N(v)\),
\[
\sigma_0(u)\sim \operatorname{Hypergeom}\bigl(d(v),h_{vu},r_v\bigr).
\]
Therefore,
\[
\Pr{\sigma_0(u)\ge \tau}
=
\Phi_{\tau,r_v,d(v)}(h_{vu}).
\]

\noindent\textbf{Parallel Activation.}
Assume first that clean TAS activates all initially eligible vertices in parallel, and that at least
one activation layer is performed. On the event \(\{\sigma_0(u)\ge \tau\}\), the vertex \(u\) is
eligible at the first layer. Hence, \(u\) activates at the first layer, so
\[
\{\sigma_0(u)\ge \tau\}
\subseteq
\{u\in S_v\}.
\]
Since \(u\neq v\), this gives
\[
p_v(u)
=
\Pr{u\in S_v\setminus\{v\}}
\ge
\Pr{\sigma_0(u)\ge \tau}
=
\Phi_{\tau,r_v,d(v)}(h_{vu}).
\]

\noindent\textbf{Sequential Uniform Activation.}
Now suppose instead that clean TAS activates one initially eligible vertex at a time, choosing
uniformly among the initially eligible vertices, and that at least one activation step is performed.
For an initialization \(R\), let
\[
\mathcal E(R)
:=
\{x\in V\setminus S_0^v:\sigma_0(x)\ge \tau\}
\]
be the initially eligible set. Define
\[
M_v
:=
\max\left\{1,\max_R |\mathcal E(R)|\right\}.
\]
Condition on the event \(\{\sigma_0(u)\ge \tau\}\). Since \(u\notin N[v]\), we have
\(u\notin S_0^v\), and so \(u\in \mathcal E(R)\). Thus,
\[
1\le |\mathcal E(R)|\le M_v.
\]
Because the first activated vertex is chosen uniformly from \(\mathcal E(R)\), the conditional
probability that \(u\) is selected at the first activation step is
\[
\frac{1}{|\mathcal E(R)|}
\ge
\frac{1}{M_v}.
\]
Therefore,
\[
p_v(u)
\ge
\Pr{u\text{ is selected at the first activation step}}
= \mathbb E\!\left[ \mathbf 1\{\sigma_0(u)\ge \tau\}\frac{1}{|\mathcal E(R)|}
\right] \ge \frac{1}{M_v}\Pr{\sigma_0(u)\ge \tau}.
\]
Using the hypergeometric identity above,
\[
p_v(u) \ge \frac{1}{M_v} \Phi_{\tau,r_v,d(v)}(h_{vu}).
\]

\noindent\textbf{Monotonicity of \(\Phi\).}
To prove monotonicity, couple the hypergeometric experiment with \(h\) marked elements to the one
with \(h+1\) marked elements by changing one unmarked element into a marked element and using the
same sampled subset. Under this coupling, the number of marked sampled elements cannot decrease, so the probability of sampling at least \(\tau\) marked elements is nondecreasing in \(h\).
Therefore, \(\Phi_{\tau,r_v,d(v)}(h)\) is nondecreasing in \(h\).

\noindent\textbf{Empirical Bounds.}
On the concentration event of Theorem~\ref{thm:finite-sample-cascades},
\[
|\widehat p_v(u)-p_v(u)|\le \varepsilon_v(\delta)
\]
simultaneously over all ordered pairs \((v,u)\) with \(u\neq v\) and \(N_v\ge 1\). Therefore, for parallel clean TAS,
\[
\widehat p_v(u)
\ge
p_v(u)-\varepsilon_v(\delta)
\ge
\Phi_{\tau,r_v,d(v)}(h_{vu})-\varepsilon_v(\delta),
\]
and for sequential uniform clean TAS,
\[
\widehat p_v(u)
\ge
p_v(u)-\varepsilon_v(\delta)
\ge
\frac{1}{M_v}\Phi_{\tau,r_v,d(v)}(h_{vu})-\varepsilon_v(\delta).
\]
Since the concentration event is uniform, these empirical bounds hold simultaneously for all
nonlocal pairs \(u\notin N[v]\) and all seeds \(v\) with \(N_v\ge 1\), with probability at least
\(1-\delta\).
\end{proof}

\subsection{Edge-Connectivity Certificate for Fixed-Threshold TAS}
\label{app:edge-connectivity}

Theorem~\ref{thm:walklength} bounds the effective resistance of threshold activations. The same activation-path argument yields a complementary edge-connectivity consequence, which we record here. Throughout this section, $\kappa$ denotes a single fixed threshold; when the final TAS graph aggregates over multiple thresholds, the certificate below applies to each cascade run separately, with $\kappa$ taken to be the threshold of the contributing run.

\begin{definition}[Set-To-Vertex Edge Connectivity]
Let $G = (V, E)$ be an undirected graph. For a nonempty set $S \subseteq V$ and a vertex $u \in V \setminus S$, define
\[
\lambda_G(S, u) := \min\bigl\{ |F| : F \subseteq E,\ \text{no path from } S \text{ to } u \text{ in } G \setminus F \bigr\}.
\]
By Menger's theorem, $\lambda_G(S, u)$ equals the maximum number of pairwise edge-disjoint paths from $S$ to $u$ in $G$.
\end{definition}

Let $F_k^{\kappa}(v)$ denote the directed top-$k$ cascade-neighbor set obtained from positive co-activation counts in TAS-$\kappa$ runs seeded at $v$. A selected directed edge $v \to u$ is \emph{nonlocal} if $u \notin N[v]$.

\begin{proposition}[Edge-Connectivity Certificate]
\label{prop:tas-edge-connectivity-certificate}
Consider a TAS-$\kappa$ cascade from seed $v$ with initial active set $S_0^v = \{v\} \cup R$, where $R \subseteq N(v)$, and walk length $\ell$. If a node $u \notin S_0^v$ is activated within $\ell$ steps, then $G$ contains $\kappa$ pairwise edge-disjoint paths from $S_0^v$ to $u$, each of length at most the activation time $T(u) \leq \ell$. In particular,
$\lambda_G(S_0^v, u) \geq \kappa$. If $v \to u$ is a nonlocal directed edge in the TAS-$\kappa$ top-$k$ graph, then $\lambda_G(N[v], u) \geq \kappa$.

For the symmetrized graph, the certificate holds in at least one of the two selected orientations: if $\{v, u\}$ appears because $v \to u$ was selected, then $\lambda_G(N[v], u) \geq \kappa$; if it appears because $u \to v$ was selected, then $\lambda_G(N[u], v) \geq \kappa$.
\end{proposition}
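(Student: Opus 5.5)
The plan is to reduce everything to Lemma~\ref{lem:paths}. First, a TAS-$\kappa$ cascade from $S_0^v=\{v\}\cup R$ activates a vertex $x$ at time $t\ge 1$ only if $|N(x)\cap S_{t-1}^v|\ge\kappa$. This is exactly the uniform threshold-$\kappa$ condition in the setting of Theorem~\ref{thm:walklength}. The sequential single-vertex-per-step activation does not matter, since the lemma only uses the threshold condition at each first activation. Applying Lemma~\ref{lem:paths} to $u\in S_\ell^v\setminus S_0^v$ gives $\kappa$ pairwise edge-disjoint paths from $S_0^v$ to $u$ inside the induced activated subgraph $G_v$, each of length at most $T(u)\le\ell$. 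Since $G_v$ is a subgraph of $G$, these are also edge-disjoint paths in $G$. Menger's theorem, as recalled in the definition, then yields $\lambda_G(S_0^v,u)\ge\kappa$.

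Next, I need a monotonicity step in the source set. If $S\subseteq S'$ and $u\notin S'$, then $\lambda_G(S',u)\ge\lambda_G(S,u)$. The reason is that any edge set $F$ separating $S'$ from $u$ also separates $S$ from $u$, because every path from $S$ is a path from $S'$. Since $R\subseteq N(v)$, we have $S_0^v\subseteq N[v]$.

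For a nonlocal selected edge $v\to u$, I argue as follows. Selection in $F_k^\kappa(v)$ requires a positive count, so some realization activated $u$, i.e.\ $u\in S_\ell^v\setminus\{v\}$. Because $u\notin N[v]\supseteq S_0^v$, $u$ was activated by propagation, and the first part applies to that realization. Hence $\lambda_G(N[v],u)\ge\lambda_G(S_0^v,u)\ge\kappa$, and $u\notin N[v]$ makes the quantity well defined.

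The symmetrized statement is bookkeeping. An undirected edge $\{v,u\}$ of $G^\ast$ arises from at least one selected orientation. If $v\to u$ was selected (and is nonlocal), the previous paragraph gives $\lambda_G(N[v],u)\ge\kappa$. If $u\to v$ was selected, the same argument with the roles swapped gives $\lambda_G(N[u],v)\ge\kappa$.

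The only delicate point I expect is making sure the implemented TAS genuinely satisfies the threshold condition relative to $S_{t-1}^v$ with distinct supporters, including for hub-restricted variants. That is the hypothesis needed to invoke Lemma~\ref{lem:paths}; everything else is direct.
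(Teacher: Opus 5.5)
Your proposal is correct and follows the paper's proof essentially step for step. You invoke Lemma~\ref{lem:paths} on the activated subgraph, lift the paths to $G$ and apply Menger, use monotonicity of $\lambda_G$ in the source set together with $S_0^v\subseteq N[v]$, use a positive count to obtain an activating realization, and handle the symmetrized edge by whichever orientation was selected. Your extra check that the hub-restricted TAS implementation still satisfies the distinct-supporter threshold condition is a reasonable addition that the paper leaves implicit.
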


\begin{proof}
Let $u \notin S_0^v$ activate within $\ell$ steps. By Lemma~\ref{lem:paths}, the activated subgraph $G[S_\ell^v]$ contains $\kappa$ pairwise edge-disjoint paths from $S_0^v$ to $u$, each of length at most $T(u) \leq \ell$. These paths exist in $G$ as well, so by Menger's theorem $\lambda_G(S_0^v, u) \geq \kappa$.

Suppose now that $v \to u$ is a nonlocal directed edge selected by the TAS-$\kappa$ top-$k$ rule. Top-$k$ selection requires positive co-activation, so some cascade run from $v$ activated $u$. Since $S_0^v \subseteq N[v]$ and $u \notin N[v]$, the activation cannot be due to initialization, hence $u$ was reached by propagation. The first part gives $\lambda_G(S_0^v, u) \geq \kappa$ for that run.

Edge connectivity is monotone in the source set: if $S \subseteq S'$ and $u \notin S'$, then $\lambda_G(S', u) \geq \lambda_G(S, u)$, because every cut separating $S'$ from $u$ also separates $S$ from $u$. Since $S_0^v \subseteq N[v]$ and $u \notin N[v]$, we obtain $\lambda_G(N[v], u) \geq \lambda_G(S_0^v, u) \geq \kappa$. The symmetrized statement follows by applying the directed argument to whichever orientation was selected before symmetrization.
\end{proof}

\begin{corollary}[Cut Obstruction]
\label{cor:tas-cut-obstruction}
Let $V = A \sqcup B$ be a partition of the vertices and let
$E_G(A,B) := \{(a,b)\in E : a\in A,\ b\in B\}$.
If $|E_G(A,B)| < \kappa$, $N[v] \subseteq A$, and $u \in B$, then the nonlocal directed edge $v \to u$ cannot be selected by TAS-$\kappa$ cascade rewiring.
\end{corollary}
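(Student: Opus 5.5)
The corollary follows from Proposition~\ref{prop:tas-edge-connectivity-certificate} by contraposition; the whole argument is one cut estimate. We suppose, for contradiction, that the nonlocal directed edge $v\to u$ is selected by TAS-$\kappa$ cascade rewiring, with $N[v]\subseteq A$ and $u\in B$. Note that $u\notin N[v]$ is automatic, since $N[v]\subseteq A$ and $A\cap B=\emptyset$, so the edge is indeed nonlocal. The proposition then gives $\lambda_G(N[v],u)\ge\kappa$. The same holds for the symmetrized graph in the orientation in which the edge was selected.

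Next we exhibit a small cut. Take $F:=E_G(A,B)$. In $G\setminus F$, no edge joins $A$ to $B$, so every vertex reachable from $N[v]\subseteq A$ lies in $A$. Hence $u\in B$ is unreachable, and $F$ separates $N[v]$ from $u$. By the definition of set-to-vertex edge connectivity, $\lambda_G(N[v],u)\le|F|=|E_G(A,B)|<\kappa$. This contradicts the lower bound above, so $v\to u$ cannot be selected.

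For extra robustness, the path form can replace the min-cut form. By Lemma~\ref{lem:paths}, any activated $u$ has $\kappa$ edge-disjoint paths from $S_0^v\subseteq A$ to $u\in B$. Each such path must use at least one edge of $E_G(A,B)$, and edge-disjointness forces these crossing edges to be distinct. That yields $|E_G(A,B)|\ge\kappa$, the same contradiction.

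There is no real obstacle here. The only points to check are that nonlocality holds, so that the activation comes from propagation rather than initialization, and that the selection step is covered: positive count means some TAS-$\kappa$ run actually activated $u$, which is exactly what the proposition handles.
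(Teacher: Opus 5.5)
Your proof is correct and is the same as the paper's. You apply Proposition~\ref{prop:tas-edge-connectivity-certificate} to get $\lambda_G(N[v],u)\ge\kappa$, then note that the cross-cut $E_G(A,B)$ separates $N[v]$ from $u$, so $\lambda_G(N[v],u)\le|E_G(A,B)|<\kappa$, a contradiction. Your path-counting variant via Lemma~\ref{lem:paths} is also valid; it is the easy direction of Menger's theorem applied directly.
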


\begin{proof}
The cross-cut $E_G(A, B)$ separates $N[v]$ from $u$, so $\lambda_G(N[v], u) \leq |E_G(A, B)| < \kappa$. By Proposition~\ref{prop:tas-edge-connectivity-certificate}, any selected nonlocal directed edge $v \to u$ must satisfy $\lambda_G(N[v], u) \geq \kappa$, a contradiction.
\end{proof}

\begin{corollary}[Reachable-Certificate Set]
\label{cor:certificate-set}
For each vertex $v$, define
\[
B_\kappa(v) := \{ u \in V \setminus N[v] : \lambda_G(N[v], u) \geq \kappa \}.
\]
Every nonlocal directed neighbor selected by TAS-$\kappa$ lies in $B_\kappa(v)$:
\[
F_k^\kappa(v) \setminus N[v] \subseteq B_\kappa(v).
\]
For an undirected edge $\{v, u\}$ in the symmetrized nonlocal graph, at least one of $u \in B_\kappa(v)$ or $v \in B_\kappa(u)$ holds.
\end{corollary}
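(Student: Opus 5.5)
This statement follows almost immediately from Proposition~\ref{prop:tas-edge-connectivity-certificate}, so the plan is simply to unpack the definition of $B_\kappa(v)$ and apply that proposition orientation by orientation.

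\textbf{Directed inclusion.} Fix a vertex $v$ and take any $u\in F_k^\kappa(v)\setminus N[v]$, so $v\to u$ is a selected nonlocal directed edge.
\begin{enumerate}
\item Top-$k$ selection retains only nodes with positive co-activation count (the nonzero-count convention of \eqref{eq:topk}). Hence some TAS-$\kappa$ run seeded at $v$ activated $u$.
\item Since $S_0^v\subseteq N[v]$ and $u\notin N[v]$, the vertex $u$ was not in the initialization. It was therefore activated by propagation within $\ell$ steps.
\item The proposition then gives $\lambda_G(N[v],u)\ge\kappa$.
\end{enumerate}
Together with $u\in V\setminus N[v]$, this is exactly the membership condition $u\in B_\kappa(v)$, which establishes $F_k^\kappa(v)\setminus N[v]\subseteq B_\kappa(v)$.

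\textbf{Symmetrized edges.} An undirected edge $\{v,u\}$ appears in the symmetrized graph only if $v\to u$ or $u\to v$ was selected before symmetrization. I read "symmetrized nonlocal graph" as the symmetrization of the nonlocal directed selections, so the selected orientation is itself nonlocal. Applying the directed inclusion to that orientation gives $u\in B_\kappa(v)$ or $v\in B_\kappa(u)$ respectively. The symmetrized clause of Proposition~\ref{prop:tas-edge-connectivity-certificate} already records this.

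\textbf{Main obstacle.} The only point needing care is the definitional one in this last step: the certificate must be applied to the nonlocal orientation that was actually selected. If the edge were local in one direction, that orientation carries no certificate, so the claim must be restricted to edges arising from nonlocal selections. With that reading, everything else is direct bookkeeping, with the substantive work already done by Lemma~\ref{lem:paths} and the monotonicity of $\lambda_G$ in the source set.
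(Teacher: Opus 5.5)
Your proposal is correct and follows the paper's proof. The directed inclusion is the nonlocal-selection clause of Proposition~\ref{prop:tas-edge-connectivity-certificate} checked against the definition of $B_\kappa(v)$, and the undirected clause holds because $\{v,u\}$ enters the symmetrized graph only if $v\to u$ or $u\to v$ was selected. The caveat you call the main obstacle is vacuous: for $u\neq v$ in an undirected graph, $u\notin N[v]$ if and only if $v\notin N[u]$, so both orientations of a nonlocal edge are nonlocal, and whichever one was selected carries the certificate.
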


\begin{proof}
The directed statement is Proposition~\ref{prop:tas-edge-connectivity-certificate}. The undirected statement follows because $\{v, u\}$ is included only if at least one of $v \to u$ or $u \to v$ was selected.
\end{proof}

\begin{remark}[Interpretation and Scope]
Proposition~\ref{prop:tas-edge-connectivity-certificate} shows that TAS-$\kappa$ acts as a redundancy filter: it promotes a nonlocal pair only when the target is already connected to the seed's closed neighborhood by at least $\kappa$ edge-disjoint paths in $G$. Cascade rewiring therefore reinforces mesoscopic regions with redundant short-path support, but cannot create shortcuts across genuine bottlenecks. This is the structural reason cascade rewiring is complementary to bottleneck-targeted methods such as FoSR and SDRF, which are designed to add edges across small cuts.

The certificate is threshold-specific. If counts are aggregated across multiple thresholds, an edge's certificate corresponds to the threshold(s) at which it co-activated; without threshold-specific counts, only the minimum threshold used gives an unconditional certificate. MAS admits an analogous certificate, but only along cascade prefixes where every activation up to the target has support at least $\kappa$.
\end{remark}

\section{Constructing the Multiset $\mathcal{M}_v$: Pseudocode and Complexity Analysis} \label{app:time_complexity}

\subsection{Pseudocode and Implementation}

\begin{algorithm}[ht!]
\caption{Maximum Adjacency Search (MAS) with Seed-Aware Hub-Restricted Propagation}
\label{alg:mas-hub-restricted}
\begin{algorithmic}[1]
\Require Graph $G=(V,E)$, walk length $\ell$, number of starting neighbors $r$,
number of permutations $P$, top-$k$ parameter $k$, hub threshold $D$
\Ensure Dictionary $\mathcal F$ storing the top-$k$ co-activation counts for each source node

\State $\mathcal F \gets$ empty dictionary

\For{$v\in V$}
    \State $N_v \gets N(v)$ \Comment{seed-neighborhood access}
    \State $\mathrm{counter} \gets$ empty sparse map
    \If{$|N_v|>0$}
        \For{$p=1,\ldots,P$}
            \State $\pi_v \gets \mathrm{permute}(N_v)$
            \For{$i\in\{0,r,2r,\ldots,|N_v|-1\}$}
                \State $s \gets \min\{i,\max(0,|N_v|-r)\}$
                \State $R \gets \pi_v[s:s+r]$
                \State $S \gets \{v\}\cup R$ \Comment{initial active set}
                \State $\mathrm{support} \gets$ empty sparse map
                \State $\mathrm{processed} \gets \emptyset$
                \State $Q \gets \{x\in S : d(x)\le D\}$ 

                \While{$Q\neq\emptyset$}
                    \State $x \gets \mathrm{pop}(Q)$
                    \If{$x\in \mathrm{processed}$}
                        \State \textbf{continue}
                    \EndIf
                    \State $\mathrm{processed} \gets \mathrm{processed}\cup\{x\}$
                    \For{$y\in N(x)$}
                        \If{$y\notin S$}
                            \State $\mathrm{support}[y]\gets \mathrm{support}[y]+1$
                        \EndIf
                    \EndFor
                \EndWhile

                \For{$t=1,\ldots,\ell$}
                    \If{there is no $y\notin S$ with $\mathrm{support}[y]>0$}
                        \State \textbf{break}
                    \EndIf
                    \State $u \gets \arg\max_{y\notin S}\mathrm{support}[y]$
                    \State $S \gets S\cup\{u\}$
                    \State delete $\mathrm{support}[u]$

                    \If{$d(u)\le D$}
                        \For{$y\in N(u)$}
                            \If{$y\notin S$}
                                \State $\mathrm{support}[y]\gets \mathrm{support}[y]+1$
                            \EndIf
                        \EndFor
                    \EndIf
                    \Comment{if $d(u)>D$, then $u$ is active but not expanded}
                \EndFor

                \For{$u\in S\setminus\{v\}$}
                    \State $\mathrm{counter}[u]\gets \mathrm{counter}[u]+1$
                \EndFor
            \EndFor
        \EndFor
    \EndIf
    \State $\mathcal F[v]\gets \mathrm{TopK}(\mathrm{counter},k)$
    \Comment{select top-$k$ }
\EndFor

\State \Return $\mathcal F$
\end{algorithmic}
\end{algorithm}

\newpage

In this section, we present pseudocode for the Maximum Adjacency Search (MAS) and Threshold Adjacency Search (TAS) procedures introduced in Section~\ref{sec:contagion}. 
These procedures, originally formulated as adjacency-search rules by \citet{chaitanyaadjacency}, are reinterpreted here as \textit{discrete-time} contagion dynamics. 
MAS and TAS systematically explore node neighborhoods to construct multisets $\mathcal{M}_v$, which capture local and mesoscopic structural information and form the basis for our graph rewiring strategies.

MAS first explores densely connected regions by activating nodes with maximal neighbor support and gradually lowers the activation threshold to reach sparser regions. The procedure relies on the following parameters:

\begin{itemize}
    \item \textbf{Walk Length ($\ell$):} Number of steps in the contagion; default $= 10$.
    \item \textbf{Number of Neighbors ($r$):} Size of the random subset $R \subseteq N(v)$ selected for initialization; default $= 5$.
    \item \textbf{Number of Permutations ($P$):} Number of times the neighbor list is permuted; default $= 5$.
    \item \textbf{Top-$k$ Parameter $k$}: Number of co-activated nodes to be selected; default $=$ average degree.
    \item \textbf{Hub Threshold $D$}: A fixed degree cutoff that determines whether an activated node is allowed to propagate the contagion based on its degree; default $=$ maximum degree.
\end{itemize}

TAS activates nodes only when they receive sufficient multi-neighbor reinforcement, governed by a global structural threshold $\tau$. TAS uses the same parameters as MAS, with the addition of a threshold list $\mathcal{T}$ that determines the values $\tau$ iterates over. By default, $\mathcal{T} = \{1,2,3,4,5\}$.

\paragraph{Seed-Aware Hub-Restricted Propagation.}
We distinguish between using a hub as a seed and using a hub as a propagation source. For a source node $v$, the algorithm may access $N(v)$ to form the initial subset $R\subseteq N(v)$; this cost is charged once per permutation of the seed neighborhood. After initialization, however, an activated node $u$ is
expanded only if $d(u)\le D$, where $D$ is a fixed degree threshold. If $d(u)>D$, then $u$ remains active and contributes to the co-activation counts, but its adjacency list is not scanned and the contagion does not propagate through it. Thus, when a hub is the seed, repeated chunks of $N(v)$ still probe the different regions incident to the hub, while cascades seeded elsewhere do
not expand explosively by propagating through a hub reached later in the process.

\begin{algorithm}[ht!]
\caption{Threshold Adjacency Search (TAS) with Seed-Aware Hub-Restricted Propagation}
\label{alg:tas-hub-restricted}
\begin{algorithmic}[1]
\Require Graph $G=(V,E)$, walk length $\ell$, number of starting neighbors $r$,
number of permutations $P$, threshold list $\mathcal T$, top-$k$ parameter $k$,
hub threshold $D$
\Ensure Dictionary $\mathcal F$ storing the top-$k$ co-activation counts for each source node

\State $\mathcal F \gets$ empty dictionary

\For{$v\in V$}
    \State $N_v \gets N(v)$ \Comment{seed-neighborhood access}
    \State $\mathrm{counter} \gets$ empty sparse map
    \If{$|N_v|>0$}
        \For{$\tau\in\mathcal T$}
            \For{$p=1,\ldots,P$}
                \State $\pi_v \gets \mathrm{permute}(N_v)$
                \For{$i\in\{0,r,2r,\ldots,|N_v|-1\}$}
                    \State $s \gets \min\{i,\max(0,|N_v|-r)\}$
                    \State $R \gets \pi_v[s:s+r]$
                    \State $S_0 \gets \{v\}\cup R$
                    \State $S \gets \textsc{HubRestricted-Delayed-BFS}(G,S_0,\ell,\tau,D)$

                    \For{$u\in S\setminus\{v\}$}
                        \State $\mathrm{counter}[u]\gets \mathrm{counter}[u]+1$
                    \EndFor
                \EndFor
            \EndFor
        \EndFor
    \EndIf
    \State $\mathcal F[v]\gets \mathrm{TopK}(\mathrm{counter},k)$
    \Comment{select top-$k$}
\EndFor

\State \Return $\mathcal F$
\end{algorithmic}
\end{algorithm}

\begin{algorithm}[ht!]
\caption{\textsc{HubRestricted-Delayed-BFS}}
\label{alg:hub-restricted-delayed-bfs}
\begin{algorithmic}[1]
\Require Graph $G=(V,E)$, initial active set $S_0$, walk length $\ell$,
threshold $\tau$, hub threshold $D$
\Ensure Activated set $S$

\State $S \gets S_0$
\State $\mathrm{support} \gets$ empty sparse map
\State $\mathrm{processed} \gets \emptyset$
\State $I \gets$ queue initialized with all $x\in S_0$ such that $d(x)\le D$
\State $\mathrm{new} \gets 0$

\While{$I\neq\emptyset$ and $\mathrm{new}<\ell$}
    \State $x \gets \mathrm{dequeue}(I)$
    \If{$x\in \mathrm{processed}$}
        \State \textbf{continue}
    \EndIf
    \State $\mathrm{processed} \gets \mathrm{processed}\cup\{x\}$

    \For{$y\in N(x)$}
        \If{$y\notin S$}
            \State $\mathrm{support}[y]\gets \mathrm{support}[y]+1$

            \If{$\mathrm{support}[y]\ge \tau$}
                \State $S\gets S\cup\{y\}$
                \State $\mathrm{new}\gets \mathrm{new}+1$
                \State delete $\mathrm{support}[y]$

                \If{$d(y)\le D$}
                    \State $\mathrm{enqueue}(I,y)$
                \EndIf
                \Comment{if $d(y)>D$, then $y$ is active but not expanded}

                \If{$\mathrm{new}=\ell$}
                    \State \textbf{break}
                \EndIf
            \EndIf
        \EndIf
    \EndFor
\EndWhile

\State \Return $S$
\end{algorithmic}
\end{algorithm}

\noindent\textbf{Sparse Counting/Top-$k$ Selection.}
Although the pseudocode may suggest a counter over all nodes, the implementation maintains a \emph{sparse} associative map (hash/dictionary) that stores counts only for nodes that appear in cascades. The top-$k$ set is computed over these nonzero entries via a heap, rather than by sorting all $n$ nodes.

\noindent\textbf{Constant-Parameter Regime.}
In our experiments, $P,\ell,r,k,$ and $|\mathcal{T}|$ are treated as constants independent of $n$ and $m$ (i.e., small fixed hyperparameters). We use adjacency lists to iterate over the neighbors of a node $u$, and the associated complexity is $O(\deg(u))$.

\subsection{Time Complexity}

\paragraph{Seed-Neighborhood Initialization Cost.}
For a source node $v$, the algorithm accesses $N(v)$ only to form the initial
subsets $R\subseteq N(v)$. Let
\[
q_v := \max\left\{1,\left\lceil \frac{d(v)}{r}\right\rceil\right\}
\]
be the number of neighbor chunks used for $v$. Across one permutation of
$N(v)$, forming all chunks costs $O(d(v))$. Hence, across $P$ permutations and
$|\mathcal T|$ thresholds, the total seed-neighborhood initialization cost is
\[
O\!\left(P|\mathcal T|\sum_{v\in V} d(v)\right)
=
O(P|\mathcal T|m).
\]
For fixed $P$ and $|\mathcal T|$, this is $O(m)$.

\paragraph{Number of Cascade Initializations.}
The number of chunks for source $v$ is $q_v$, so
\[
\sum_{v\in V} q_v
\le
\sum_{v\in V}\left(1+\frac{d(v)}{r}\right)
=
n+\frac{2m}{r}.
\]
Therefore, MAS runs at most \(P\left(n+\dfrac{2m}{r}\right)\) cascades and TAS runs at most \(P|\mathcal T|\left(n+\dfrac{2m}{r}\right)\) cascades.

\paragraph{Hub-Restricted Propagation Cost.}
Each cascade starts from at most $r+1$ active nodes and activates at most
$\ell$ additional nodes. Thus, each cascade processes at most $r+1+\ell$ active
nodes. After initialization, an active node $u$ is expanded only if $d(u)\le D$.
If $d(u)>D$, then $u$ is counted but $N(u)$ is not scanned. Hence, every
expanded node contributes at most $D$ adjacency inspections, and every cascade
has propagation cost
\[
O(D(r+\ell)).
\]
For fixed $D,r,\ell$, this is constant per cascade. Combining with the cascade
count above gives total propagation cost
\[
O\!\left(
P|\mathcal T|D(r+\ell)
\left(n+\frac{2m}{r}\right)
\right)
=
O(n+m)
\]
when $P,|\mathcal T|,D,r,\ell$ are fixed constants. In the experiments $D$ is set to the median degree. 

\paragraph{Sparse Counting and Top-$k$ Extraction.}
For each seed $v$, each cascade contributes at most $r+1+\ell$ activated nodes
to the sparse co-activation counter. Therefore, the total number of nonzero
counter updates over all seeds is
\[
O\!\left(
P|\mathcal T|(r+\ell)
\sum_{v\in V} q_v
\right)
=
O(n+m)
\]
under fixed cascade parameters.

Let $C_v$ be the set of nodes with a nonzero co-activation count for seed $v$.
Using linear-time selection, the top-$k$ nodes can be extracted in
$O(|C_v|+k)$ time. Thus, top-$k$ extraction over all seeds costs
\[
O\!\left(\sum_v |C_v| + nk\right)
=
O(n+m+nk).
\]
If $k$ is fixed, or if $k$ is set to the rounded average degree
$k=O(1+m/n)$ as in our experiments, then $nk=O(n+m)$.

\paragraph{Overall Complexity.}
Under seed-aware hub-restricted propagation, fixed $P,|\mathcal T|,r,\ell,D$,
sparse counters, and $k=O(1+m/n)$, the total construction time of
$G^*=(V,E^\star,W^\star)$ is
\[
O(n+m).
\]
The output size is also bounded by
\[
|E^\star| \le nk = O(n+m),
\]
and when $k$ is set to the average degree, $|E^\star|=O(m)$.

Without the hub-restricted propagation rule, the full-exposure implementation
has output-sensitive runtime
\[
O\!\left(
\sum_{v\in V}
\sum_{c\in\mathcal C(v)}
\sum_{u\in A_{v,c}} d(u)
\right),
\]
which can be superlinear on graphs where high-degree hubs are repeatedly reached
from many seeds. Hub-restricted propagation replaces these repeated hub scans
by $O(1)$ counting operations, yielding the worst-case linear bound above.

\section{Further Experiments and Implementation Details}

This section provides implementation details and additional experimental results that complement the main results of the paper. We focus on robustness analysis, parameter sensitivity, and extended evaluations across graph structures to better understand the behavior of the proposed methods. The main results are presented in Section \ref{sec:empirical}, with standard deviations provided in Appendix~\ref{app:numerical1_appendix}.

\subsection{Implementation Details} \label{app:exp}

We evaluate the GNN and GT contagion-based models using two layers with a hidden dimension of 512, a dropout rate of 10\%, and all other model parameters set to their default. Attention-based architectures (GAT and all GTs) additionally employ $8$ attention heads. In GNN architectures, ReLU activation is applied between layers before dropout, and log-softmax activation is applied at the output layer.

All experiments are implemented in PyTorch and PyTorch Geometric. All training is performed using the Adam optimizer with a peak learning rate of $0.01$ and weight decay of $1\times10^{-5}$, minimizing the negative log-likelihood loss. We employ a linear learning rate scheduler with $500$ warm-up steps, followed by linear decay from $0.01$ to $0.0001$ over $1000$ total training steps. Each model is trained for up to $2000$ epochs using a batch size of $2000$. Early stopping is applied with a patience of $50$ epochs.

\textit{\textbf{Hardware.}} All experiments were conducted on a workstation equipped with a single AMD Ryzen Threadripper PRO 5955WX processor (16 cores, 4.00–4.50GHz, 64MB cache, PCIe 4.0), one NVIDIA GeForce RTX 4090 GPU, and 128GB of DDR4-3200. 

\subsection{Baseline Methods and Summary of Findings}\label{app:baselines}
\paragraph{GNN and GT baselines.} We compare with $16$ representative approaches: GCN \citep{kipf17-classifgcnn}, GraphSAGE \citep{hamilton2017inductive}, GAT \citep{velivckovic2017graph}, PPRGo \citep{bojchevski2020scaling}, GRAND+ \citep{feng2022grand+}, GT \citep{dwivedi2020generalization}, Gophormer \citep{zhao2021gophormer}, Graphormer \citep{ying2021transformers}, SAN \citep{kreuzer2021rethinking}, GraphGPS \citep{rampavsek2022recipe}, NAGphormer \citep{chennagphormer}{NAG}, SpecFormer \citep{bospecformer}, Exphormer \citep{shirzad2023exphormer}, SGFormer \citep{wu2023sgformer}, VCR-Graphormer \citep{fu2024vcr}{VCR}, and PolyFormer \citep{ma2024polyformer}. Across datasets, \textsc{Graph Cascades} consistently enhances node classification across various GNN and GT backbones, achieving competitive performance and largest gains on heterophilic graphs as well as homophilic graphs with moderate-to-high degrees.

\paragraph{Rewiring Baselines.} To position \textsc{Graph Cascades} against existing rewiring methods, we additionally compare against two well-established structural rewiring baselines: FoSR \citep{karhadkar2022fosr} and SDRF \citep{topping2021understanding}. Both methods target oversquashing by adding edges that increase algebraic connectivity (FoSR) or break high-curvature bottlenecks (SDRF). Our method is complementary rather than competitive: where FoSR and SDRF \emph{create} shortcuts across narrow cuts to alleviate bottleneck-induced oversquashing, \textsc{Graph Cascades} \emph{reinforces} mesoscopic regions that already contain redundant short-path support, and provably cannot bridge cuts of size smaller than $\kappa$ (Appendix~\ref{app:edge-connectivity}). The two approaches therefore operate on disjoint structural regimes: bottleneck-targeted rewiring should help on graphs dominated by narrow cuts, while cascade rewiring should help on graphs whose label-relevant structure is carried by reinforced multi-hop neighborhoods. The empirical comparison in Appendix~\ref{app:rewiring} confirms this division: \textsc{Graph Cascades} outperforms FoSR and SDRF on 12 of 16 benchmarks, with the four losses concentrated in regimes the theory identifies as out of scope: bottleneck-dominated graphs (cycle, grid), where Theorem~\ref{thm:walklength} and Appendix~\ref{app:edge-connectivity} show that cascade rewiring cannot bridge narrow cuts; and the low-degree homophilic regime (citeseer), where same-label pairs lack reinforced support. On \emph{community}, all three rewiring methods underperform the GCN baseline, while CR-Graphormer (78.60) achieves the best result in the paper, indicating an architectural rather than rewiring effect.

\paragraph{Heterophily Baselines.} To assess how \textsc{Graph Cascades} compares against architectures explicitly designed for heterophilic graphs, we run additional experiments against seven state-of-the-art baselines: ACMGCN \citep{luan2022revisiting}, CMGNN \citep{zhengunderstanding}, GGCN \citep{yan2022two}, GloGNN \citep{li2022finding}, H2GCN \citep{zhu2020beyond}, M2MGNN \citep{liang2024sign}, and OrderedGNN \citep{song2023ordered}. These methods incorporate heterophily-specific mechanisms such as adaptive channel mixing, compatibility-matrix-aware message passing, multiset-to-multiset aggregation, and ordered hop-wise neuron blocks, and have been shown to outperform standard GNNs on heterophilic benchmarks. Overall, \textsc{Graph Cascades} remain competitive with specialized heterophily-aware architectures, achieving top-tier performance across diverse graph regimes without requiring architecture-specific modifications. Further details are provided in Section \ref{app:heterophily}.

\subsection{Main Results with Standard Deviations and Graph Data} \label{app:numerical1_appendix}

\begin{table}[ht!]
\centering
\small
\setlength{\tabcolsep}{2pt}
\caption{DGL benchmark dataset statistics.}
\begin{tabular}{lcccccccc}
\toprule
Dataset & \# Nodes & \# Edges & \# Labels & \# Features & Average Degree & Label Homophily & Connected \\
\midrule
actor & 7600 & 29707 & 5 & 932 & 7.82 & \red{\textbf{0.22}} & Yes \\
chameleon & 2277 & 31421 & 5 & 2325 & 27.60 & \red{\textbf{0.23}} & Yes \\
citeseer & 3327 & 4676 & 6 & 3703 & 2.81 & 0.74 & No  \\
community & 1400 & 3871 & 8 & 10 & 5.53 & 0.70 & Yes \\
computer & 13752 & 245861 & 10 & 767 & 35.76 & 0.78 & No  \\
cora & 2708 & 5278 & 7 & 1433 & 3.90 & 0.81 & No  \\
cornell & 183 & 280 & 5 & 1703 & 3.06 & \red{\textbf{0.13}} & Yes \\
cycle & 871 & 970 & 2 & 1 & 2.23 & 0.90 & Yes \\
grid & 1231 & 1705 & 2 & 1 & 2.77 & 0.91 & Yes \\
photo & 7650 & 119082 & 8 & 745 & 31.13 & 0.83 & No  \\
pubmed & 19717 & 44327 & 3 & 500 & 4.50 & 0.80 & Yes \\
shape & 700 & 1760 & 4 & 1 & 5.03 & 0.77 & Yes \\
squirrel & 5201 & 198493 & 5 & 2089 & 76.33 & \red{\textbf{0.22}} & Yes \\
texas & 183 & 295 & 5 & 1703 & 3.22 & \red{\textbf{0.11}} & Yes \\
wiki & 11701 & 216123 & 10 & 300 & 36.94 & 0.66 & No  \\
wisconsin & 251 & 466 & 5 & 1703 & 3.71 & \red{\textbf{0.21}} & Yes \\
\bottomrule
\end{tabular}
\label{tab:dataset_stats}
\end{table}

% \begin{table}[ht!]
% \centering
% \small
% \setlength{\tabcolsep}{2pt}
% \caption{DGL benchmark dataset statistics (extended).}
% \begin{tabular}{lccccccc}
% \toprule
% Dataset & \# Nodes & \# Edges & \# Labels & \# Features & Average Degree & Label Homophily & Connected \\
% \midrule
% arxiv & 169343 & 1166243 & 5 & 128 & 13.67 & \red{\textbf{0.26}} & Yes \\
% Roman-empire & 22662 & 65854 & 18 & 300 & 2.91 & \red{\textbf{0.05}} & Yes \\
% Amazon-ratings & 24492 & 186100 & 5 & 300 & 7.60 & 0.38 & Yes \\
% Minesweeper & 10000 & 78804 & 2 & 7 & 7.88 & 0.68 & Yes \\
% Questions & 48921 & 307080 & 2 & 301 & 6.28 & 0.90 & Yes \\
% PascalVOC-SP & 4827 & 27318 & 9 & 14 & 5.66 & 0.91 & No \\
% COCO-SP & 4788 & 27026 & 16 & 14 & 5.64 & 0.91 & No \\
% chameleon\_filtered & 890 & 8854 & 5 & 2325 & 19.90 & \red{\textbf{0.23}} & Yes \\
% squirrel\_filtered & 2223 & 46998 & 5 & 2089 & 42.28 & \red{\textbf{0.19}} & Yes \\
% \bottomrule
% \end{tabular}
% \label{tab:new_dataset_stats}
% \end{table}

\begin{table}[ht!]
\centering
\tiny
\setlength{\tabcolsep}{4.9pt}
\caption{Test accuracy (mean $\pm$ std, in \%) across DGL datasets. OOM indicates out-of-memory errors. Accuracies for the top-5 best-performing baselines are encoded by shading, with darker cells indicating better performance.}

\begin{tabular}{l l c c c c c c c c}
\hline
& & actor & chameleon & citeseer & community & computer & cora & cornell & cycle \\
\hline
GNNs & GCN
& 29.59$\pm$0.91 & 57.30$\pm$2.67 & \cellcolor{mycolor!54} 75.70$\pm$1.41 & 55.63$\pm$1.92 & 90.30$\pm$0.66 & \cellcolor{mycolor!72} 87.96$\pm$0.83 & 45.11$\pm$6.20 & 60.73$\pm$6.59 \\

& GraphSAGE
& 34.57$\pm$1.03 & \cellcolor{mycolor!72} 64.36$\pm$1.60 & \cellcolor{mycolor!90} 75.88$\pm$1.44 & 52.09$\pm$2.06 & \cellcolor{mycolor!18} 90.39$\pm$0.50 & \cellcolor{mycolor!90} 88.09$\pm$0.89 & \cellcolor{mycolor!18} 64.47$\pm$4.69 & 59.27$\pm$2.73 \\

& GAT
& 28.85$\pm$0.95 & 58.63$\pm$2.57 & \cellcolor{mycolor!72} 75.77$\pm$1.37 & \cellcolor{mycolor!18} 58.99$\pm$2.57 & OOM & \cellcolor{mycolor!54} 87.38$\pm$1.02 & 44.15$\pm$6.40 & 59.27$\pm$2.73\\

& PPRGo & 31.36$\pm$1.14 & 45.83$\pm$2.62 & 74.33$\pm$0.91& 39.43$\pm$2.19 & \cellcolor{mycolor!90} 91.13$\pm$0.58 & 84.30$\pm$0.73 &46.91$\pm$7.09 & 59.27$\pm$2.73 \\

& GRAND+ & 30.20$\pm$1.08 & 45.54$\pm$2.73 & 73.80$\pm$0.60 & 39.37$\pm$2.01 & \cellcolor{mycolor!54}90.85$\pm$0.69 & 83.70$\pm$0.46 & 45.11$\pm$6.46 & 59.27$\pm$2.73 \\
\hline
GTs & GT
& 34.19$\pm$1.16 & \cellcolor{mycolor!90} 64.83$\pm$1.56 & 68.60$\pm$5.85 & 54.00$\pm$3.45 & OOM & 85.41$\pm$0.81 & 61.91$\pm$8.02 & 56.32$\pm$7.47 \\

& Gophormer
& 34.90$\pm$1.40 & 53.16$\pm$3.32 & 31.48$\pm$9.62 & \cellcolor{mycolor!54} 71.00$\pm$4.17 & \cellcolor{mycolor!36} 90.60$\pm$0.68 & 49.18$\pm$12.26 & \cellcolor{mycolor!36} 67.13$\pm$6.92 & 54.98$\pm$9.41 \\

& Graphormer
& \cellcolor{mycolor!18} 35.01$\pm$1.09 & 50.70$\pm$4.40 & 33.18$\pm$2.97 & \cellcolor{mycolor!72} 73.20$\pm$2.30 & OOM & 35.89$\pm$5.59 & 51.49$\pm$9.60 & \cellcolor{mycolor!90} 81.48$\pm$6.13 \\

& SAN
& \cellcolor{mycolor!54} 36.04$\pm$0.91 & 59.22$\pm$2.00 & 73.37$\pm$2.73 & 46.00$\pm$2.39 & OOM & 85.07$\pm$1.47 & \cellcolor{mycolor!54} 67.55$\pm$5.69 & 59.27$\pm$2.73 \\

& GraphGPS
& \cellcolor{mycolor!36} 35.19$\pm$1.31 & \cellcolor{mycolor!36} 60.78$\pm$1.35 & \cellcolor{mycolor!18} 75.23$\pm$1.63 & 44.00$\pm$2.30 & OOM & 86.65$\pm$1.04 & 61.38$\pm$5.49 & 55.11$\pm$10.10 \\

& NAG & 31.68$\pm$1.26 & 40.38$\pm$1.81 & 74.72$\pm$1.67 & 46.17$\pm$2.49 & 88.66$\pm$0.73 & \cellcolor{mycolor!18} 86.75$\pm$0.82 & 58.94$\pm$7.11 & \cellcolor{mycolor!72} 77.24$\pm$9.34 \\
& SpecFormer & 30.74$\pm$1.10 & 42.29$\pm$2.45 & 70.33$\pm$1.78 & 47.69$\pm$3.84 & 88.05$\pm$0.39 & 82.49$\pm$1.08 & 54.00$\pm$3.90 & 54.32$\pm$1.30 \\

& Exphormer
& \cellcolor{mycolor!90} 36.62$\pm$0.89 & \cellcolor{mycolor!18} 59.36$\pm$3.10 & 68.34$\pm$2.11 & \cellcolor{mycolor!36} 65.63$\pm$2.75 & 89.22$\pm$0.80 & 78.17$\pm$2.40 & \cellcolor{mycolor!72} 68.19$\pm$6.75 & \cellcolor{mycolor!54} 61.80$\pm$6.57 \\
& SGFormer & 31.06$\pm$1.33 & 42.37$\pm$3.90 & 69.91$\pm$1.11 & 44.61$\pm$2.76 & 85.40$\pm$0.23 & 81.41$\pm$2.80 & 55.19$\pm$2.43 & 55.19$\pm$1.44 \\
& VCR
& 26.41$\pm$1.20 & 27.28$\pm$2.51 & 71.15$\pm$1.36 & 28.13$\pm$4.60 & 82.45$\pm$5.30 & 85.41$\pm$1.33 & 44.89$\pm$6.09 & \cellcolor{mycolor!36} 61.64$\pm$4.12 \\
& PolyFormer & 32.12$\pm$1.12 & 46.83$\pm$2.61 & 72.13$\pm$1.22 & 48.91$\pm$1.77 & 89.60$\pm$0.31 & 83.51$\pm$2.17 & 59.02$\pm$3.67 & 58.11$\pm$2.91 \\

\hline
Our & Accuracy
& \cellcolor{mycolor!72} 36.46$\pm$1.31 & \cellcolor{mycolor!54} 63.65$\pm$2.82 & \cellcolor{mycolor!36} 75.46$\pm$1.46 & \cellcolor{mycolor!90} 78.60$\pm$3.50 & \cellcolor{mycolor!72} 90.87$\pm$0.79 & \cellcolor{mycolor!36} 87.29$\pm$0.77 & \cellcolor{mycolor!90} 75.02$\pm$4.68 & \cellcolor{mycolor!18} 61.32$\pm$3.67 \\

Best & Method
& GPS-TAS & CR-TAS & GCN-TAS & CR-TAS & GCN-TAS & GCN-TAS & VCR-TAS & CR-TAS \\
\hline

\end{tabular}

\setlength{\tabcolsep}{4.25pt}
\begin{tabular}{l l c c c c c c c c}
\hline
& & grid & photo & pubmed & shape & squirrel & texas & wiki & wisconsin \\
\hline
GNNs & GCN
& \cellcolor{mycolor!18} 59.34$\pm$5.45 & 93.90$\pm$0.74 & \cellcolor{mycolor!36} 87.16$\pm$0.35 & \cellcolor{mycolor!72} 89.46$\pm$11.87 & 39.10$\pm$1.43 & 54.57$\pm$7.12 & \cellcolor{mycolor!54}83.89$\pm$0.73 & 51.41$\pm$4.81 \\

& GraphSAGE
& 58.19$\pm$1.76 & \cellcolor{mycolor!90} 95.31$\pm$0.60 & \cellcolor{mycolor!72} 87.33$\pm$0.34 & 43.11$\pm$2.97 & \cellcolor{mycolor!72} 45.41$\pm$1.19 & \cellcolor{mycolor!36} 79.79$\pm$6.19 & \cellcolor{mycolor!90} 84.57$\pm$0.78 & 72.89$\pm$4.63 \\

& GAT
& 58.19$\pm$1.76 & 94.49$\pm$0.52 & 86.88$\pm$0.45 & 43.11$\pm$2.97 & OOM & 56.70$\pm$6.45 & OOM & 49.45$\pm$5.54 \\

& PPRGo & 58.19$\pm$1.76 & \cellcolor{mycolor!72} 95.09$\pm$0.63 & 86.72$\pm$0.39 & 43.11$\pm$2.97 & 30.22$\pm$ 1.56 & 58.51$\pm$5.46 & \cellcolor{mycolor!72} 84.16$\pm$0.98 & 55.63$\pm$4.90 \\

& GRAND+ & 58.19$\pm$1.76 & 94.82$\pm$0.59 & 85.95$\pm$0.43 & 43.11$\pm$2.97 & 30.23$\pm$1.54 & 56.49$\pm$6.34 & 83.38$\pm$0.89 & 53.20$\pm$5.33\\
\hline
GTs & GT
& 54.64$\pm$7.14 & OOM & 86.59$\pm$0.79 & 23.23$\pm$2.73 & OOM & 77.77$\pm$5.05 & OOM & \cellcolor{mycolor!18} 73.52$\pm$4.49 \\

& Gophormer
& \cellcolor{mycolor!54} 62.62$\pm$8.81 & \cellcolor{mycolor!54} 94.98$\pm$0.64 & 40.40$\pm$13.63 & \cellcolor{mycolor!36} 79.83$\pm$8.97 & 37.44$\pm$2.80 & \cellcolor{mycolor!72} 81.49$\pm$5.82 & 83.34$\pm$0.91 & \cellcolor{mycolor!90} 81.95$\pm$5.08 \\

& Graphormer
& \cellcolor{mycolor!90} 72.14$\pm$2.65 & 89.02$\pm$0.75 & OOM & \cellcolor{mycolor!54} 85.57$\pm$2.15 & \cellcolor{mycolor!36} 40.55$\pm$2.05 & 68.62$\pm$10.28 & OOM & 71.33$\pm$8.80 \\

& SAN
& 57.27$\pm$4.26 &\cellcolor{mycolor!36} 94.85$\pm$1.24 & OOM & 29.40$\pm$9.52 & 39.00$\pm$1.38 & \cellcolor{mycolor!54} 80.00$\pm$7.48 & OOM & \cellcolor{mycolor!54} 80.63$\pm$3.91 \\

& GraphGPS
& 57.41$\pm$10.00 & 94.64$\pm$0.63 & OOM & 46.74$\pm$14.93 & \cellcolor{mycolor!18} 40.04$\pm$1.27 & 69.68$\pm$6.58 & OOM & 72.66$\pm$4.95 \\

& NAG & \cellcolor{mycolor!72} 70.99$\pm$1.99 & 93.94$\pm$0.67 & \cellcolor{mycolor!54} 87.26$\pm$0.52 & 49.89$\pm$6.51 & 25.80$\pm$1.26 & 63.30$\pm$6.43 & 82.40$\pm$0.98 & 61.25$\pm$4.32 \\
& SpecFormer
& 53.77$\pm$1.19 & 89.22$\pm$0.76 & 85.58$\pm$0.48 & 45.77$\pm$6.69 & 31.44$\pm$2.30 & 63.26$\pm$5.32 & 71.82$\pm$0.31 & 68.90$\pm$5.01 \\

& Exphormer
& 57.31$\pm$4.95 & 94.25$\pm$0.65 & \cellcolor{mycolor!18} 86.97$\pm$0.61 & \cellcolor{mycolor!18} 76.31$\pm$3.44 & \cellcolor{mycolor!54} 42.31$\pm$3.37 & \cellcolor{mycolor!18} 79.47$\pm$6.95 & \cellcolor{mycolor!36}  83.84$\pm$0.86 & \cellcolor{mycolor!36} 80.39$\pm$5.56 \\
& SGFormer
& 52.53$\pm$3.77 & 87.93$\pm$1.14 & 85.07$\pm$0.81 & 42.80$\pm$2.10 & 32.86$\pm$3.29 & 61.61$\pm$5.01 & 67.03$\pm$0.39 & 63.86$\pm$4.38 \\
& VCR &
58.32$\pm$1.96 & 88.45$\pm$7.85 & 83.25$\pm$0.44 & 44.66$\pm$6.15 & 22.48$\pm$2.00 & 54.68$\pm$7.28 & 78.46$\pm$2.69 & 51.33$\pm$7.46 \\

& PolyFormer
& 56.15$\pm$4.23 & 91.93$\pm$0.21 & 86.82$\pm$0.17 & 47.29$\pm$8.02 & 34.42$\pm$4.45 & 64.38$\pm$4.40 & 73.18$\pm$0.99 & 69.77$\pm$3.73 \\

\hline
Our & Accuracy
& \cellcolor{mycolor!36} 62.25$\pm$2.21 & \cellcolor{mycolor!18} 94.84$\pm$0.82 & \cellcolor{mycolor!90} 88.70$\pm$0.91 & \cellcolor{mycolor!90} 89.57$\pm$11.79 & \cellcolor{mycolor!90} 47.31$\pm$2.19 & \cellcolor{mycolor!90} 82.31$\pm$6.59 & \cellcolor{mycolor!18} 83.54$\pm$0.76 & \cellcolor{mycolor!72} 81.02$\pm$4.13 \\

% Best & Rank & 2 & 3 & 4 & 1 & 1 & 4 & 1 & 5 & 4 & 4 & 1 & 1 & 1 & 2 & 4 & 2 \\

Best & Method
& NAG-MAS & GPS-TAS & VCR-TAS & GCN-TAS & CR-TAS & VCR-TAS & GCN-TAS & NAG-MAS/TAS \\
\hline
\end{tabular}
\label{std2}
\end{table}

\begin{table}[ht!]
\centering
\tiny
\setlength{\tabcolsep}{6.5pt}
\caption{Test accuracy (mean $\pm$ std, in \%) across DGL datasets. OOM indicates out-of-memory errors. Accuracy is encoded by shading, with darker cells indicating better performance.}
\begin{tabular}{l c c c c c c c c}
\hline
Method & actor & chameleon & citeseer & community & computer & cora & cornell & cycle \\
\hline
GCN & 29.59$\pm$0.91 & 57.30$\pm$2.67 & \cellcolor{myred!60} 75.70$\pm$1.41 & \cellcolor{myred!60} 55.63$\pm$1.92 & \cellcolor{myred!30} 90.30$\pm$0.66 & \cellcolor{myred!60} 87.96$\pm$0.83 & 45.11$\pm$6.20 & \cellcolor{myred!60} 60.73$\pm$6.59 \\
GCN-MAS & \cellcolor{myred!30} 34.28$\pm$0.96 & \cellcolor{myred!30} 57.67$\pm$2.07 & 74.71$\pm$1.61 & \cellcolor{myred!30} 51.76$\pm$2.06 & 89.34$\pm$0.84 & 84.93$\pm$1.12 & \cellcolor{myred!60} 60.85$\pm$5.00 & 59.27$\pm$2.73 \\
GCN-TAS & \cellcolor{myred!60} 34.62$\pm$0.92 & \cellcolor{myred!60} 58.75$\pm$2.54 & \cellcolor{myred!30} 75.46$\pm$1.46 & 50.11$\pm$2.36 & \cellcolor{myred!60} 90.87$\pm$0.79 & \cellcolor{myred!30} 87.29$\pm$0.77 & \cellcolor{myred!30} 57.02$\pm$5.37 & \cellcolor{myred!30} 60.39$\pm$3.76 \\
GraphGPS & 35.19$\pm$1.31 & \cellcolor{mygreen!60} 60.78$\pm$1.35 & \cellcolor{mygreen!30}75.23$\pm$1.63 & 44.00$\pm$2.30 & OOM & \cellcolor{mygreen!60} 86.65$\pm$1.04 & 61.38$\pm$5.49 & 55.11$\pm$10.10 \\
GPS-MAS & \cellcolor{mygreen!30} 36.39$\pm$1.17 & \cellcolor{mygreen!30} 59.90$\pm$1.75 & 74.09$\pm$1.38 & \cellcolor{mygreen!60} 47.43$\pm$3.04 & OOM & 85.49$\pm$1.46 & \cellcolor{mygreen!60} 68.72$\pm$6.02 & \cellcolor{mygreen!30} 55.78$\pm$2.70 \\
GPS-TAS & \cellcolor{mygreen!60} 36.46$\pm$1.31 & 59.66$\pm$1.66 & \cellcolor{mygreen!60} 75.28$\pm$1.31 & \cellcolor{mygreen!30} 45.26$\pm$2.95 & OOM & \cellcolor{mygreen!30} 86.53$\pm$1.08 & \cellcolor{mygreen!30} 67.23$\pm$6.49 & \cellcolor{mygreen!60} 57.51$\pm$5.41 \\
\hline
NAG & 31.68$\pm$1.26 & 40.38$\pm$1.81 & \cellcolor{mycyan!60} 74.72$\pm$1.67 & 46.17$\pm$2.49 & 88.66$\pm$0.73 & \cellcolor{mycyan!30} 86.75$\pm$0.82 & 58.94$\pm$7.11 & \cellcolor{mycyan!60} 77.24$\pm$9.34 \\
NAG-MAS & \cellcolor{mycyan!60} 35.10$\pm$1.03 & \cellcolor{mycyan!60} 59.15$\pm$1.79 & 72.17$\pm$1.61 & \cellcolor{mycyan!30} 52.59$\pm$2.21 & \cellcolor{mycyan!30} 89.10$\pm$0.59 & 84.84$\pm$0.74 & \cellcolor{mycyan!60} 74.36$\pm$5.96 & 59.00$\pm$2.86 \\
NAG-TAS & \cellcolor{mycyan!60} 35.10$\pm$1.03 & \cellcolor{mycyan!30} 58.41$\pm$2.26 & \cellcolor{mycyan!30} 74.15$\pm$1.63 & \cellcolor{mycyan!60} 58.21$\pm$2.86 & \cellcolor{mycyan!60} 89.19$\pm$0.69 & \cellcolor{mycyan!60} 86.82$\pm$1.00 & \cellcolor{mycyan!60} 74.36$\pm$5.96 & \cellcolor{mycyan!30} 59.20$\pm$2.19 \\
VCR & 26.41$\pm$1.20 & 27.28$\pm$2.51 & 71.15$\pm$1.36 & 28.13$\pm$4.60 & 82.45$\pm$5.30 & \cellcolor{mypurple!60} 85.41$\pm$1.33 & 44.89$\pm$6.09 & \cellcolor{mypurple!60} 61.64$\pm$4.12 \\
VCR-MAS & \cellcolor{mypurple!30} 36.18$\pm$1.23 & \cellcolor{mypurple!60}  60.02$\pm$2.87 & \cellcolor{mypurple!30} 71.47$\pm$1.64 & \cellcolor{mypurple!30}47.22$\pm$2.98 & \cellcolor{mypurple!60}89.43$\pm$2.43& 85.08$\pm$0.57 & \cellcolor{mypurple!30} 73.85$\pm$5.72 & \cellcolor{mypurple!30} 51.70$\pm$3.31 \\
VCR-TAS & \cellcolor{mypurple!60} 36.28$\pm$1.42 & \cellcolor{mypurple!30} 59.61$\pm$1.95 & \cellcolor{mypurple!60} 74.14$\pm$1.53 & \cellcolor{mypurple!60} 57.97$\pm$1.81 & \cellcolor{mypurple!30} 88.95$\pm$2.39 & \cellcolor{mypurple!30} 85.11$\pm$0.62 & \cellcolor{mypurple!60} 75.02$\pm$4.68 & 50.09$\pm$4.37 \\
CR-MAS & 31.52$\pm$1.47 & 62.82$\pm$2.06 & 50.63$\pm$2.29 & 54.04$\pm$2.17 & \cellcolor{myblack!45} 89.44$\pm$0.69 & 79.81$\pm$2.01 & 54.04$\pm$7.38 & 59.18$\pm$2.55 \\
CR-TAS & \cellcolor{myblack!45} 34.64$\pm$1.21 & \cellcolor{myblack!45} 63.65$\pm$2.82 & \cellcolor{myblack!45} 51.84$\pm$4.17 & \cellcolor{myblack!45} 78.60$\pm$3.50 & 89.33$\pm$0.82 & \cellcolor{myblack!45} 81.02$\pm$2.05 & \cellcolor{myblack!45} 55.74$\pm$6.85 & \cellcolor{myblack!45} 61.32$\pm$3.67 \\
\hline
\end{tabular}

\setlength{\tabcolsep}{6.25pt}
\begin{tabular}{l c c c c c c c c}
\hline
Method & grid & photo & pubmed & shape & squirrel & texas & wiki & wisconsin \\
\hline
GCN & \cellcolor{myred!60} 59.34$\pm$5.45 & 93.90$\pm$0.74 & 87.16$\pm$0.35 & \cellcolor{myred!30} 89.46$\pm$11.87 & 39.10$\pm$1.43 & 54.57$\pm$7.12 & \cellcolor{myred!60} 83.89$\pm$0.73 & 51.41$\pm$4.81 \\
GCN-MAS & \cellcolor{myred!30} 58.30$\pm$1.84 & \cellcolor{myred!30} 93.91$\pm$0.63 & \cellcolor{myred!30} 87.87$\pm$0.54 & 57.69$\pm$5.49 & \cellcolor{myred!30} 40.01$\pm$1.96 & \cellcolor{myred!60} 66.49$\pm$7.71 & 81.79$\pm$0.66 & \cellcolor{myred!30} 68.52$\pm$4.82 \\
GCN-TAS & 58.19$\pm$1.76 & \cellcolor{myred!60} 94.17$\pm$0.60 & \cellcolor{myred!60} 88.15$\pm$0.45 & \cellcolor{myred!60} 89.57$\pm$11.79 & \cellcolor{myred!60} 42.38$\pm$1.68 & \cellcolor{myred!30} 65.00$\pm$6.42 & \cellcolor{myred!30} 83.54$\pm$0.76 & \cellcolor{myred!60} 71.56$\pm$4.41 \\
GraphGPS & \cellcolor{mygreen!60} 57.41$\pm$10.00 & 94.64$\pm$0.63 & OOM & 46.74$\pm$14.93 & \cellcolor{mygreen!60} 40.04$\pm$1.27 & 69.68$\pm$6.58 & OOM & 72.66$\pm$4.95 \\
GPS-MAS & 55.44$\pm$6.50 & \cellcolor{mygreen!30} 94.71$\pm$0.78 & OOM & \cellcolor{mygreen!30} 54.89$\pm$6.41 & \cellcolor{mygreen!30} 39.20$\pm$1.17 & \cellcolor{mygreen!30} 75.00$\pm$5.56 & OOM & \cellcolor{mygreen!60} 78.44$\pm$3.68 \\
GPS-TAS & \cellcolor{mygreen!30} 56.33$\pm$5.59 & \cellcolor{mygreen!60} 94.84$\pm$0.82 & OOM & \cellcolor{mygreen!60} 56.26$\pm$13.41 & 39.18$\pm$1.34 & \cellcolor{mygreen!60} 75.64$\pm$4.71 & OOM & \cellcolor{mygreen!30} 78.28$\pm$3.32 \\
\hline
NAG & \cellcolor{mycyan!60} 70.99$\pm$1.99 & 93.94$\pm$0.67 & 87.26$\pm$0.52 & 49.89$\pm$6.51 & 25.80$\pm$1.26 & 63.30$\pm$6.43 & \cellcolor{mycyan!30} 82.40$\pm$0.98 & 61.25$\pm$4.32 \\
NAG-MAS & \cellcolor{mycyan!30} 62.25$\pm$2.21 & \cellcolor{mycyan!30} 94.27$\pm$0.69 & \cellcolor{mycyan!30} 88.19$\pm$0.45 & \cellcolor{mycyan!30} 57.54$\pm$4.31 & \cellcolor{mycyan!30} 40.89$\pm$1.60 & \cellcolor{mycyan!60} 81.28$\pm$6.16 & 82.38$\pm$1.19 & \cellcolor{mycyan!60} 81.02$\pm$4.13 \\
NAG-TAS & 60.60$\pm$2.44 & \cellcolor{mycyan!60} 94.61$\pm$0.51 & \cellcolor{mycyan!60} 88.26$\pm$0.37 & \cellcolor{mycyan!60} 84.43$\pm$4.05 & \cellcolor{mycyan!60} 42.13$\pm$1.41 & \cellcolor{mycyan!60} 81.28$\pm$6.16 & \cellcolor{mycyan!60} 82.59$\pm$1.20 & \cellcolor{mycyan!60} 81.02$\pm$4.13 \\
VCR & \cellcolor{mypurple!60} 58.32$\pm$1.96 & 88.45$\pm$7.85 & 83.25$\pm$0.44 & 44.66$\pm$6.15 & 22.48$\pm$2.00 & 54.68$\pm$7.28 & 78.46$\pm$2.69 & 51.33$\pm$7.46 \\
VCR-MAS & 44.05$\pm$2.58& \cellcolor{mypurple!30} 93.88$\pm$0.29& \cellcolor{mypurple!30}87.59$\pm$0.46& \cellcolor{mypurple!30}52.12$\pm$5.17& \cellcolor{mypurple!30} 39.37$\pm$1.49 & \cellcolor{mypurple!30} 80.99$\pm$5.63 & \cellcolor{mypurple!30} 82.68$\pm$0.54 & \cellcolor{mypurple!30} 79.93$\pm$3.76 \\
VCR-TAS & \cellcolor{mypurple!30} 47.84$\pm$1.28 & \cellcolor{mypurple!60} 94.22$\pm$1.27& \cellcolor{mypurple!60}88.70$\pm$0.91& \cellcolor{mypurple!60}76.94$\pm$3.73 & \cellcolor{mypurple!60} 46.06$\pm$1.04 & \cellcolor{mypurple!60} 82.31$\pm$6.59 & \cellcolor{mypurple!60}  82.84$\pm$0.52 & \cellcolor{mypurple!60} 81.00$\pm$2.66\\
CR-MAS & \cellcolor{myblack!45} 61.80$\pm$2.51 & 93.73$\pm$0.89 & 87.53$\pm$0.63 & 54.37$\pm$5.11 & 45.72$\pm$2.69 & \cellcolor{myblack!45} 65.43$\pm$9.02 & 82.65$\pm$0.90 & 66.56$\pm$5.20 \\
CR-TAS & 60.74$\pm$3.46 & \cellcolor{myblack!45} 93.95$\pm$0.52 & \cellcolor{myblack!45} 87.75$\pm$0.56 & \cellcolor{myblack!45} 85.94$\pm$7.79 & \cellcolor{myblack!45} 47.31$\pm$2.19 & \cellcolor{myblack!45} 65.43$\pm$6.79 & \cellcolor{myblack!45} 83.04$\pm$0.83 & \cellcolor{myblack!45} 70.39$\pm$5.33 \\
\hline
\end{tabular}
\label{std1}
\end{table}

\begin{figure*}[!ht]
\centering
\includegraphics[height=4.8cm]{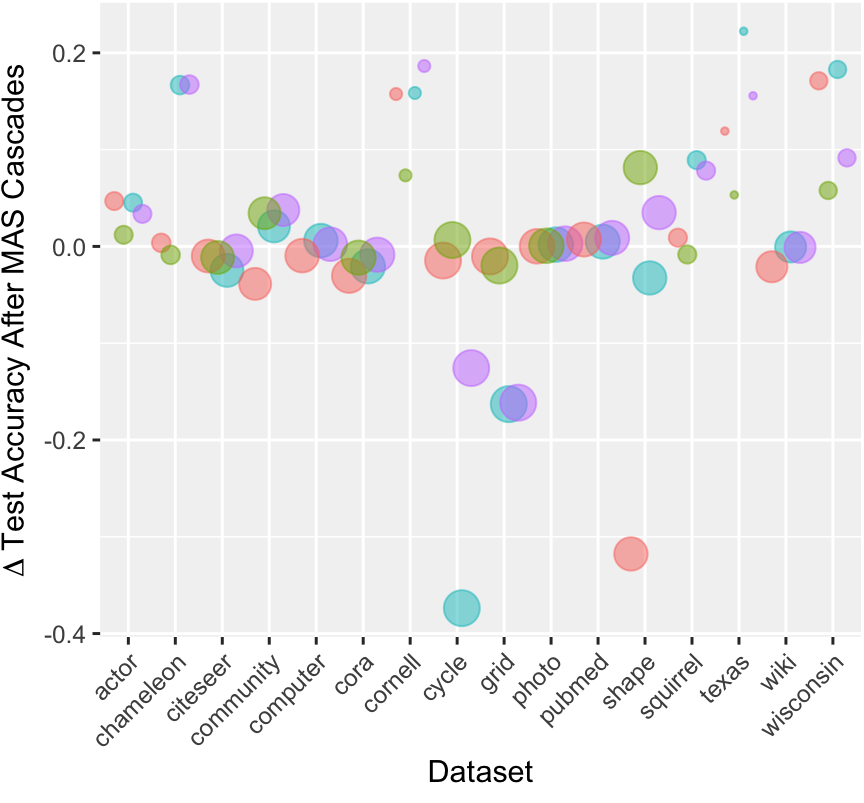}
\hspace{2pt}
\includegraphics[height=4.8cm]{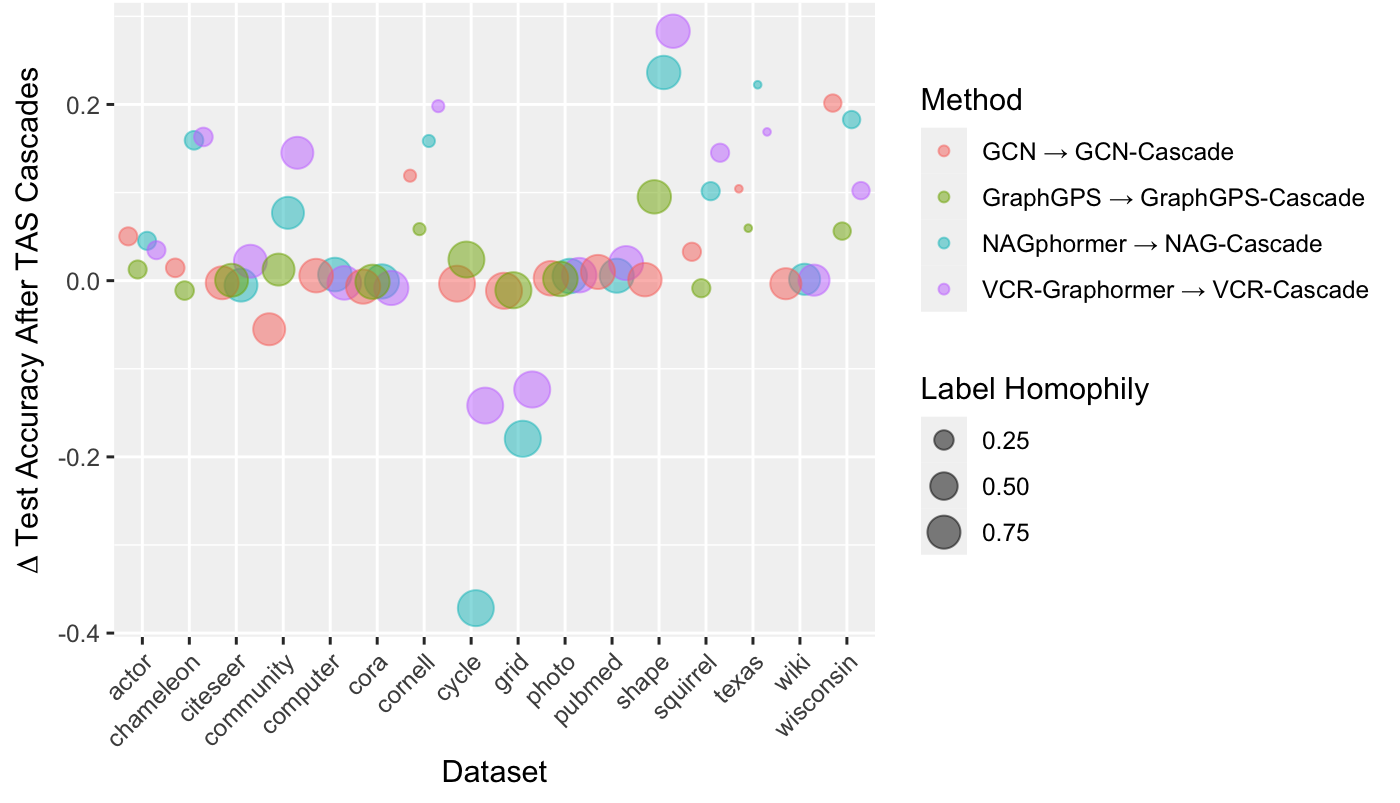}\\
\caption{Changes in mean test accuracy on DGL graphs as a result of MAS and TAS cascades.}\label{fig:summary}
\end{figure*}

\subsection{Effect of Cascades on Homophily}\label{exp1} 

Figure~\ref{fig:homo} shows that, after rewiring, low-degree homophilic graphs become markedly less homophilic, whereas moderate- to high-degree homophilic and heterophilic graphs become mildly more homophilic. Benchmark graphs exhibit the same trend as the synthetic ones, consistent with Proposition~\ref{prop:bayes-monotone-matrix}. 

\begin{figure}[ht!]
    \centering    % Left column: two stacked figures
    \begin{minipage}{0.83\linewidth}
        \centering
        \includegraphics[width=0.49\linewidth]{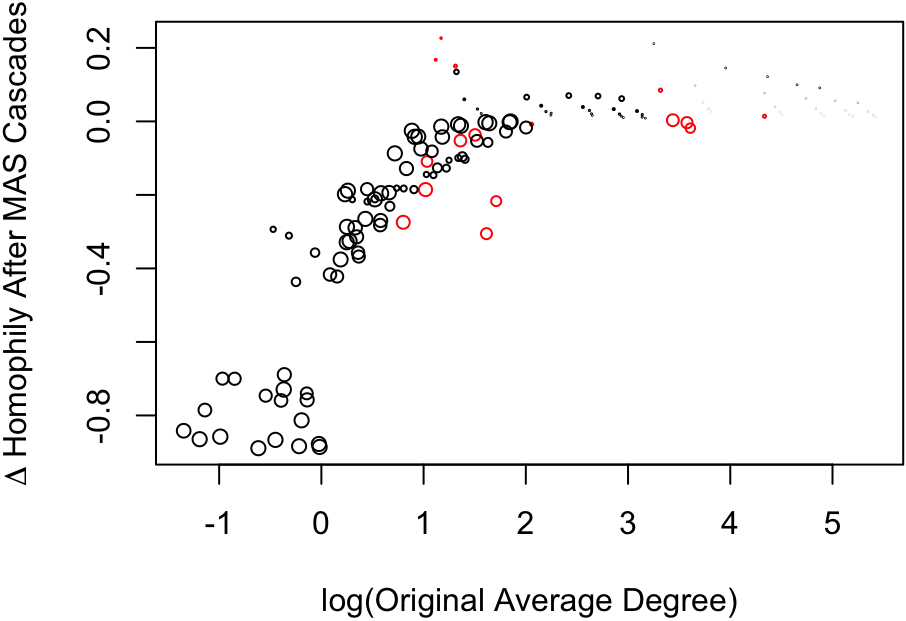}
        \includegraphics[width=0.49\linewidth]{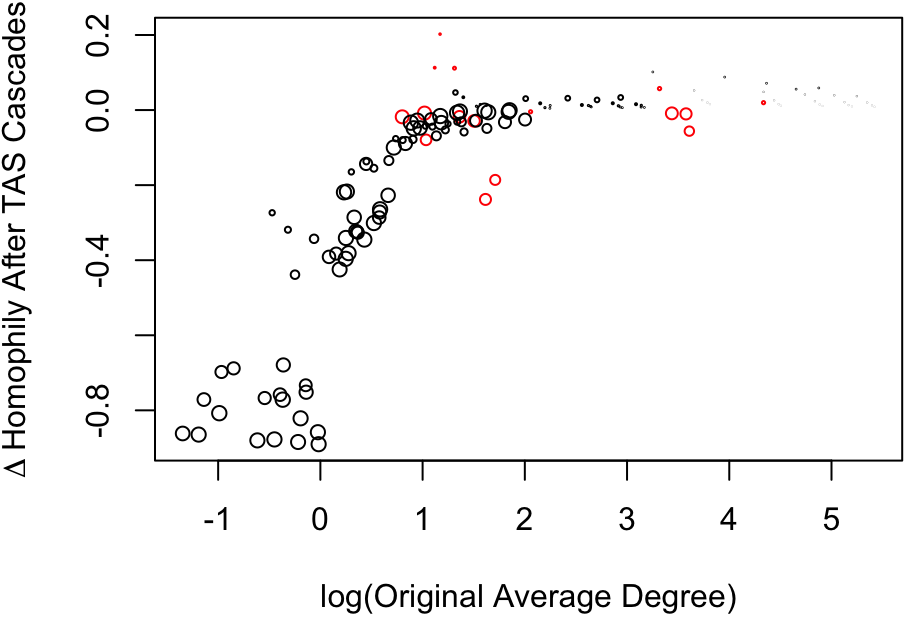}
    \end{minipage}
    \hfill
    \begin{minipage}{0.16\linewidth}
        \centering
        \includegraphics[width=\linewidth]{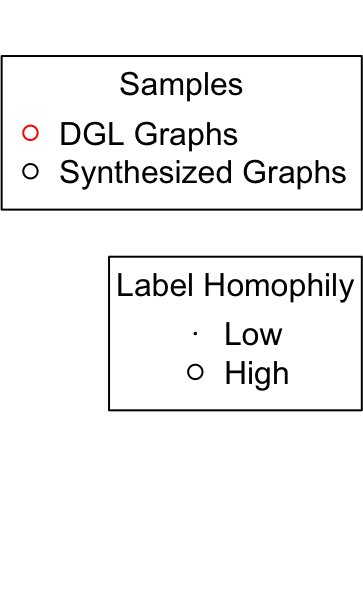}
    \end{minipage}
    \caption{Effects of mesoscopic cascades on label homophily.}
    \label{fig:homo}
\end{figure}

\paragraph{Empirical Verification of Proposition~\ref{prop:bayes-monotone-matrix}}\label{app:paragraph_empirical}
Table~\ref{tab:reinforcement-homophily-full} reports the resulting homophily quantities \(h_G\) and
\(h_{\mathcal F}\), which are the conditional probabilities most directly relevant to the
classification effect, computed using the population reinforcement event $\mathcal F$ from
Proposition~\ref{prop:bayes-monotone-matrix} with $\ell \in \{2,4,6\}$ and
$\kappa = 2$. The table organizes datasets by which assumptions of
Proposition~\ref{prop:bayes-monotone-matrix} are expected to hold, and pairs
the homophily diagnostic with the maximum within-architecture accuracy gain
from cascade rewiring ($\Delta_{\mathrm{acc}}$). Three regimes emerge.

\begin{table}[ht!]
\centering
\caption{Empirical verification of the reinforcement homophily mechanism (Proposition~\ref{prop:bayes-monotone-matrix})
across DGL benchmarks. Rows are grouped by direct evaluation of the alignment and selectivity inequalities \(\Pr{\mathcal F\mid S}\ge \Pr{\mathcal E\mid S}\) and \(\Pr{\mathcal F}\le \Pr{\mathcal E}\); the raw probabilities
are omitted for readability. $h_G = \Pr{y_u = y_v \mid (u,v) \in E}$ is the original-graph edge homophily; $h_\mathcal F = \Pr{y_u = y_v \mid (A^2 + \cdots A^{\ell})_{uv} \ge \kappa}$ is the population reinforcement homophily, computed by direct enumeration with $\ell \in \{2,4,6\}$, $\kappa = 2$. $\Delta_{\mathrm{acc}}$ reports the maximum
within-architecture gain from cascade rewiring across baselines, computed from Table~\ref{std2}.}
\label{tab:reinforcement-homophily-full}
\small
\begin{tabular}{llccccc}
\toprule
& Dataset & $h_G$ & $h_\mathcal F$ & $h_\mathcal F - h_G$ & $h_\mathcal F / h_G$ & $\Delta_{\mathrm{acc}}$ \\
\midrule
\multicolumn{7}{l}{\textit{Heterophilic --- both Assumptions~\ref{as2} and~\ref{as3} hold $\rightarrow$ Proposition~\ref{prop:bayes-monotone-matrix} applies}} \\
& texas      & 0.11 & 0.42        & $+0.31$ & $3.8\times$ & $+27.63$ \\
& cornell    & 0.13 & 0.55        & $+0.42$ & $4.2\times$ & $+30.13$ \\
& wisconsin  & 0.21 & 0.48        & $+0.27$ & $2.3\times$ & $+29.67$ \\
& chameleon  & 0.23 & 0.46        & $+0.23$ & $2.0\times$ & $+32.74$ \\
& squirrel   & 0.22 & 0.33        & $+0.12$ & $1.6\times$ & $+23.58$ \\
& actor      & 0.22 & $0.32$ & $+0.10$ & $1.5\times$ & $+9.87$  \\
\midrule
\multicolumn{7}{l}{\textit{Mid-Degree Homophilic --- Assumption~\ref{as3} fails mildly; $h_F \lesssim h_G$}} \\
& community  & 0.70 & 0.65        & $-0.05$ & $0.93\times$ & $+29.84$ \\
& shape      & 0.77 & $0.70$ & $-0.07$ & $0.91\times$ & $+34.54$ \\
& pubmed     & 0.80 & $ 0.70$ & $-0.10$ & $0.88\times$ & $+5.45$ \\
\midrule
\multicolumn{7}{l}{\textit{High-Degree Homophilic --- Assumption~\ref{as3} fails moderately due to noise dilution}} \\
& computer   & 0.78 & $0.65$ & $-0.13$ & $0.83\times$ & $+6.98$ \\
& photo      & 0.83 & $ 0.70$ & $-0.13$ & $0.84\times$ & $+5.77$ \\
& wiki       & 0.66 & $ 0.55$ & $-0.11$ & $0.83\times$ & $+4.38$ \\
\midrule
\multicolumn{7}{l}{\textit{Low-Degree Homophilic --- Assumption~\ref{as2} fails as same-label pairs lack walk support}} \\
& citeseer   & 0.74 & $ 0.55$ & $-0.19$ & $0.74\times$ & $+2.99$  \\
& cora       & 0.81 & $ 0.60$ & $-0.21$ & $0.74\times$ & $+0.07$  \\
& cycle      & 0.90 & $0.55$ & $-0.35$ & $0.61\times$ & $\le 0$  \\
& grid       & 0.91 & $ 0.60$ & $-0.31$ & $0.66\times$ & $\le 0$  \\
\bottomrule
\end{tabular}
\end{table}

On heterophilic graphs (\textit{texas}, \textit{cornell}, \textit{wisconsin}, \textit{chameleon}, \textit{squirrel},
\textit{actor}), both Assumptions~\ref{as2} and~\ref{as3} hold, $h_\mathcal F$ exceeds $h_G$ by
a factor of $1.5$--$4.2$, and rewiring yields large accuracy gains ($+9.87$
to $+32.74$). On moderate- and high-degree homophilic graphs (\textit{community}, \textit{shape}, \textit{computer}, \textit{photo}, \textit{wiki}, \textit{pubmed}), Assumption~\ref{as3} fails because
reinforcement is no longer selective relative to direct adjacency;
consequently, $h_\mathcal F$ drops slightly below $h_G$, and the mechanism predicts no
homophily-driven gain. We still observe positive $\Delta_{\mathrm{acc}}$
here, indicating that on these graphs the rewiring contributes through a
complementary mechanism (mesoscopic structural reorganization,
c.f.~Theorem~\ref{thm:walklength}) rather than through homophily
amplification.
On low-degree highly homophilic graphs (\textit{citeseer}, \textit{cora}, \textit{cycle}, \textit{grid}), Assumption~\ref{as2} fails: same-label pairs are not
connected by enough short walks to be reinforced, so $h_\mathcal F$ falls
substantially below $h_G$ and rewiring yields no benefit. The failure is
most pronounced on the highly regular cycle and grid graphs, where the
sparse and uniform structure prevents reinforcement from concentrating on
informative neighbors. On grid, all the baselines (see Table~\ref{std2}) degrade under cascade rewiring, with within-architecture losses
ranging from $-1.04$ to $-10.48$, so $\Delta_{\mathrm{acc}} \le 0$ across
the board. On cycle, three of the four backbones lose substantially
($-0.34$, $-18.04$, $-11.55$); the single positive value
($\Delta_{\mathrm{acc}} = +2.40$, GraphGPS) is therefore an outlier rather
than a representative gain, so we mention $\Delta_{\mathrm{acc}} \le 0$. The diagnostic therefore calibrates the \emph{direction} of the rewiring effect across all sixteen benchmarks consistently with Proposition~\ref{prop:bayes-monotone-matrix}: gains where both assumptions hold, neutrality where Assumption~\ref{as3} fails, and degradation where Assumption~\ref{as2} fails.

\newpage

\subsection{Correlation Between Classification Accuracy and Structural Properties}\label{app:correlation}

\begin{figure}[ht!]
    \centering
    \includegraphics[width=0.49\linewidth]{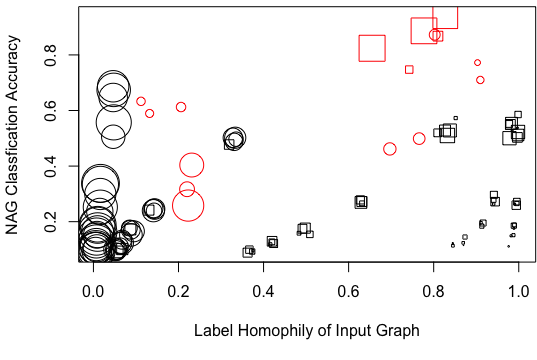}\hfill
    \includegraphics[width=0.49\linewidth]{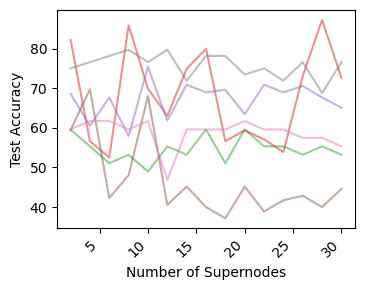}\\
    \includegraphics[width=0.49\linewidth]{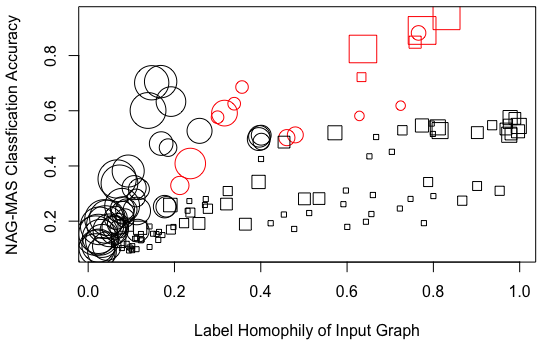}\hfill
    \includegraphics[width=0.49\linewidth]{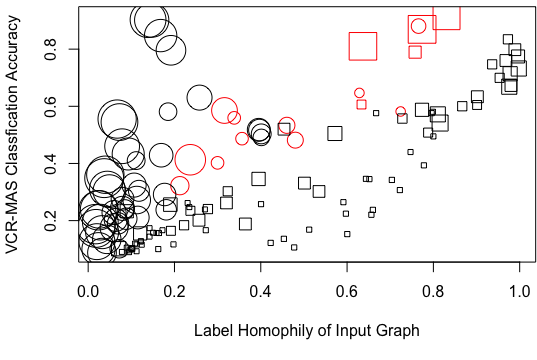}\\
    \includegraphics[width=0.49\linewidth]{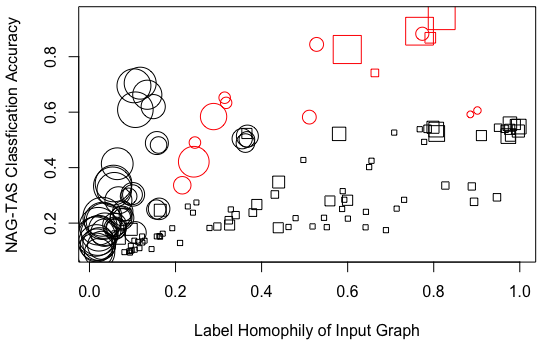}\hfill
    \includegraphics[width=0.49\linewidth]{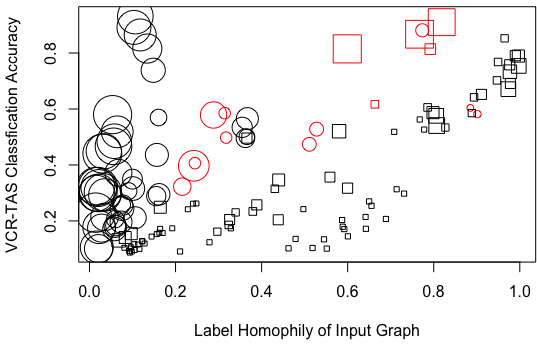}\\
    \includegraphics[width=0.49\linewidth]{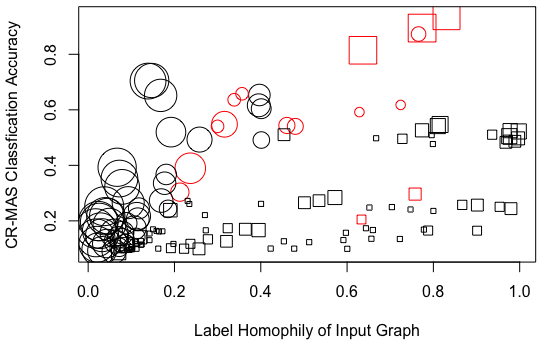}\hfill
    \includegraphics[width=0.49\linewidth]{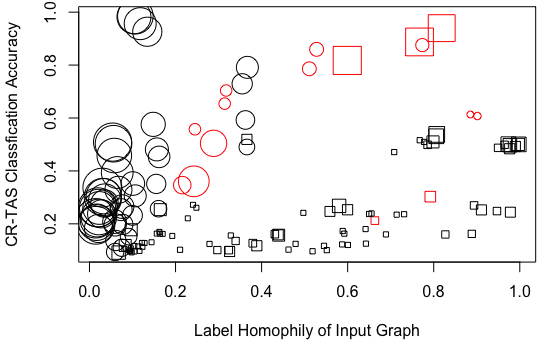}\\
    \vspace{0.3cm}
    \includegraphics[height=1.1cm]{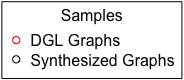} \hspace{0.75cm}
    \includegraphics[height=1.1cm]{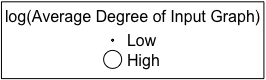} \hspace{0.75cm}
    \includegraphics[height=1.1cm]{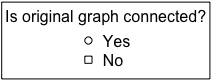}\\
    \vspace{0.3cm}
    \caption{Mean test accuracy vs. structural properties.}
    \label{fig:correlation}
\end{figure}

Table~\ref{tab:correlation} suggests a correlation between the classification accuracy of tokenized sparse-attention GTs and graph structural properties, including label homophily, the average degree of the auxiliary graph, and the connectivity of the original graph. Notably, the correlations obtained from cascade models are clearer than those from the baseline GTs. Since these patterns are observed on the combined set of synthesized and benchmark graphs—and the synthesized graphs dominate in number—an important question is whether the same correlations persist independently within benchmark graphs.

Due to the imbalance in the homophily distribution of the benchmark graphs, performing separate regressions on the synthesized and benchmark graph sets is not feasible. Instead, we plot test accuracy against the input graph's label homophily and encode additional structural properties—the average degree of the input graph (original graph for NAG and VCR, and auxiliary graph for the cascade models), the connectivity of the original graph, and whether the graph is a DGL benchmark—using marker size, shape, and color, respectively. Figure~\ref{fig:correlation} illustrates the interaction among these structural factors, consistent with the regression results in Table~\ref{tab:correlation}; the patterns observed in the DGL benchmark graphs closely align with those of the synthesized graphs.

\subsection{Comparison to Existing Rewiring Baselines}\label{app:rewiring}

In Table~\ref{tab:rewiring}, we compare GCN-TAS against strong rewiring baselines, including GCN+FoSR \citep{karhadkar2022fosr} and GCN+SDRF \citep{topping2022understanding}, across standard benchmark datasets. All experiments are run under the same protocol described in Section~\ref{sec:empirical}. GCN-TAS outperforms both baselines on 12 of 16 datasets, with the largest margins on heterophilic and moderate-degree homophilic graphs, indicating that cascade-based rewiring is especially effective in regimes where mesoscopic neighborhood structures are misaligned. The four losses cluster in regimes our theory identifies as failure modes: low-degree homophilic (citeseer) and bottleneck-dominated (cycle, grid), where FoSR and SDRF are designed to operate. On \emph{community}, all three rewiring methods underperform the GCN baseline, while CR-Graphormer achieves the best overall result with 78.60\% test accuracy. Bold marks the best accuracy per row.

\begin{table}[!ht]
\centering
\small
\setlength{\tabcolsep}{8pt}
\renewcommand{\arraystretch}{1.15}
\caption{Test accuracy (mean, in \%) of GCN with cascade rewiring (GCN-TAS) compared with structural rewiring baselines GCN+FoSR \citep{karhadkar2022fosr} and GCN+SDRF \citep{topping2022understanding} across all 16 DGL benchmarks.}
\label{tab:rewiring}
\begin{tabular}{lccc}
\hline
\textbf{Dataset} & \textbf{GCN+FoSR} & \textbf{GCN+SDRF} & \textbf{GCN-TAS} \\
\hline
cornell    & 50.74 & 51.54 & \textbf{57.02} \\
texas      & 55.50 & 52.25 & \textbf{65.00} \\
wisconsin  & 57.63 & 58.78 & \textbf{71.56} \\
actor      & 32.11 & 31.73 & \textbf{34.62} \\
chameleon  & 43.66 & 44.79 & \textbf{58.75} \\
squirrel   & 40.03 & 38.98 & \textbf{42.38} \\
cora       & 86.10 & 85.84 & \textbf{87.29} \\
citeseer   & \textbf{76.35} & 71.91 & 75.46 \\
pubmed     & 74.86 & 71.62 & \textbf{88.15} \\
computer   & 83.78 & 80.11 & \textbf{90.87} \\
photo      & 90.21 & 90.37 & \textbf{94.17} \\
wiki       & 73.91 & 69.39 & \textbf{83.54} \\
shape      & 81.50 & 79.01 & \textbf{89.57} \\
community  & \textbf{51.09} & 50.34 & 50.11 \\
cycle      & \textbf{68.04} & 66.91 & 60.39 \\
grid       & 70.25 & \textbf{72.33} & 58.19 \\
\hline
\end{tabular}
\end{table}

\subsection{Comparison to Heterophily-Specific Architectures}\label{app:heterophily}

We run additional experiments on seven state-of-the-art architectures developed to address challenges in heterophily (ACMGCN \citep{luan2022revisiting}, CMGNN \citep{zhengunderstanding}, GGCN \citep{yan2022two}, GloGNN \citep{li2022finding}, H2GCN \citep{zhu2020beyond}, M2MGNN \citep{liang2024sign}, OrderedGNN \citep{song2023ordered}) and compare their performance against our methods. Comparing \textsc{Graph Cascades} against this set provides a stronger reference point than GNNs and GTs alone: a competitive result here would indicate that cascade rewiring captures heterophily-relevant structure without architecture-specific modifications.

% \begin{table}[h]
% \centering
% \caption{Comparison with heterophily-specific baselines.}
% \begin{tabular}{lccc}
% \toprule
% Dataset & Best Cascade & Heterophily Baseline & Winner \\
% \midrule
% PascalVOC-SP & 76.30\% (NAG-MAS) & 74.51\% (H2GCN) & Cascade \\
% COCO-SP      & 83.77\% (NAG-MAS) & 83.35\% (OrderedGNN) & Cascade \\
% Questions    & 93.69\% (NAG-MAS) & 93.69\% (M2MGNN) & Tie \\
% cornell      & 75.02\% (VCR-TAS) & 70.74\% (H2GCN) & Cascade \\
% actor        & 36.46\% (GPS-TAS) & 35.97\% (OrderedGNN) & Cascade \\
% squirrel     & 47.31\% (CR-TAS) & 47.25\% (GloGNN) & Cascade \\
% wisconsin    & 81.02\% (NAG-MAS/TAS) & 81.88\% (H2GCN) & Baseline \\
% texas        & 82.31\% (VCR-TAS) & 83.51\% (H2GCN) & Baseline \\
% chameleon    & 63.65\% (CR-TAS) & 65.51\% (GloGNN) & Baseline \\
% \bottomrule
% \end{tabular}
% \end{table}

\begin{table}[ht!]
\centering
\tiny
\setlength{\tabcolsep}{0.35pt}
\caption{Test accuracy (mean $\pm$ std, in \%) from heterophily-specific architectures across DGL datasets. Top-5 baseline accuracies are shaded, with darker shading indicating better performance.}\label{tab:heterophily}
\begin{tabular}{l c c c c c c c c c c c}
\hline
Method 
& actor & chameleon & citeseer & community & cora & cornell & cycle 
& grid & shape & texas & wisconsin \\
\hline
ACMGCN 
& \cellcolor{mycolor!18} 33.55$\pm$1.32 & \cellcolor{mycolor!36} 62.74$\pm$1.01 & \cellcolor{mycolor!54} 76.13$\pm$1.59 & 42.80$\pm$1.82 
& \cellcolor{mycolor!36} 85.49$\pm$0.75 & \cellcolor{mycolor!18} 60.85$\pm$4.41 & \cellcolor{mycolor!18} 57.81$\pm$1.43 
& 57.80$\pm$2.81 & \cellcolor{mycolor!36} 44.34$\pm$4.11 & \cellcolor{mycolor!18} 78.72$\pm$8.64 & \cellcolor{mycolor!18} 72.50$\pm$2.37 \\

CMGNN 
& \cellcolor{mycolor!54} 34.94$\pm$1.16 & \cellcolor{mycolor!18} 60.91$\pm$1.12 & \cellcolor{mycolor!36} 75.61$\pm$1.39 & \cellcolor{mycolor!72} 76.06$\pm$1.24 
& 84.67$\pm$1.59 & \cellcolor{mycolor!72} 73.62$\pm$6.31 & \cellcolor{mycolor!72} 88.40$\pm$16.36 
& \cellcolor{mycolor!90} 80.32$\pm$5.97 & \cellcolor{mycolor!90} 97.14$\pm$0.90 & \cellcolor{mycolor!72} 84.68$\pm$5.90 & \cellcolor{mycolor!90} 83.75$\pm$2.84 \\

GGCN 
& 26.76$\pm$2.37 & 34.53$\pm$6.29 & 72.70$\pm$1.10 & \cellcolor{mycolor!36} 65.89$\pm$1.67 
& 79.91$\pm$4.01 & 41.70$\pm$9.23 & \cellcolor{mycolor!90} 90.59$\pm$4.28 
& \cellcolor{mycolor!36} 65.95$\pm$13.40 & \cellcolor{mycolor!36} 44.34$\pm$4.11 & 44.26$\pm$21.67 & 36.56$\pm$13.60 \\

GloGNN 
& 30.58$\pm$1.15 & \cellcolor{mycolor!90} 65.51$\pm$2.52 & 75.15$\pm$1.74 & 53.94$\pm$2.32 
& 84.05$\pm$1.38 & \cellcolor{mycolor!36} 62.13$\pm$4.61 & \cellcolor{mycolor!54} 83.65$\pm$2.34 
& \cellcolor{mycolor!72} 72.04$\pm$1.42 & 20.57$\pm$24.07 & \cellcolor{mycolor!36} 80.85$\pm$6.89 & 69.69$\pm$6.31 \\

H2GCN 
& \cellcolor{mycolor!36} 34.68$\pm$0.98 & 53.61$\pm$1.99 & \cellcolor{mycolor!72} 76.25$\pm$1.75 & \cellcolor{mycolor!54} 66.23$\pm$1.74 
& \cellcolor{mycolor!54} 87.06$\pm$0.76 & \cellcolor{mycolor!54} 68.51$\pm$5.90 & 48.77$\pm$6.31 
& \cellcolor{mycolor!54} 66.21$\pm$7.68 & \cellcolor{mycolor!54} 76.80$\pm$4.77 & \cellcolor{mycolor!90} 85.11$\pm$7.52 & \cellcolor{mycolor!54} 79.69$\pm$2.47 \\

M2MGNN 
& 31.61$\pm$1.58 & \cellcolor{mycolor!72} 64.70$\pm$2.12 & 70.73$\pm$2.49 & \cellcolor{mycolor!18} 56.51$\pm$2.39 
& \cellcolor{mycolor!18} 85.26$\pm$1.22 & 51.49$\pm$4.85 & 55.16$\pm$6.71 
& 57.80$\pm$2.81 & 34.74$\pm$12.04 & 71.91$\pm$10.79 & 59.06$\pm$3.20 \\

OrderedGNN 
& \cellcolor{mycolor!72} 35.61$\pm$1.11 & 56.07$\pm$2.26 & \cellcolor{mycolor!90} 76.69$\pm$1.13 & 53.83$\pm$2.12 
& \cellcolor{mycolor!90} 87.47$\pm$0.97 & 57.45$\pm$5.42 & \cellcolor{mycolor!18} 57.81$\pm$1.43 
& 53.59$\pm$8.23 & 36.11$\pm$9.90 & 62.98$\pm$3.56 & \cellcolor{mycolor!36} 73.44$\pm$6.35 \\

\hline
Our Best
& \cellcolor{mycolor!90} 36.46$\pm$1.31 
& \cellcolor{mycolor!54} 63.65$\pm$2.82 
& \cellcolor{mycolor!18} 75.46$\pm$1.46 
& \cellcolor{mycolor!90} 78.60$\pm$3.50 
& \cellcolor{mycolor!72} 87.29$\pm$0.77 
& \cellcolor{mycolor!90} 75.02$\pm$4.68 
& \cellcolor{mycolor!36} 61.32$\pm$3.67 
& \cellcolor{mycolor!18} 62.25$\pm$2.21 
& \cellcolor{mycolor!72} 89.57$\pm$11.79 
& \cellcolor{mycolor!54} 82.31$\pm$6.59 
& \cellcolor{mycolor!72} 81.02$\pm$4.13 \\

% Best 
% & GPS-TAS 
% & CR-TAS 
% & GCN-TAS 
% & CR-TAS 
% & GCN-TAS 
% & VCR-TAS 
% & CR-TAS 
% & NAG-MAS 
% & GCN-TAS 
% & VCR-TAS 
% & NAG-MAS/TAS \\
\hline
\end{tabular}
\end{table}

Table \ref{tab:heterophily} shows that our best methods consistently rank among the top five, with performance gaps of up to 2.8\% compared to the best model on most datasets. Specifically, we achieve 1st place on \textit{actor}, \textit{community}, and \textit{cornell}; 2nd on \textit{cora} and \textit{wisconsin}; 3rd on \textit{chameleon} and \textit{texas}; and 5th on \textit{citeseer}. For \textit{shape}, \textit{cycle}, and \textit{grid}, our method ranks 2nd, 4th, and 5th, respectively, with performance gaps of 7.57\%, 29.27\%, and 18.07\% compared to the best-performing method. This is consistent with the observations in Section~\ref{sec:empirical}: although \textit{cycle} and \textit{grid} exhibit high homophily, their low average degree violates Assumption~\ref{as2}, making our approach less suitable.

\subsection{Ablation Study}\label{ablation}

In this section, we present a parameter analysis of \textsc{CR-Graphormer} and justify the use of cascade frequency-based edge weights $W^\star$ through additional classification results.

\subsubsection{Parameter Analysis for \textsc{CR-Graphormer}}

We perform a sensitivity analysis with respect to the main hyperparameters controlling the MAS and TAS cascades, using \textsc{CR-Graphormer} as the cascade model. In this experiment, we retain the same experimental settings as in previous sections (see \textbf{Experimental Setup} in Section~\ref{sec:empirical} and \textbf{Implementation Details} in Appendix~\ref{app:exp}), and vary one adjacency search parameter at a time, including the top-\(k\) parameter \(k\), walk length \(\ell\), number of starting neighbors \(r = \lvert R \rvert\), number of permutations \(P\), and the maximum threshold \(T\) with the threshold list \(L = \{1,2,\dots,T\}\). The resulting mean test accuracies are shown in Figures~\ref{fig:MASablation} and~\ref{fig:TASablation} on $10$ representative datasets
(heterophilic: \textit{chameleon, cornell, texas, wisconsin};
moderate-degree homophilic: \textit{community, shape};
low-degree homophilic: \textit{citeseer, cora, cycle, grid}).

\begin{figure}[ht!]
    \centering
    \includegraphics[width=0.49\linewidth]{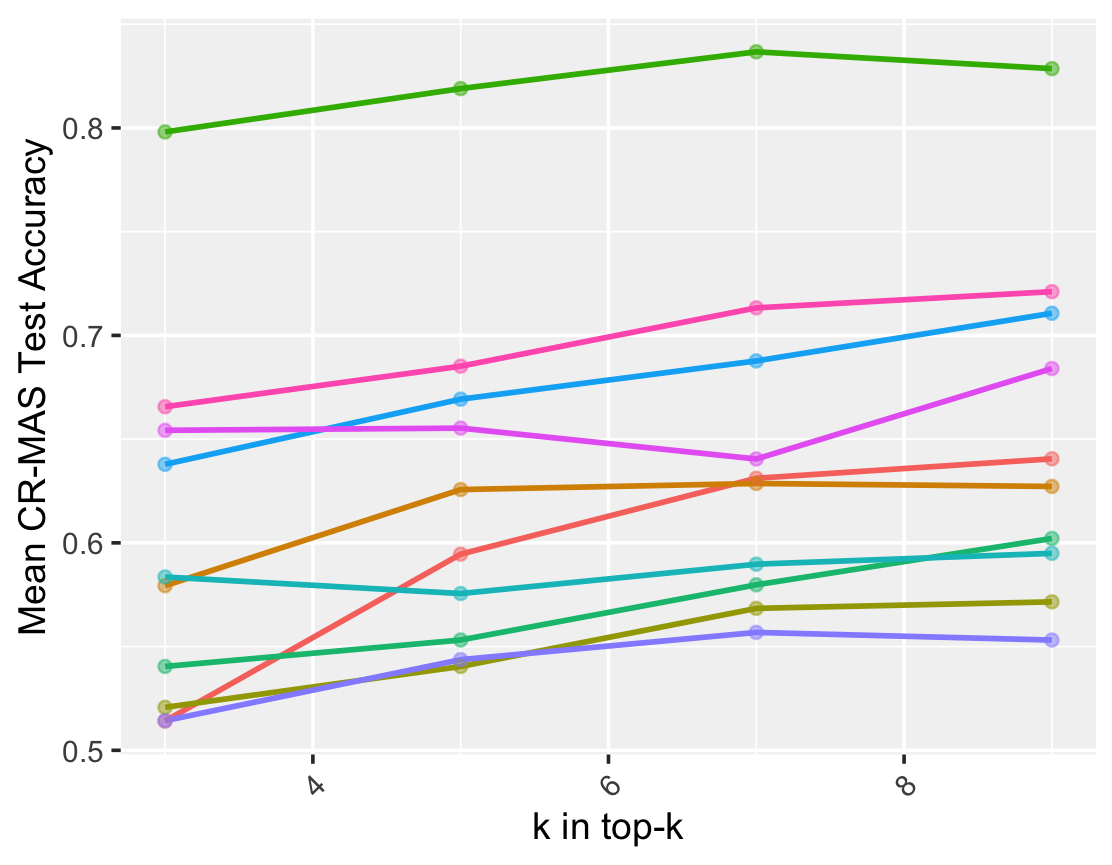}\hfill
    \includegraphics[width=0.49\linewidth]{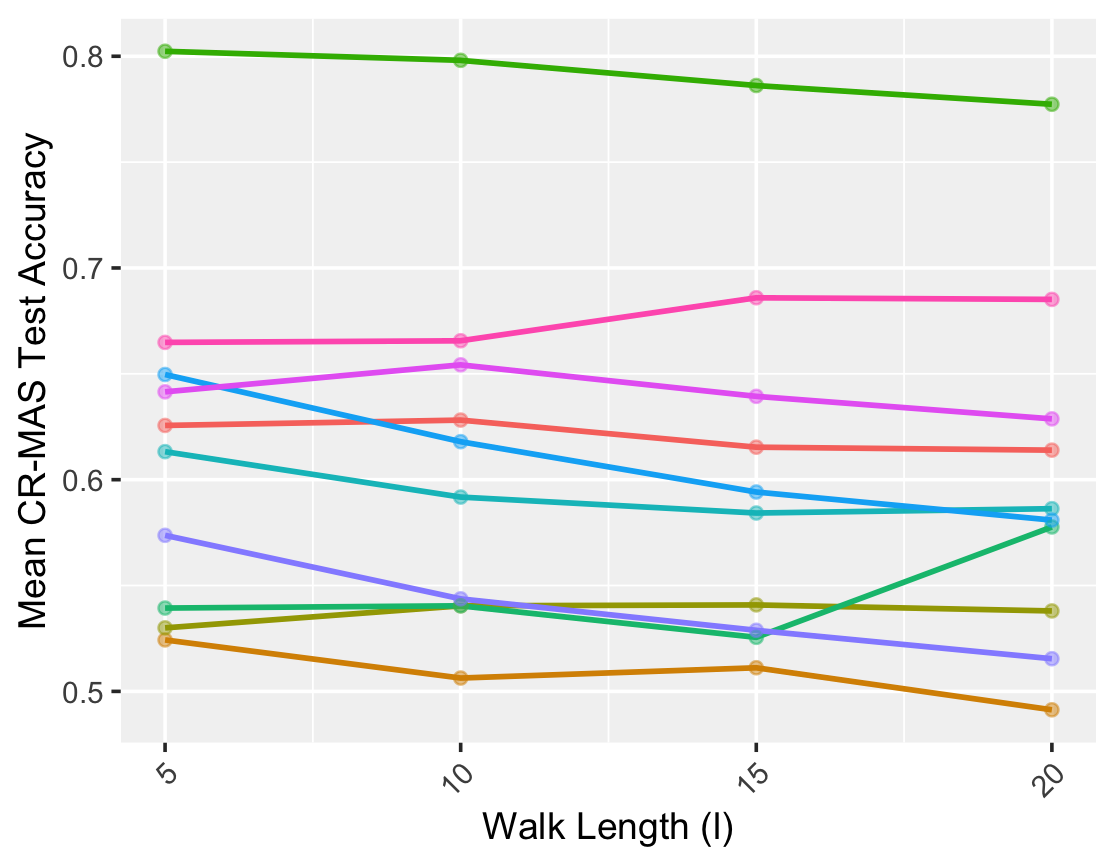}\\
    \includegraphics[width=0.49\linewidth]{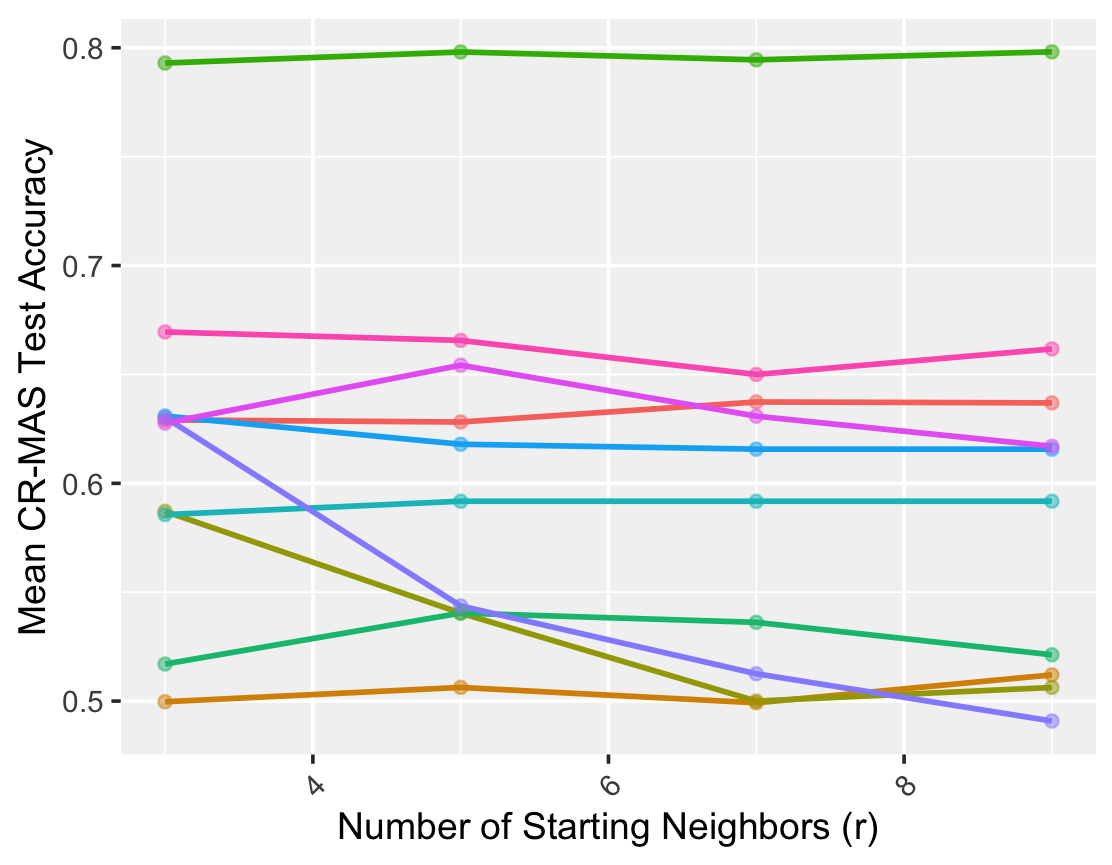}\hfill
    \includegraphics[width=0.49\linewidth]{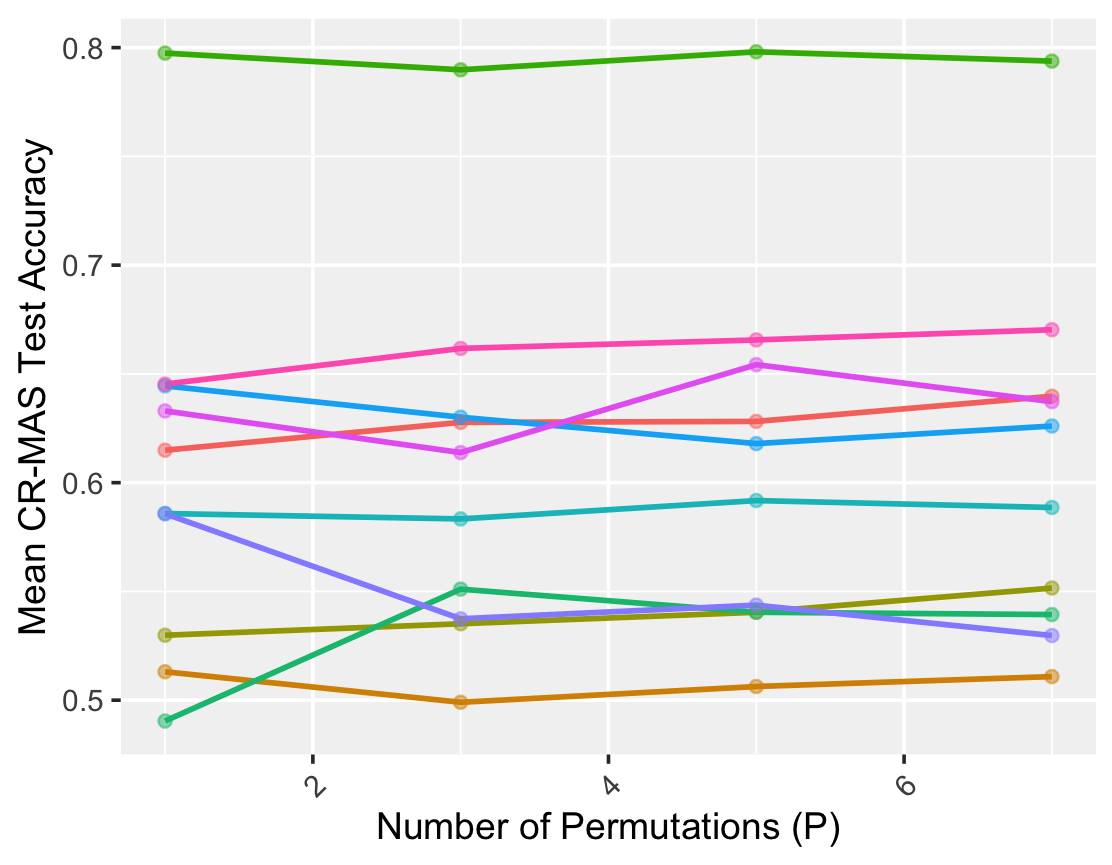}\\
    \vspace{0.35cm}
    \includegraphics[height=0.9cm]{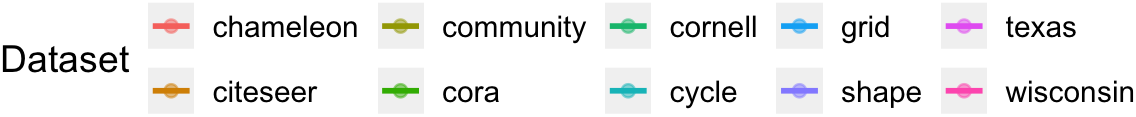}
    \vspace{0.35cm}
    \caption{Parameter Analysis for \textsc{CR-MAS}.}
    \label{fig:MASablation}
\end{figure}

\paragraph{CR-MAS (Figure~\ref{fig:MASablation}).}
\begin{itemize}
    \item \textbf{Increasing $k$ (auxiliary graph degree/token length) is generally beneficial.}
    Increasing $k$ ($3 \!\to\! 9$) improves mean accuracy on most datasets, often monotonically, with gains saturating around mid/large $k$.
    The trend is especially clear on harder heterophilic/low-degree cases (e.g., \textit{chameleon} rises from $\sim$0.52 to $\sim$0.64; \textit{grid} from $\sim$0.64 to $\sim$0.71), while other high-performing datasets (e.g., \textit{cora}) remain strong and relatively stable (around $\sim$$0.80$-$0.84$). 

    \item \textbf{Walk length $\ell$ exhibits diminishing returns (often negative) for MAS.}
    Longer cascades ($\ell\!=\!5\!\to\!20$) tend to \emph{reduce} accuracy for many datasets (notably \textit{citeseer}, \textit{grid}, and \textit{shape}), suggesting that MAS benefits most from a \emph{tighter} mesoscopic radius; extending too far likely introduces weaker/less relevant co-activations. This observation is consistent with Theorem \ref{thm:walklength}. A few datasets show mild robustness or slight recovery at $\ell=20$, but the dominant pattern is ``smaller $\ell$ is more stable'' for MAS.

    \item \textbf{Starting-neighborhood size $r$, when large, could degrade performance for MAS.}
    Varying $r$ ($3\!\to\!9$) is mostly mild, but some datasets degrade sharply with larger $r$ (e.g., \textit{shape} drops from $\sim$0.63 to $\sim$0.49; \textit{community} from $\sim$0.58 to $\sim$0.50). This indicates MAS is sensitive to noisy initialization when the initial active set is too wide. A safer choice is choosing $r \leq d_{avg}$, where $d_{avg}$ is the average degree in the graph.

    \item \textbf{Number of permutations $P$ is a second-order effect for MAS.}
    Changing $P$ ($1\!\to\!7$) yields relatively small shifts on most datasets, with only modest improvements at intermediate values (some datasets peak around $P\approx 3$--$5$). Overall, MAS appears fairly robust to $P$ once a small amount of resampling is used.
\end{itemize}

\paragraph{CR-TAS (Figure~\ref{fig:TASablation}).}
\begin{itemize}
    \item \textbf{Top-$k$ is non-monotonic for several datasets.}
    Unlike MAS, TAS shows clear non-monotonicity: some datasets peak at intermediate $k$ (notably \textit{community} and \textit{shape}, which peak at $k\approx 5$ and then drop when $k$ is increased further),
    while heterophilic \textit{chameleon} improves steadily with larger $k$ (about $\sim$0.50 $\to$ $\sim$0.62). This suggests TAS can over-include less useful neighbors if $k$ becomes too large, depending on the graph type. 

    \item \textbf{TAS is more tolerant of longer walk lengths.}
    Accuracy is typically stable or slightly increasing with $\ell$ (e.g., \textit{shape} increases from $\sim$0.86 to $\sim$0.89 as $\ell$ grows; \textit{chameleon} improves mainly from $\ell=5$ to $\ell=10$ and then plateaus).
    Compared to MAS, TAS does \emph{not} show the same widespread degradation as $\ell$ increases. This observation is consistent with thresholding preventing premature activation of weakly supported nodes. 

    \item \textbf{Starting-neighborhood size $r$ is mostly robust, but very large $r$ is generally not preferable.}
    Most datasets vary only slightly with $r$, but the largest $r$ setting causes noticeable drops for some (e.g., \textit{chameleon} and \textit{community}), implying that even under thresholding, an overly broad initialization can inject noise. A safer choice is to choose $r \leq d_{avg}$. 

    \item \textbf{Permutation count $P$ is more important under TAS than MAS.}
    Increasing $P$ yields pronounced gains for certain datasets---especially \textit{community} (roughly $\sim$0.62 $\to$ $\sim$0.84) and \textit{shape} (roughly $\sim$0.78 $\to$ $\sim$0.88), while others remain comparatively flat.
    This indicates TAS benefits more from additional resampling/permutation averaging, likely because its thresholded selection introduces more variance that is reduced with repeated runs. 

    \item \textbf{Maximum threshold $T$ (i.e., allowing larger reinforcement thresholds) yields greater gains.}
    Increasing $T$ produces the largest visible ablation effects: \textit{shape} rises dramatically (about $\sim$0.63 $\to$ $\sim$0.92) and \textit{community} increases strongly (about $\sim$0.63 $\to$ $\sim$0.84),
    whereas several other datasets change modestly and a few show slight saturation/decline at high $T$ (e.g., \textit{cora}).
    Overall, the figure suggests that enabling stronger multi-neighbor reinforcement (higher thresholds) can be crucial on some graphs, but is not uniformly beneficial across all datasets.
\end{itemize}

\begin{figure}[ht!]
    \centering
    \includegraphics[width=0.49\linewidth]{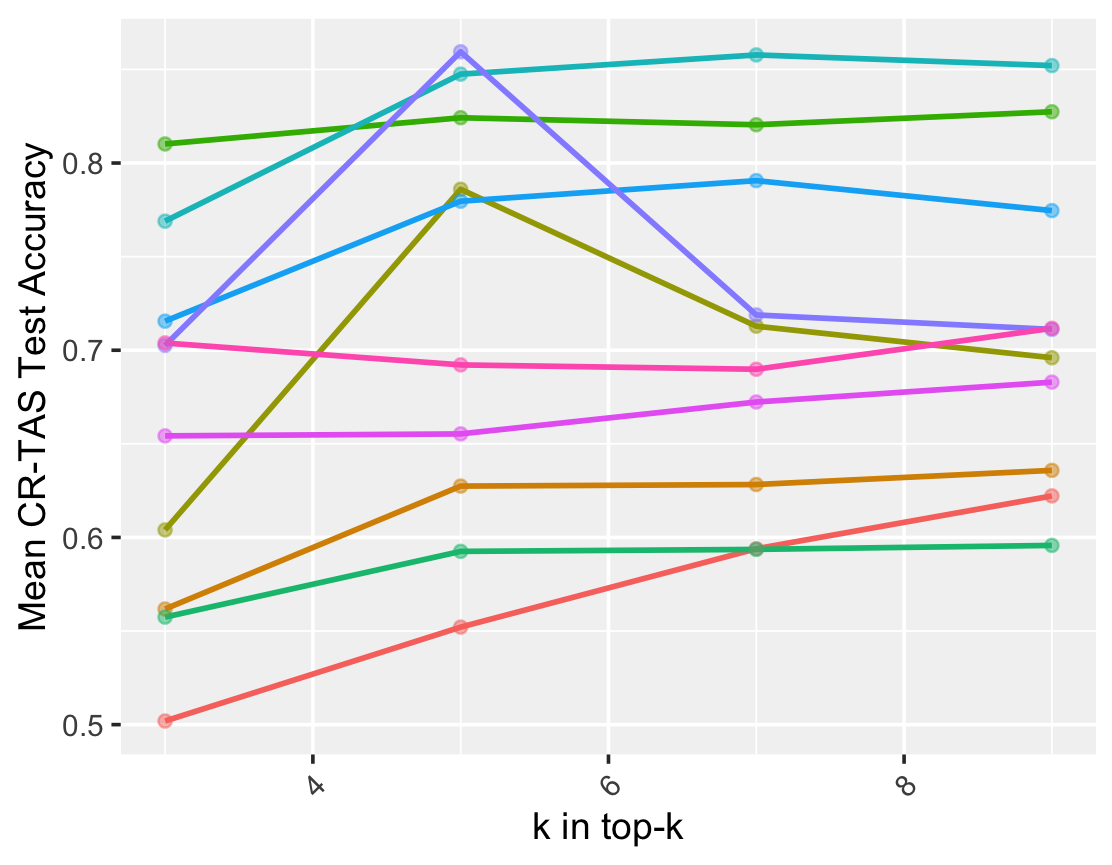}\hfill
    \includegraphics[width=0.49\linewidth]{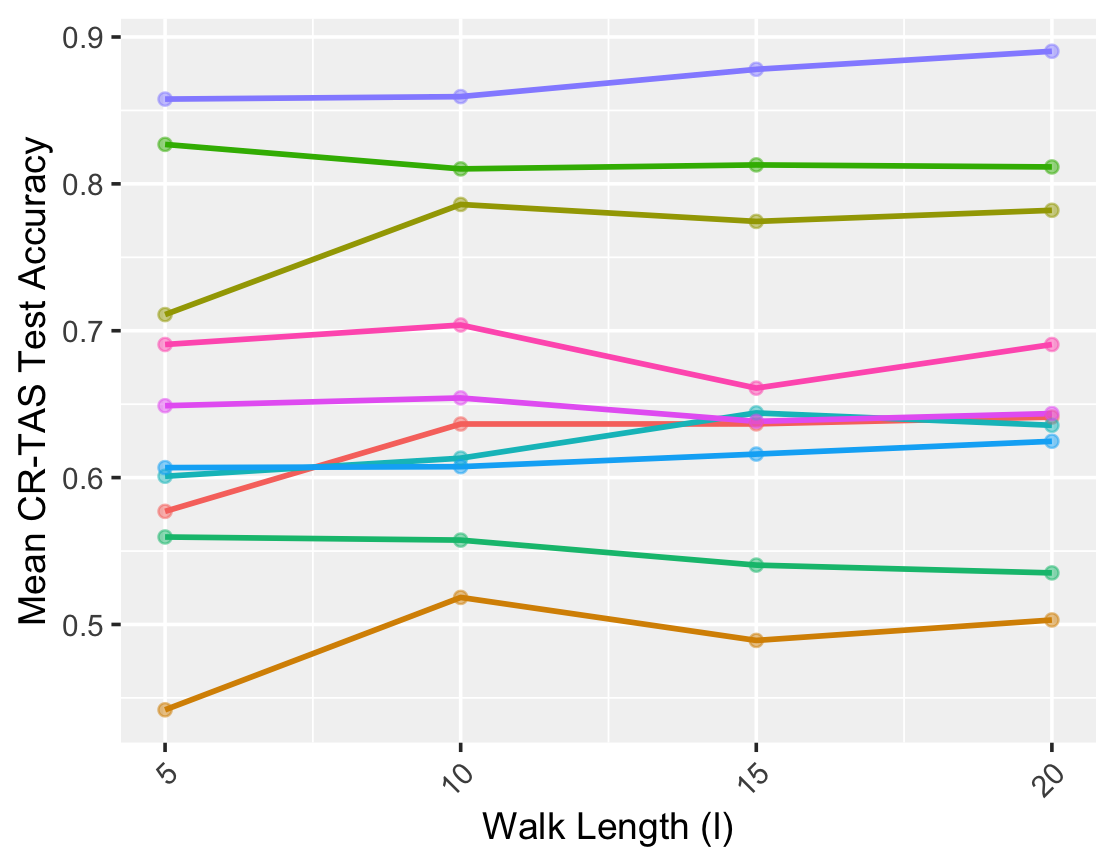}\\
    \includegraphics[width=0.49\linewidth]{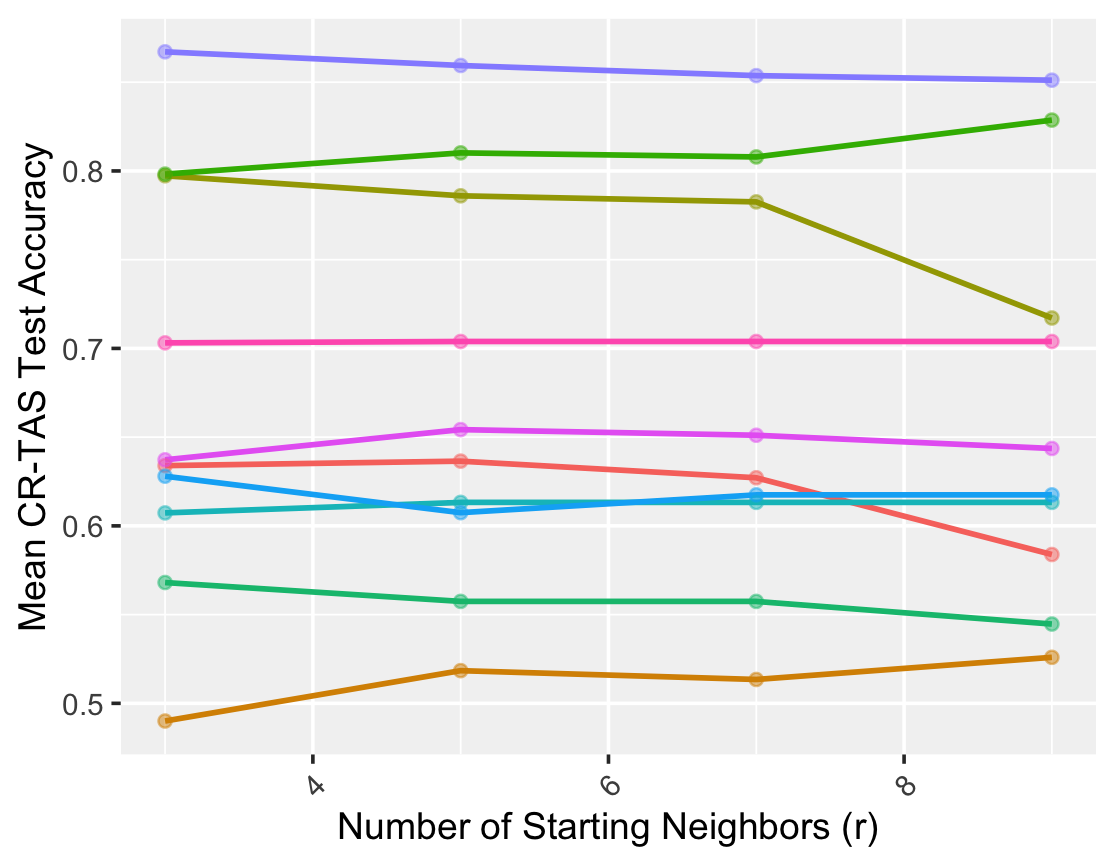}\hfill
    \includegraphics[width=0.49\linewidth]{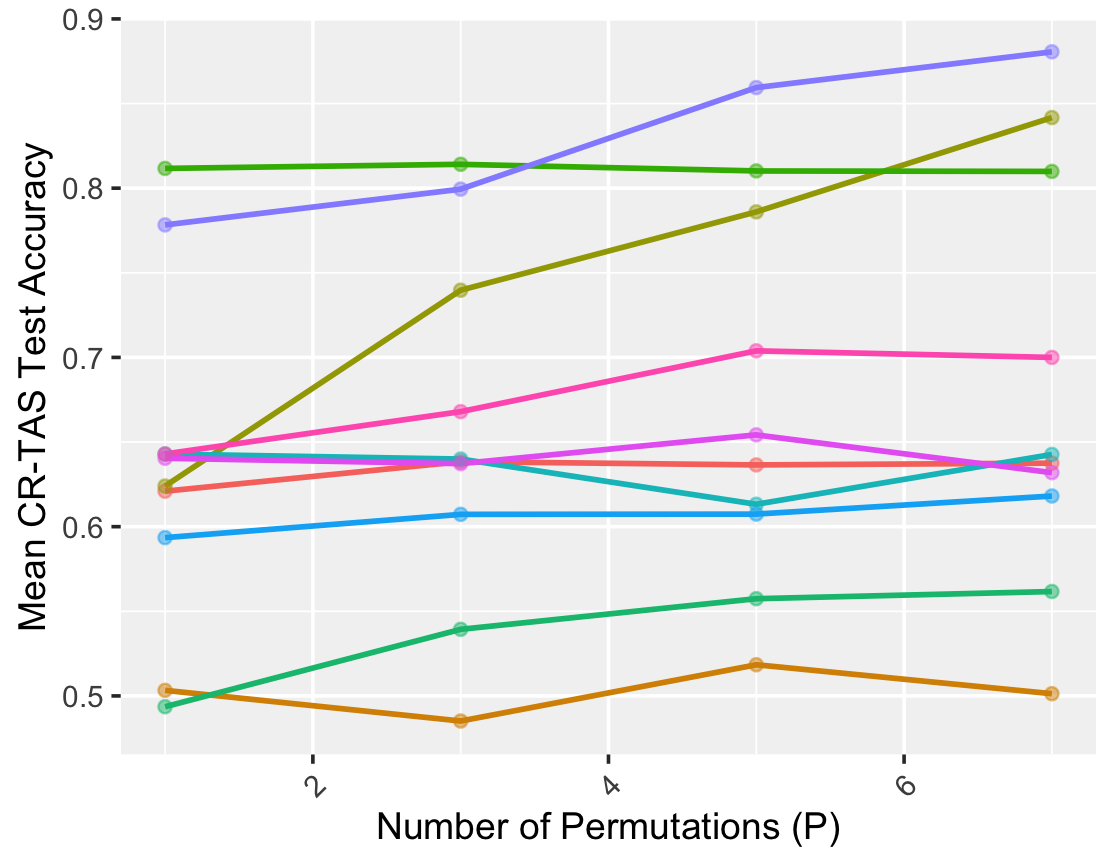}\\
    \includegraphics[width=0.49\linewidth]{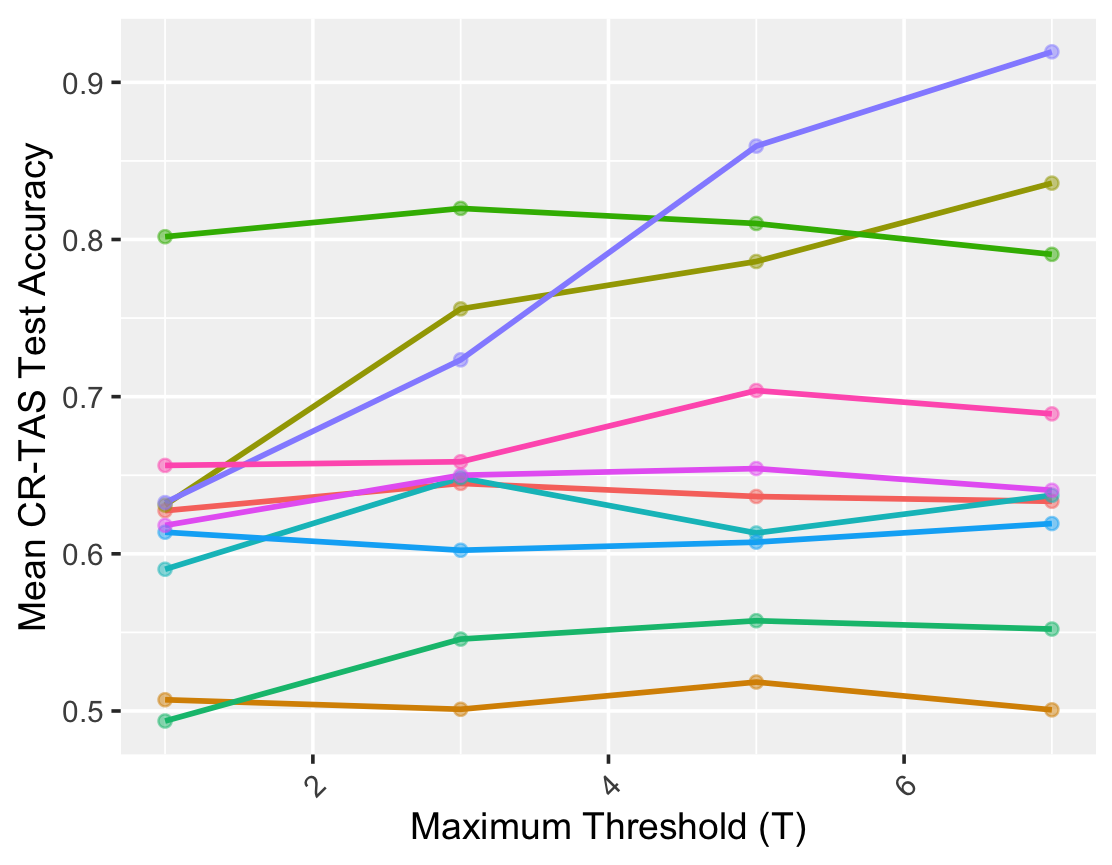}
    \includegraphics[width=0.15\linewidth]{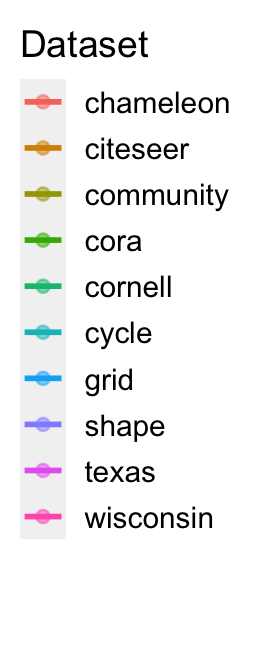}
    \caption{Parameter Analysis for \textsc{CR-TAS}.}
    \label{fig:TASablation}
\end{figure}

\subsubsection{The Use of $W^\star$ in Cascaded Graph Learning}\label{weights}
Throughout the main body, \emph{cascade models} refer to models that receive the auxiliary graph $G^\ast$ together with cascade-frequency edge weights $W^\ast$ as input. We also evaluated the topology-only variant in which only $G^\ast$ is provided. Tables~\ref{tab:summary_unweighted} and~\ref{tab:numerical3} show that the effect of $W^\ast$ is architecture-dependent. For sparse-attention GTs, the topology alone already yields broad gains: VCR improves on 13 of 16 graphs and NAG-MAS on up to 6, with the largest single gain reaching $+33.31\%$ for VCR. For GNN-based architectures (GCN, GraphGPS), the topology-only variant produces fewer improvements (4 and 7 of 16, respectively, with gains up to $+4.30\%$), reflecting that GNN aggregation is more sensitive to degree imbalance and benefits more from weighting. Adding $W^\ast$ on top of $G^\ast$ extends the gains for all backbones: at least 10 graphs improve for MAS cascades (up to $+22.79\%$ over the topology-only variant) and at least 11 for TAS cascades (up to $+35.89\%$). Performance declines, when they occur, are small: the only notable drops are for VCR-MAS and VCR-TAS on \emph{cycle} and \emph{grid} ($-7.55\%$ to $-14.53\%$ relative to VCR with topology-only input) and VCR-MAS on \emph{community} ($-6.94\%$) and \emph{squirrel} ($-3.39\%$); all other declines are at most $1.46\%$.

\begin{table}[ht!]
\centering
\small
\setlength{\tabcolsep}{2.5pt}
\caption{Number of the 16 DGL graphs whose test accuracy improves when using the auxiliary graph $G'$, with or without cascade frequency-based edge weights $W^\star$. Imp. stands for Improvement}
\begin{tabular}{lcccc}
\hline
Baseline & Cascade & Imp. from $G^{*}$ Input & Imp. from Cascade Model & Imp. of Cascade Model over $G^{*}$ Input \\
\hline
GCN & MAS & 4 (up to \blue{\textbf{+3.05\%}}) & 8 (up to \red{\textbf{+17.11\%}}) & 14 (up to \green{\textbf{+16.60\%}}) \\
 & TAS & 3 (up to \blue{\textbf{+3.23\%}}) & 10 (up to \red{\textbf{+20.16\%}}) & 14 (up to \green{\textbf{+30.66\%}})\\
\hline
GraphGPS & MAS & 7 (up to \blue{\textbf{+4.29\%}}) & 8 (up to \red{\textbf{+8.15\%}}) & 10 (up to \green{\textbf{+7.02\%}})\\
 & TAS & 4 (up to \blue{\textbf{+2.45\%}}) & 9 (up to \red{\textbf{+9.52\%}}) & 11 (up to \green{\textbf{+12.63\%}})\\
\hline
NAG & MAS & 6 (up to \blue{\textbf{+19.18\%}}) & 11 (up to \red{\textbf{+19.77\%}}) & 15 (up to \green{\textbf{+19.15\%}})\\
 & TAS & 5 (up to \blue{\textbf{+18.90\%}}) & 13 (up to \red{\textbf{+34.54\%}}) & 14 (up to \green{\textbf{+35.89\%}})\\
\hline
VCR & MAS & 12 (up to \blue{\textbf{+32.81\%}}) & 13 (up to \red{\textbf{+32.74\%}}) & 11 (up to \green{\textbf{+22.79\%}})\\
 & TAS & 13 (up to \blue{\textbf{+33.31\%}}) & 13 (up to \red{\textbf{+32.33\%}}) & 11 (up to \green{\textbf{+29.65\%}})\\
\hline
\end{tabular}
\label{tab:summary_unweighted}
\end{table}

\begin{table}[ht!]
\centering
\tiny
\setlength{\tabcolsep}{6.5pt}
\caption{Test accuracy (mean$\pm$std, in \%) across DGL datasets where non-baseline methods input auxiliary graphs into the baseline models without frequency-based edge weights. OOM indicates out-of-memory errors. Accuracy is encoded by shading, with darker cells indicating better performance.}
\begin{tabular}{lcccccccc}
\hline
Method & actor & chameleon & citeseer & community & computer & cora & cornell & cycle \\
\hline
GCN & \cellcolor{myred!60} 29.59$\pm$0.91 & 57.30$\pm$2.67 & \cellcolor{myred!60} 75.70$\pm$1.41 & \cellcolor{myred!60} 55.63$\pm$1.92 & \cellcolor{myred!60} 90.30$\pm$0.66 & \cellcolor{myred!60} 87.96$\pm$0.83 & \cellcolor{myred!60} 45.11$\pm$6.20 & \cellcolor{myred!60} 60.73$\pm$6.59 \\
GCN-MAS & 28.27$\pm$0.88 & \cellcolor{myred!30} 59.12$\pm$2.37 & 70.86$\pm$1.75 & \cellcolor{myred!30} 50.39$\pm$2.54 & 88.25$\pm$0.74 & 83.57$\pm$1.05 & \cellcolor{myred!30} 44.26$\pm$5.15 & 59.22$\pm$2.69 \\
GCN-TAS & \cellcolor{myred!30} 28.31$\pm$0.72 & \cellcolor{myred!60} 59.99$\pm$2.31 & \cellcolor{myred!30} 73.24$\pm$1.28 & 47.60$\pm$2.17 & \cellcolor{myred!30} 88.93$\pm$0.83 & \cellcolor{myred!30} 86.26$\pm$0.89 & 41.28$\pm$6.74 & \cellcolor{myred!30} 59.27$\pm$2.73 \\
GraphGPS & 35.19$\pm$1.31 & \cellcolor{mygreen!60} 60.78$\pm$1.35 & \cellcolor{mygreen!60} 75.23$\pm$1.63 & \cellcolor{mygreen!30} 44.00$\pm$2.30 & OOM & \cellcolor{mygreen!60} 86.65$\pm$1.04 & \cellcolor{mygreen!30} 61.38$\pm$5.49 & 55.11$\pm$10.10 \\
GPS-MAS & \cellcolor{mygreen!30} 35.24$\pm$1.32 & \cellcolor{mygreen!30} 59.73$\pm$1.71 & 72.63$\pm$1.48 & \cellcolor{mygreen!60} 47.94$\pm$3.11 & OOM & 85.08$\pm$1.48 & \cellcolor{mygreen!60} 61.70$\pm$6.14 & \cellcolor{mygreen!30} 56.26$\pm$3.37 \\
GPS-TAS & \cellcolor{mygreen!60} 35.25$\pm$1.03 & 59.55$\pm$1.55 & \cellcolor{mygreen!30} 73.83$\pm$1.42 & 42.33$\pm$1.72 & OOM & \cellcolor{mygreen!30} 86.16$\pm$1.00 & 60.43$\pm$4.95 & \cellcolor{mygreen!60} 57.56$\pm$5.66 \\
\hline
NAG & 31.68$\pm$1.26 & 40.38$\pm$1.81 & \cellcolor{mycyan!60} 74.72$\pm$1.67 & \cellcolor{mycyan!30} 46.17$\pm$2.49 & \cellcolor{mycyan!30} 88.66$\pm$0.73 & \cellcolor{mycyan!60} 86.75$\pm$0.82 & \cellcolor{mycyan!60} 58.94$\pm$7.11 & \cellcolor{mycyan!60} 77.24$\pm$9.34 \\
NAG-MAS & \cellcolor{mycyan!60} 33.08$\pm$1.41 & \cellcolor{mycyan!60} 59.56$\pm$1.27 & 70.82$\pm$1.66 & \cellcolor{mycyan!60} 49.66$\pm$2.15 & 88.46$\pm$0.78 & 83.69$\pm$0.87 & \cellcolor{mycyan!30} 56.49$\pm$7.26 & 57.99$\pm$2.53 \\
NAG-TAS & \cellcolor{mycyan!30} 32.88$\pm$1.14 & \cellcolor{mycyan!30} 59.28$\pm$1.19 & \cellcolor{mycyan!30} 72.67$\pm$1.59 & 42.90$\pm$1.52 & \cellcolor{mycyan!60} 88.87$\pm$0.74 & \cellcolor{mycyan!30} 86.05$\pm$1.13 & 44.89$\pm$6.36 & \cellcolor{mycyan!30} 59.82$\pm$2.34 \\
VCR & 26.41$\pm$1.20 & 27.28$\pm$2.51 & \cellcolor{mypurple!60} 71.15$\pm$1.36 & 28.13$\pm$4.60 & 82.45$\pm$5.30 & \cellcolor{mypurple!60} 85.41$\pm$1.33 & 44.89$\pm$6.09 & \cellcolor{mypurple!60} 61.64$\pm$4.12 \\
VCR-MAS & \cellcolor{mypurple!30} 31.60$\pm$1.73 & \cellcolor{mypurple!30} 60.09$\pm$2.48 & 63.45$\pm$1.77 & \cellcolor{mypurple!60} 54.16$\pm$2.39 & \cellcolor{mypurple!30} 89.32$\pm$0.69 & 80.76$\pm$0.92 & \cellcolor{mypurple!60} 51.06$\pm$6.33 & \cellcolor{mypurple!30} 59.25$\pm$2.94 \\
VCR-TAS & \cellcolor{mypurple!60} 32.43$\pm$1.70 & \cellcolor{mypurple!60} 60.59$\pm$2.67 & \cellcolor{mypurple!30} 66.03$\pm$1.72 & \cellcolor{mypurple!30} 49.37$\pm$1.84 & \cellcolor{mypurple!60} 89.56$\pm$0.71 & \cellcolor{mypurple!30} 82.56$\pm$0.96 & \cellcolor{mypurple!30} 48.83$\pm$5.41 & 59.20$\pm$2.73 \\
\hline
\end{tabular}

\setlength{\tabcolsep}{6.25pt}
\begin{tabular}{lcccccccc}
\hline
Method & grid & photo & pubmed & shape & squirrel & texas & wiki & wisconsin \\
\hline
GCN & \cellcolor{myred!60} 59.34$\pm$5.45 & \cellcolor{myred!60} 93.90$\pm$0.74 & \cellcolor{myred!60} 87.16$\pm$0.35 & \cellcolor{myred!60} 89.46$\pm$11.87 & 39.10$\pm$1.43 & 54.57$\pm$7.12 & \cellcolor{myred!60} 83.89$\pm$0.73 & \cellcolor{myred!30} 51.41$\pm$4.81 \\
GCN-MAS & \cellcolor{myred!30} 58.19$\pm$1.76 & 91.75$\pm$1.09 & 83.78$\pm$0.42 & 54.69$\pm$4.47 & \cellcolor{myred!30} 40.49$\pm$3.42 & \cellcolor{myred!30} 56.91$\pm$8.25 & 79.72$\pm$0.95 & \cellcolor{myred!60} 54.45$\pm$5.72 \\
GCN-TAS & \cellcolor{myred!30} 58.19$\pm$1.76 & \cellcolor{myred!30} 92.44$\pm$0.83 & \cellcolor{myred!30} 84.69$\pm$0.38 & \cellcolor{myred!30} 58.91$\pm$7.93 & \cellcolor{myred!60} 42.33$\pm$1.83 & \cellcolor{myred!60} 57.02$\pm$6.04 & \cellcolor{myred!30} 81.34$\pm$0.74 & 51.09$\pm$4.03 \\
GraphGPS & \cellcolor{mygreen!60} 57.41$\pm$10.00 & \cellcolor{mygreen!60} 94.64$\pm$0.63 & OOM & \cellcolor{mygreen!30} 46.74$\pm$14.93 & \cellcolor{mygreen!60} 40.04$\pm$1.27 & 69.68$\pm$6.58 & OOM & 72.66$\pm$4.95 \\
GPS-MAS & 53.48$\pm$7.51 & \cellcolor{mygreen!30} 94.46$\pm$0.63 & OOM & \cellcolor{mygreen!60} 48.14$\pm$4.45 & \cellcolor{mygreen!30} 39.24$\pm$1.50 & \cellcolor{mygreen!60} 72.45$\pm$7.02 & OOM & \cellcolor{mygreen!60} 76.95$\pm$3.58 \\
GPS-TAS & \cellcolor{mygreen!30} 56.33$\pm$5.59 & 94.06$\pm$0.97 & OOM & 43.63$\pm$8.60 & 38.10$\pm$1.27 & \cellcolor{mygreen!30} 69.79$\pm$7.19 & OOM & \cellcolor{mygreen!30} 74.14$\pm$5.23 \\
\hline
NAG & \cellcolor{mycyan!60} 70.99$\pm$1.99 & \cellcolor{mycyan!60} 93.94$\pm$0.67 & 87.26$\pm$0.52 & \cellcolor{mycyan!60} 49.89$\pm$6.51 & 25.80$\pm$1.26 & \cellcolor{mycyan!60} 63.30$\pm$6.43 & \cellcolor{mycyan!60} 82.40$\pm$0.98 & \cellcolor{mycyan!30} 61.25$\pm$4.32 \\
NAG-MAS & \cellcolor{mycyan!30} 58.22$\pm$1.72 & 93.45$\pm$0.69 & \cellcolor{mycyan!30} 88.08$\pm$0.41 & 45.77$\pm$4.53 & \cellcolor{mycyan!30} 39.11$\pm$1.98 & \cellcolor{mycyan!30} 62.13$\pm$7.06 & \cellcolor{mycyan!30} 81.39$\pm$0.92 & \cellcolor{mycyan!60} 67.73$\pm$3.80 \\
NAG-TAS & 57.91$\pm$1.99 & \cellcolor{mycyan!30} 93.68$\pm$0.48 & \cellcolor{mycyan!60} 88.15$\pm$0.40 & \cellcolor{mycyan!30} 48.54$\pm$4.64 & \cellcolor{mycyan!60} 39.72$\pm$1.62 & 61.60$\pm$6.60 & 81.29$\pm$0.67 & 58.28$\pm$5.53 \\
VCR & 58.32$\pm$1.96 & 88.45$\pm$7.85 & 83.25$\pm$0.44 & \cellcolor{mypurple!30} 44.66$\pm$6.15 & 22.48$\pm$2.00 & 54.68$\pm$7.28 & 78.46$\pm$2.69 & 51.33$\pm$7.46 \\
VCR-MAS & \cellcolor{mypurple!30} 58.58$\pm$1.62 & \cellcolor{mypurple!30} 92.99$\pm$0.80 & \cellcolor{mypurple!30} 87.18$\pm$0.63 & 43.11$\pm$2.97 & \cellcolor{mypurple!30} 42.76$\pm$2.58 & \cellcolor{mypurple!60} 62.45$\pm$8.32 & \cellcolor{mypurple!30} 81.94$\pm$0.76 & \cellcolor{mypurple!60} 67.11$\pm$5.97 \\
VCR-TAS & \cellcolor{mypurple!60} 59.24$\pm$3.01 & \cellcolor{mypurple!60} 93.39$\pm$0.71 & \cellcolor{mypurple!60} 87.35$\pm$0.42 & \cellcolor{mypurple!60} 47.29$\pm$5.49 & \cellcolor{mypurple!60} 43.17$\pm$3.30 & \cellcolor{mypurple!30} 61.70$\pm$7.78 & \cellcolor{mypurple!60} 82.86$\pm$0.70 & \cellcolor{mypurple!30} 65.86$\pm$5.83 \\
\hline
\end{tabular}
\label{tab:numerical3}
\end{table}

These observations can be explained by the fact that the frequency of node activations under MAS and TAS captures the mesoscopic structural information beyond the raw graph topology. Nodes that are repeatedly co-activated with a given seed are typically embedded in the same cohesive cluster or dense region, while nodes in sparser regions or across weak cuts activate less frequently. Using these frequencies as edge weights biases the GNN and GT message-passing mechanisms towards structurally relevant neighbors, amplifying signals from densely connected clusters and downweighting less informative connections. In contrast, simply inputting the auxiliary graph with unit edge weights provides only topological information without distinguishing the relative importance of different edges, which limits the improvement in classification. Consequently, the cascade frequency-weighted graphs enable models to exploit higher-order connectivity and cluster cohesion, resulting in larger and more consistent gains across datasets and architectures.

\subsection{Experiments on Long-Range and ``Corrected'' Datasets}\label{app:longrange}

We extend our experiments to long-range datasets where the prediction task requires capturing global information and structural dependencies between nodes that are separated by large graph distances \citep{dwivedi2022long, platonov2023critical}. We also repeat the experiments on the filtered version of \textit{chameleon} and \textit{squirrel}, denoted as \textit{chameleon-filtered} and \textit{squirrel-filtered}, as existing literature has suggested that they contain duplicate or near-duplicate nodes that share the same features and neighborhood connectivity, leading to significant data leakage and artificially inflated performance metrics on the original versions of these benchmarks \citep{platonov2023critical}. Table \ref{new-accuracy} shows the performance of cascade models in comparison with their direct baselines, while Table \ref{new-main-accuracy} presents the performance of our best model versus state-of-the-art GNN and GT benchmarks. Since macro-F1 and AUROC are more robust indicators of performance on these datasets, particularly for long-range tasks where class imbalance can skew standard accuracy and for heterophilic graphs with high label noise \citep{dwivedi2022long, platonov2023critical}, we report these metrics in Tables \ref{new-f1}--\ref{new-auroc} and \ref{new-main-f1}--\ref{new-main-auroc}. Due to severe class imbalance, AUROC scores could not be computed for \textit{COCO-SP} and \textit{PascalVOC-SP}.

Among the seven selected datasets, \textit{Amazon-ratings}, \textit{Questions}, \textit{Roman-empire}, \textit{chameleon-filtered}, and \textit{squirrel-filtered} are heterophilic graphs, while \textit{COCO-SP} and \textit{PascalVOC-SP} are homophilic graphs with moderate average degree. Experimental results in Tables \ref{tab:summary-new} and \ref{new-accuracy} indicate that cascades improve performance on most graphs: 6 for both GCN-Cascade and VCR-Cascade models, 5 for NAG-TAS, 4 for NAG-MAS, 3 for GPS-TAS, and 2 for GPS-MAS, with two additional graphs showing unchanged test accuracy for GPS-Cascade models.

\begin{table}[ht!]
\centering
\setlength{\tabcolsep}{8pt}
\renewcommand{\arraystretch}{1.05}
\caption{ \textsc{Graph Cascades} performance on other datasets discussed in Section \ref{app:longrange}. Colors denote the graph type with the largest gain: \red{\textbf{red}} (heterophilic) and \blue{\textbf{blue}} (moderate- to high-degree homophilic).}
\begin{tabular}{llc}
\hline
Baseline & Cascade Model & \# Graphs Improved \\
\hline
GCN & GCN-MAS & 6 (up to \blue{\textbf{+4.98\%}}) \\
 & GCN-TAS & 6 (up to \blue{\textbf{+5.17\%}}) \\
\hline
GraphGPS & GPS-MAS & 2 (up to \red{\textbf{+1.71\%}}) \\
 & GPS-TAS & 3 (up to \red{\textbf{+1.85\%}}) \\
\hline
NAG & NAG-MAS & 4 (up to \blue{\textbf{+4.69\%}}) \\
 & NAG-TAS & 5 (up to \blue{\textbf{+4.23\%}}) \\
\hline
VCR & VCR-MAS & 6 (up to \red{\textbf{+39.05\%}}) \\
 & VCR-TAS & 6 (up to \red{\textbf{+38.95\%}}) \\
\hline
\end{tabular}
\label{tab:summary-new}
\end{table}

Among the corrected/non-duplicates \textit{chameleon}/\textit{squirrel} datasets, \textit{chameleon-filtered} benefits from rewiring, as the experiments show no degradation in test accuracy, with the largest gains coming from VCR-MAS and VCR-TAS, at +10.32\% and +8.88\%, respectively. In addition, although MAS and TAS degrade performance on \textit{Roman-empire} for GNN-based architectures (GCN-MAS: -0.45\%; GCN-TAS: -2.24\%) and GraphGPS (GPS-MAS: -9.22\%; GPS-TAS: -2.10\%), GT models benefit substantially from cascade rewiring, as VCR-MAS and VCR-TAS show improvements of +39.05\% and +38.95\%, while NAG-MAS and NAG-TAS yield +1.00\% and +4.09\%, respectively. Apart from \textit{Roman-empire}, the performance decline is limited: -4.93\% from VCR-MAS on \textit{Amazon-ratings} and -3.12\% from GPS-TAS on \textit{squirrel-filtered}, while the degradation from all remaining cascade models across datasets is no more than -1.37\%.

% \begin{table}[ht!]
% \centering
% \small
% \setlength{\tabcolsep}{3pt}
% \caption{DGL benchmark dataset statistics.}
% \begin{tabular}{lcccccccc}
% \toprule
% Dataset & \# Nodes & \# Edges & \# Labels & \# Features & Average Degree & Label Homophily & Connected \\
% \midrule
% actor & 7600 & 29707 & 5 & 932 & 7.82 & \red{\textbf{0.22}} & Yes \\
% chameleon & 2277 & 31421 & 5 & 2325 & 27.60 & \red{\textbf{0.23}} & Yes \\
% citeseer & 3327 & 4676 & 6 & 3703 & 2.81 & 0.74 & No  \\
% community & 1400 & 3871 & 8 & 10 & 5.53 & 0.70 & Yes \\
% computer & 13752 & 245861 & 10 & 767 & 35.76 & 0.78 & No  \\
% cora & 2708 & 5278 & 7 & 1433 & 3.90 & 0.81 & No  \\
% cornell & 183 & 280 & 5 & 1703 & 3.06 & \red{\textbf{0.13}} & Yes \\
% cycle & 871 & 970 & 2 & 1 & 2.23 & 0.90 & Yes \\
% grid & 1231 & 1705 & 2 & 1 & 2.77 & 0.91 & Yes \\
% photo & 7650 & 119082 & 8 & 745 & 31.13 & 0.83 & No  \\
% pubmed & 19717 & 44327 & 3 & 500 & 4.50 & 0.80 & Yes \\
% shape & 700 & 1760 & 4 & 1 & 5.03 & 0.77 & Yes \\
% squirrel & 5201 & 198493 & 5 & 2089 & 76.33 & \red{\textbf{0.22}} & Yes \\
% texas & 183 & 295 & 5 & 1703 & 3.22 & \red{\textbf{0.11}} & Yes \\
% wiki & 11701 & 216123 & 10 & 300 & 36.94 & 0.66 & No  \\
% wisconsin & 251 & 466 & 5 & 1703 & 3.71 & \red{\textbf{0.21}} & Yes \\
% \bottomrule
% \end{tabular}
% \label{tab:dataset_stats}
% \end{table}

\begin{table}[ht!]
\centering
\tiny
\setlength{\tabcolsep}{5.45pt}
\caption{Test accuracy (mean $\pm$ std, in \%) across new benchmarks. OOM indicates out-of-memory errors. Accuracy is encoded by shading, with darker cells indicating better performance.}
\begin{tabular}{l c c c c c c c}
\hline
Method & Amazon-ratings & COCO-SP & PascalVOC-SP & Questions & Roman-empire & chameleon-filtered & squirrel-filtered \\
\hline
GCN & 51.08$\pm$1.59 & 81.60$\pm$1.27 & 67.98$\pm$9.07 & 93.29$\pm$0.30 & \cellcolor{myred!60} 36.81$\pm$0.55 & 40.00$\pm$2.42 & 33.57$\pm$1.41 \\
GCN-MAS & \cellcolor{myred!60} 53.54$\pm$1.59 & \cellcolor{myred!60} 81.70$\pm$1.40 & \cellcolor{myred!30} 72.96$\pm$2.94 & \cellcolor{myred!60} 93.50$\pm$0.36 & \cellcolor{myred!30} 36.36$\pm$1.01 & \cellcolor{myred!30} 41.79$\pm$2.88 & \cellcolor{myred!60} 34.11$\pm$1.40 \\
GCN-TAS & \cellcolor{myred!30} 52.74$\pm$1.09 & \cellcolor{myred!60} 81.70$\pm$1.40 & \cellcolor{myred!60} 73.15$\pm$1.80 & \cellcolor{myred!60} 93.50$\pm$0.36 & 34.57$\pm$1.22 & \cellcolor{myred!60} 42.15$\pm$1.31 & \cellcolor{myred!30} 33.90$\pm$1.69 \\
\hline
GraphGPS & 43.58$\pm$2.68 & \cellcolor{mygreen!60} 81.70$\pm$1.40 & \cellcolor{mygreen!60} 70.75$\pm$0.92 & 93.49$\pm$0.40 & \cellcolor{mygreen!60} 66.81$\pm$0.28 & 43.14$\pm$3.93 & \cellcolor{mygreen!60} 38.49$\pm$1.02 \\
GPS-MAS & \cellcolor{mygreen!30} 45.29$\pm$3.28 & \cellcolor{mygreen!60} 81.70$\pm$1.40 & \cellcolor{mygreen!60} 70.75$\pm$0.92 & \cellcolor{mygreen!30} 93.49$\pm$0.37 & 57.59$\pm$0.75 & \cellcolor{mygreen!30} 43.68$\pm$4.23 & \cellcolor{mygreen!30} 37.27$\pm$0.83 \\
GPS-TAS & \cellcolor{mygreen!60} 45.43$\pm$1.34 & \cellcolor{mygreen!60} 81.70$\pm$1.40 & \cellcolor{mygreen!60} 70.75$\pm$0.92 & \cellcolor{mygreen!60} 93.50$\pm$0.36 & \cellcolor{mygreen!30} 64.71$\pm$1.71 & \cellcolor{mygreen!60} 44.13$\pm$4.56 & 35.37$\pm$1.41 \\
\hline
NAG & \cellcolor{mycyan!60} 47.69$\pm$2.63 & 81.49$\pm$1.76 & 72.28$\pm$1.37 & \cellcolor{mycyan!60} 93.70$\pm$0.36 & 58.08$\pm$1.59 & 38.83$\pm$1.57 & \cellcolor{mycyan!30} 33.43$\pm$0.60 \\
NAG-MAS & 46.32$\pm$2.08 & \cellcolor{mycyan!60} 82.72$\pm$2.00 & \cellcolor{mycyan!60} 76.97$\pm$5.68 & \cellcolor{mycyan!30} 93.56$\pm$0.39 & \cellcolor{mycyan!30} 59.08$\pm$1.72 & \cellcolor{mycyan!30} 41.26$\pm$2.31 & 33.21$\pm$0.73 \\
NAG-TAS & \cellcolor{mycyan!30} 46.69$\pm$0.56 & \cellcolor{mycyan!30} 82.39$\pm$1.99 & \cellcolor{mycyan!30} 76.51$\pm$2.39 & 93.46$\pm$0.40 & \cellcolor{mycyan!60} 62.17$\pm$1.77 & \cellcolor{mycyan!60} 41.61$\pm$3.06 & \cellcolor{mycyan!60} 34.15$\pm$1.77 \\
\hline
VCR & \cellcolor{mypurple!60} 51.74$\pm$0.77 & 79.85$\pm$2.01 & 73.05$\pm$2.69 & 93.50$\pm$0.36 & 18.65$\pm$0.98 & 33.18$\pm$3.55 & 31.71$\pm$1.70 \\
VCR-MAS & 46.81$\pm$1.40 & \cellcolor{mypurple!30} 82.34$\pm$1.54 & \cellcolor{mypurple!30} 74.39$\pm$4.50 & \cellcolor{mypurple!60} 93.54$\pm$0.37 & \cellcolor{mypurple!60} 57.70$\pm$1.12 & \cellcolor{mypurple!60} 43.50$\pm$2.61 & \cellcolor{mypurple!60} 36.66$\pm$1.39 \\
VCR-TAS & \cellcolor{mypurple!30} 51.35$\pm$1.01 & \cellcolor{mypurple!60} 82.77$\pm$1.03 & \cellcolor{mypurple!60} 74.88$\pm$6.00 & \cellcolor{mypurple!30} 93.52$\pm$0.41 & \cellcolor{mypurple!30} 57.60$\pm$2.19 & \cellcolor{mypurple!30} 42.06$\pm$2.74 & \cellcolor{mypurple!30} 35.44$\pm$1.52 \\
\hline
CR-MAS & 46.30$\pm$1.63 & \cellcolor{myblack!45} 81.64$\pm$1.43 & 73.46$\pm$1.67 & 93.51$\pm$0.43 & 55.19$\pm$0.97 & 42.06$\pm$3.13 & \cellcolor{myblack!45} 37.56$\pm$1.69 \\
CR-TAS & \cellcolor{myblack!45} 48.07$\pm$1.36 & 81.64$\pm$1.70 & \cellcolor{myblack!45} 74.97$\pm$1.60 & \cellcolor{myblack!45} 93.76$\pm$0.38 & \cellcolor{myblack!45} 59.82$\pm$2.09 & \cellcolor{myblack!45} 42.42$\pm$3.64 & 37.16$\pm$1.70 \\
\hline
\end{tabular}
\label{new-accuracy}
\end{table}

\begin{table}[ht!]
\centering
\tiny
\begin{minipage}[t]{0.53 \textwidth}
\setlength{\tabcolsep}{2pt}
\caption{Mean macro-F1 score across long-range datasets.}
\begin{tabular}{l c c c c c}
\hline
Method & Amazon-ratings & COCO-SP & PascalVOC-SP & Questions & Roman-empire \\
\hline
GCN & 0.387 & 0.063 & \cellcolor{myred!30} 0.138 & 0.486 & \cellcolor{myred!60} 0.290 \\
GCN-MAS & \cellcolor{myred!60} 0.418 & \cellcolor{myred!60} 0.070 & \cellcolor{myred!60} 0.152 & \cellcolor{myred!60} 0.517 & 0.257 \\
GCN-TAS & \cellcolor{myred!30} 0.402 & \cellcolor{myred!30} 0.068 & 0.130 & \cellcolor{myred!30} 0.489 & \cellcolor{myred!30} 0.264 \\
\hline
GraphGPS & \cellcolor{mygreen!30} 0.292 & \cellcolor{mygreen!60} 0.063 & \cellcolor{mygreen!60} 0.092 & 0.492 & \cellcolor{mygreen!60} 0.583 \\
GPS-MAS & \cellcolor{mygreen!60} 0.325 & \cellcolor{mygreen!60} 0.063 & \cellcolor{mygreen!60} 0.092 & \cellcolor{mygreen!30} 0.506 & 0.464 \\
GPS-TAS & 0.284 & \cellcolor{mygreen!60} 0.063 & \cellcolor{mygreen!60} 0.092 & \cellcolor{mygreen!60} 0.510 & \cellcolor{mygreen!30} 0.557 \\
\hline
NAG & \cellcolor{mycyan!60} 0.356 & 0.074 & 0.138 & \cellcolor{mycyan!60} 0.533 & \cellcolor{mycyan!30} 0.444 \\
NAG-MAS & \cellcolor{mycyan!30} 0.335 & \cellcolor{mycyan!60} 0.137 & \cellcolor{mycyan!30} 0.240 & 0.528 & 0.439 \\
NAG-TAS & 0.332 & \cellcolor{mycyan!30} 0.131 & \cellcolor{mycyan!60} 0.254 & \cellcolor{mycyan!30} 0.532 & \cellcolor{mycyan!60} 0.451 \\
\hline
VCR & \cellcolor{mypurple!60} 0.388 & 0.102 & 0.196 & 0.483 & 0.108 \\
VCR-MAS & 0.315 & \cellcolor{mypurple!30} 0.124 & \cellcolor{mypurple!30} 0.222 & \cellcolor{mypurple!60} 0.492 & \cellcolor{mypurple!30} 0.442 \\
VCR-TAS & \cellcolor{mypurple!30} 0.385 & \cellcolor{mypurple!60} 0.144 & \cellcolor{mypurple!60} 0.254 & \cellcolor{mypurple!30} 0.491 & \cellcolor{mypurple!60} 0.445 \\
\hline
CR-MAS & \cellcolor{myblack!45} 0.336 & \cellcolor{myblack!45} 0.070 & \cellcolor{myblack!45} 0.172 & 0.505 & 0.401 \\
CR-TAS & 0.318 & 0.069 & 0.171 & \cellcolor{myblack!45} 0.543 & \cellcolor{myblack!45} 0.463 \\
\hline
\end{tabular}
\label{new-f1}
\end{minipage}
\hfill
\begin{minipage}[t]{0.42\textwidth}
\setlength{\tabcolsep}{3.5pt}
\caption{Mean AUROC score across long-range datasets.}
\begin{tabular}{l c c c}
\hline
Method & Amazon-ratings & Questions & Roman-empire \\
\hline
GCN & 0.744 & 0.540 & \cellcolor{myred!60} 0.855 \\
GCN-MAS & \cellcolor{myred!60} 0.771 & \cellcolor{myred!60} 0.654 & 0.798 \\
GCN-TAS & \cellcolor{myred!30} 0.756 & \cellcolor{myred!30} 0.640 & \cellcolor{myred!30} 0.843 \\
\hline
GPS & 0.689 & \cellcolor{mygreen!60} 0.681 & \cellcolor{mygreen!60} 0.949 \\
GPS-MAS & \cellcolor{mygreen!60} 0.710 & \cellcolor{mygreen!30} 0.661 & 0.910 \\
GPS-TAS & \cellcolor{mygreen!30} 0.694 & 0.657 & \cellcolor{mygreen!30} 0.941 \\
\hline
NAG & \cellcolor{mycyan!30} 0.700 & \cellcolor{mycyan!30} 0.702 & \cellcolor{mycyan!30} 0.912 \\
NAG-MAS & 0.699 & \cellcolor{mycyan!60} 0.713 & 0.910 \\
NAG-TAS & \cellcolor{mycyan!60} 0.705 & 0.691 & \cellcolor{mycyan!60} 0.919 \\
\hline
VCR & \cellcolor{mypurple!30} 0.731 & 0.597 & 0.722 \\
VCR-MAS & 0.711 & \cellcolor{mypurple!30} 0.627 & \cellcolor{mypurple!60} 0.906 \\
VCR-TAS & \cellcolor{mypurple!60} 0.740 & \cellcolor{mypurple!60} 0.642 & \cellcolor{mypurple!60} 0.906 \\
\hline
CR-MAS & \cellcolor{myblack!45} 0.709 & \cellcolor{myblack!45} 0.706 & 0.895 \\
CR-TAS & 0.704 & 0.697 & \cellcolor{myblack!45} 0.913 \\
\hline
\end{tabular}
\label{new-auroc}
\end{minipage}
\end{table}

The macro-F1 and AUROC scores in Tables \ref{new-f1} and \ref{new-auroc} largely follow the trends observed in test accuracy, while also highlighting the critical impact of cascade rewiring on class-imbalanced and long-range datasets. In several cases, gains in macro-F1 and AUROC are more pronounced than those in accuracy. For instance, while GCN-MAS and GCN-TAS show a +0.21\% improvement in accuracy on \textit{Questions}, their macro-F1 scores exhibit larger gains of +3.10\% and +0.30\%, respectively, while AUROC increases by +11.40\% and +10.00\%. This suggests that the cascade mechanism helps the model more effectively capture minority classes. Similarly, VCR-Cascades improve test accuracy on \textit{Questions} by up to +0.04\%, while AUROC increases by at least +3.00\%. The two moderate-degree homophilic datasets, COCO-SP and PascalVOC-SP, show a similar pattern: NAG-Cascades improve test accuracy by up to +1.23\% on COCO-SP and +4.69\% on PascalVOC-SP, while macro-F1 improves by at least +5.70\% on COCO-SP and at least +10.20\% on PascalVOC-SP. Likewise, VCR-TAS improves test accuracy on PascalVOC-SP by +1.83\% and macro-F1 by +5.80\%.

\begin{table}[ht!]
\centering
\tiny
\setlength{\tabcolsep}{5.05pt}
\caption{Test accuracy (mean $\pm$ std, in \%) across new benchmarks. OOM indicates out-of-memory errors. Accuracies for the top-5 best-performing baselines are encoded by shading, with darker cells indicating better performance.}
\begin{tabular}{l c c c c c c c}
\hline
Method & Amazon-ratings & COCO-SP & PascalVOC-SP & Questions & Roman-empire & chameleon-filtered & squirrel-filtered \\
\hline

GCN & 51.08$\pm$1.59 & 81.60$\pm$1.27 & 67.98$\pm$9.07 & 93.29$\pm$0.30 & 36.81$\pm$0.55 & 40.00$\pm$2.42 & 33.57$\pm$1.41 \\

GraphSAGE & \cellcolor{mycolor!72} 53.70$\pm$1.42 & 77.98$\pm$8.54 & 63.58$\pm$15.38 & 93.43$\pm$0.40 & 66.64$\pm$1.27 & 41.43$\pm$5.17 & \cellcolor{mycolor!18} 37.63$\pm$1.46 \\

GAT & 49.31$\pm$1.23 & 80.74$\pm$3.43 & 62.81$\pm$22.57 & 93.43$\pm$0.38 & 38.31$\pm$1.73 & 40.63$\pm$3.87 & 33.61$\pm$1.86 \\

\hline
GT & 48.50$\pm$4.01 & \cellcolor{mycolor!36} 81.70$\pm$1.40 & 70.73$\pm$0.92 & 93.48$\pm$0.40 & 65.89$\pm$1.88 & 40.54$\pm$1.67 & 35.73$\pm$1.49 \\

Gophormer & 43.79$\pm$1.10 & \cellcolor{mycolor!36} 81.70$\pm$1.40 & 70.30$\pm$1.45 & 93.49$\pm$0.36 & \cellcolor{mycolor!54} 67.35$\pm$2.14 & \cellcolor{mycolor!72} 44.04$\pm$2.75 & \cellcolor{mycolor!72} 40.39$\pm$2.30 \\

SAN & 50.28$\pm$2.33 & \cellcolor{mycolor!36} 81.70$\pm$1.40 & 70.76$\pm$0.96 & 93.27$\pm$0.47 & \cellcolor{mycolor!72} 68.04$\pm$1.02 & 39.91$\pm$2.99 & 34.18$\pm$2.90 \\

GraphGPS & 43.58$\pm$2.68 & \cellcolor{mycolor!36} 81.70$\pm$1.40 & 70.75$\pm$0.92 & 93.49$\pm$0.40 & 66.81$\pm$0.28 & 43.14$\pm$3.93 & \cellcolor{mycolor!36} 38.49$\pm$1.02 \\

NAG & 47.69$\pm$2.63 & 81.49$\pm$1.76 & 72.28$\pm$1.37 & \cellcolor{mycolor!54} 93.70$\pm$0.36 & 58.08$\pm$1.59 & 38.83$\pm$1.57 & 33.43$\pm$0.60 \\

Exphormer & 45.87$\pm$2.92 & 81.62$\pm$1.59 & 72.38$\pm$0.83 & 93.50$\pm$0.36 & 66.57$\pm$0.96 & \cellcolor{mycolor!18} 43.41$\pm$6.85 & \cellcolor{mycolor!90} 41.29$\pm$2.25 \\

VCR & \cellcolor{mycolor!18} 51.74$\pm$0.77 & 79.85$\pm$2.01 & \cellcolor{mycolor!54} 73.05$\pm$2.69 & 93.50$\pm$0.36 & 18.65$\pm$0.98 & 33.18$\pm$3.55 & 31.71$\pm$1.70 \\

\hline
ACMGCN & \cellcolor{mycolor!90} 55.65$\pm$1.76 & \cellcolor{mycolor!72} 82.52$\pm$1.02 & \cellcolor{mycolor!36} 72.52$\pm$0.92 & \cellcolor{mycolor!36} 93.65$\pm$0.39 & 66.89$\pm$1.35 & 37.76$\pm$2.09 & 36.30$\pm$1.44 \\

CMGNN & 51.55$\pm$1.62 & 81.65$\pm$1.38 & 61.67$\pm$11.50 & 93.52$\pm$0.43 & \cellcolor{mycolor!90} 68.75$\pm$1.08 & \cellcolor{mycolor!36} 43.68$\pm$3.62 & \cellcolor{mycolor!54} 40.25$\pm$2.42 \\

GGCN & 46.07$\pm$4.63 & 81.69$\pm$1.38 & 70.55$\pm$0.96 & 81.97$\pm$8.09 & 44.07$\pm$1.29 & 28.25$\pm$8.43 & 23.16$\pm$2.22 \\

GloGNN & 37.56$\pm$11.23 & 0.40$\pm$0.90 & 28.96$\pm$37.59 & 93.46$\pm$0.36 & 27.28$\pm$13.91 & \cellcolor{mycolor!54} 44.04$\pm$3.26 & 33.79$\pm$7.67 \\

H2GCN & 50.32$\pm$1.98 & 81.44$\pm$1.42 & \cellcolor{mycolor!18} 72.52$\pm$2.78 & 93.49$\pm$0.47 & \cellcolor{mycolor!18} 67.10$\pm$1.30 & 42.24$\pm$4.81 & 33.75$\pm$1.07 \\

M2MGNN & 51.56$\pm$1.03 & 81.67$\pm$1.39 & 67.62$\pm$8.04 & \cellcolor{mycolor!72} 93.76$\pm$0.49 & 49.85$\pm$1.07 & 40.63$\pm$2.51 & 32.57$\pm$1.45 \\

OrderedGNN & \cellcolor{mycolor!54} 53.62$\pm$2.90 & \cellcolor{mycolor!54} 82.29$\pm$2.41 & \cellcolor{mycolor!72} 75.88$\pm$8.66 & \cellcolor{mycolor!18} 93.61$\pm$0.38 & \cellcolor{mycolor!36} 67.14$\pm$1.16 & 40.36$\pm$3.40 & 34.61$\pm$2.17 \\

\hline
Our Best & \cellcolor{mycolor!36} 53.54$\pm$1.59 & \cellcolor{mycolor!90} 82.77$\pm$1.03 & \cellcolor{mycolor!90} 76.97$\pm$5.68 & \cellcolor{mycolor!90} 93.76$\pm$0.38 & 64.71$\pm$1.71 & \cellcolor{mycolor!90} 44.13$\pm$4.56 & 37.56$\pm$1.69 \\

Method & GCN-MAS & VCR-TAS & NAG-MAS & CR-TAS & GPS-TAS & GPS-TAS & CR-MAS \\

Rank & 4 & 1 & 1 & 1 & 11 & 1 & 6 \\

\hline
\end{tabular}
\label{new-main-accuracy}
\end{table}

\begin{table}[ht!]
\centering
\tiny
\begin{minipage}[t]{0.53\textwidth}
\setlength{\tabcolsep}{2pt}
\caption{Mean macro-F1 score across long-range datasets.}
\begin{tabular}{l c c c c c}
\hline
Method & Amazon-ratings & COCO-SP & PascalVOC-SP & Questions & Roman-empire \\
\hline

GCN
& 0.387 & 0.063 & 0.138 & 0.486 & 0.290 \\

GraphSAGE
& \cellcolor{mycolor!18} 0.402 & 0.062 & 0.090 & 0.490 & 0.570 \\

GAT
& 0.350 & 0.067 & 0.119 & 0.485 & 0.302 \\

\hline
GT
& 0.343 & 0.063 & 0.092 & 0.485 & 0.582 \\

Gophormer
& 0.258 & 0.063 & 0.101 & 0.499 & \cellcolor{mycolor!72} 0.596 \\

SAN
& 0.390 & 0.063 & 0.093 & \cellcolor{mycolor!18} 0.524 & \cellcolor{mycolor!54} 0.593 \\

GraphGPS
& 0.292 & 0.063 & 0.092 & 0.492 & \cellcolor{mycolor!18} 0.583 \\

NAG
& 0.356 & 0.074 & 0.138 & \cellcolor{mycolor!72} 0.533 & 0.444 \\

Exphormer
& 0.286 & 0.077 & 0.146 & 0.483 & 0.564 \\

VCR
& 0.388 & \cellcolor{mycolor!36} 0.102 & \cellcolor{mycolor!54} 0.196 & 0.483 & 0.108 \\

\hline
ACMGCN
& \cellcolor{mycolor!72} 0.429 & \cellcolor{mycolor!54} 0.126 & \cellcolor{mycolor!36} 0.177 & \cellcolor{mycolor!36} 0.526 & 0.574 \\

CMGNN
& 0.376 & 0.065 & 0.115 & 0.494 & \cellcolor{mycolor!90} 0.609 \\

GGCN
& 0.332 & 0.063 & 0.093 & 0.514 & 0.360 \\

GloGNN
& 0.143 & 0.001 & 0.039 & 0.485 & 0.084 \\

H2GCN
& 0.358 & \cellcolor{mycolor!18} 0.078 & \cellcolor{mycolor!18} 0.166 & 0.505 & \cellcolor{mycolor!54} 0.593 \\

M2MGNN
& \cellcolor{mycolor!54} 0.423 & 0.063 & 0.104 & \cellcolor{mycolor!54} 0.532 & 0.396 \\

OrderedGNN
& \cellcolor{mycolor!90} 0.445 & \cellcolor{mycolor!90} 0.144 & \cellcolor{mycolor!72} 0.241 & 0.516 & 0.571 \\

\hline
Our Best
& \cellcolor{mycolor!36} 0.418 & \cellcolor{mycolor!90} 0.144 & \cellcolor{mycolor!90} 0.254 & \cellcolor{mycolor!90} 0.543 & 0.557 \\

\hline
\end{tabular}
\label{new-main-f1}
\end{minipage}
\hfill
\begin{minipage}[t]{0.42\textwidth}
\setlength{\tabcolsep}{3.5pt}
\caption{Mean AUROC score across long-range datasets.}
\begin{tabular}{l c c c}
\hline
Method & Amazon-ratings & Questions & Roman-empire \\
\hline

GCN & 0.744 & 0.540 & 0.855 \\

GraphSAGE & \cellcolor{mycolor!18} 0.748 & 0.654 & 0.941 \\

GAT & 0.730 & 0.671 & 0.852 \\

\hline
GT & 0.716 & \cellcolor{mycolor!18} 0.687 & 0.940 \\

Gophormer & 0.678 & \cellcolor{mycolor!36} 0.695 & \cellcolor{mycolor!90} 0.953 \\

SAN & 0.739 & 0.663 & \cellcolor{mycolor!72} 0.951 \\

GPS & 0.689 & 0.681 & \cellcolor{mycolor!54} 0.949 \\

NAG & 0.700 & \cellcolor{mycolor!54} 0.702 & 0.912 \\

Exphormer & 0.700 & \cellcolor{mycolor!72} 0.703 & \cellcolor{mycolor!36} 0.947 \\

VCR & 0.731 & 0.597 & 0.722 \\

\hline
ACMGCN & \cellcolor{mycolor!36} 0.759 & 0.642 & 0.936 \\

CMGNN & 0.746 & 0.656 & \cellcolor{mycolor!36} 0.947 \\

GGCN & 0.682 & 0.494 & 0.849 \\

GloGNN & 0.591 & 0.664 & 0.698 \\

H2GCN & 0.733 & 0.626 & 0.943 \\

M2MGNN & \cellcolor{mycolor!54} 0.760 & 0.684 & 0.877 \\

OrderedGNN & \cellcolor{mycolor!90} 0.772 & 0.626 & 0.941 \\

\hline
Our Best & \cellcolor{mycolor!72} 0.771 & \cellcolor{mycolor!90} 0.713 & 0.941 \\

\hline
\end{tabular}
\label{new-main-auroc}
\end{minipage}
\end{table}

As shown in Table \ref{new-main-accuracy}, our method achieves the best performance on four datasets (\textit{COCO-SP}, \textit{PascalVOC-SP}, \textit{Questions}, and \textit{chameleon-filtered}) compared to 17 GNN and GT architectures. On \textit{Amazon-ratings}, \textit{squirrel-filtered}, and \textit{Roman-empire}, our method ranks 4th, 6th, and 11th, respectively, with moderate gaps of 2.11\%, 3.73\%, and 4.04\% from the top-performing methods. Overall, our approach ranks within the top five on 5 out of 7 datasets. This is followed by OrderedGNN, which also achieves top-five performance on 5 out of 7 datasets but with slightly weaker results, and then ACMGCN and Gophormer, each achieving top-five performance on 4 datasets. The macro-F1 and AUROC scores reported in Tables \ref{new-main-f1} and \ref{new-main-auroc} again show consistent results with test accuracy in Table \ref{new-main-accuracy}.

\begin{remark}[Scope on Long-Range Benchmarks]
    Long-range graph benchmarks~\citep{dwivedi2022long, platonov2023critical} target a regime where label-relevant signals must propagate across graph bottlenecks --- long chains, narrow cuts, or tree-like structures with limited multi-path connectivity. This is complementary to the regime targeted by \textsc{Graph Cascades}: as established in Theorem~\ref{thm:walklength}, threshold activation certifies $\kappa$ edge-disjoint short paths from the seed, and such redundancy is especially scarce in bottlenecked structures. We therefore expect cascade rewiring to be most effective on long-range datasets whose informative dependencies are carried by cohesive mesoscopic regions (e.g., \textit{PascalVOC-SP} and \textit{COCO-SP}), and less effective on chain-like or grid-like structures (e.g., \textit{Roman-empire}). The results above are consistent with this prediction: cascades improve macro-F1 substantially on \textit{PascalVOC-SP} and \textit{COCO-SP}, remain competitive on \textit{Amazon-ratings} and \textit{Questions}, and show inconsistent gains on \textit{Roman-empire}, where the method's structural assumptions are not satisfied. We report these results not as uniform improvements but as a calibration of the regimes where the mechanism applies. 
\end{remark}

\newpage

\end{document}